\def\eqref#1{equation~\ref{#1}}
\def\floor#1{\lfloor #1 \rfloor}
\def\1{\bm{1}}
\DeclareMathAlphabet{\mathsfit}{\encodingdefault}{\sfdefault}{m}{sl}
\SetMathAlphabet{\mathsfit}{bold}{\encodingdefault}{\sfdefault}{bx}{n}
\DeclareMathOperator*{\argmax}{arg\,max}
\title{The Convex Information Bottleneck Lagrangian}
\author{Borja Rodr\'iguez G\'alvez, Ragnar Thobaben and Mikael Skoglund \\
Department of Intelligent Systems\\
Division of Information Science and Engineering (ISE) \\
KTH Royal Institute of Technology\\
100 44 Stockholm, Sweden \\
\texttt{\{borjarg,ragnart,skoglund\}@kth.se} 
}
\newtheoremstyle{remboldstyle}
   {}{}{\itshape}{}{\bfseries}{.}{.5em}{{\thmname{#1 }}{\thmnumber{#2}}{\thmnote{ (#3)}}}
\theoremstyle{remboldstyle}
\newtheorem{Theorem}{Theorem}
\newtheorem{Corollary}{Corollary}
\newtheorem{Proposition}{Proposition}
\newtheorem{Definition}{Definition}
\newtheorem{Lemma}{Lemma}
\newtheorem{Remark}{Remark}
\DeclareMathOperator*{\argsup}{arg\,sup}
\DeclareMathOperator*{\arginf}{arg\,inf}
\begin{document}

\maketitle

\begin{abstract}
The information bottleneck (IB) problem tackles the issue of obtaining relevant compressed representations $T$ of some random variable $X$ for the task of predicting $Y$. It is defined as a constrained optimization problem which maximizes the information the representation has about the task, $I(T;Y)$, while ensuring that a certain level of compression $r$ is achieved (i.e., $ I(X;T) \leq r$). For practical reasons, the problem is usually solved by maximizing the IB Lagrangian (i.e., $\mathcal{L}_{\text{IB}}(T;\beta) = I(T;Y) - \beta I(X;T)$) for many values of $\beta \in [0,1]$. Then, the curve of maximal $I(T;Y)$ for a given $I(X;T)$ is drawn and a representation with the desired predictability and compression is selected. It is known when $Y$ is a deterministic function of $X$, the IB curve cannot be explored and {another Lagrangian has} been proposed to tackle this problem: the squared IB Lagrangian: $\mathcal{L}_{\text{sq-IB}}(T;\beta_{\text{sq}})=I(T;Y)-\beta_{\text{sq}}I(X;T)^2$. In this paper{,} we (i) present a general family of Lagrangians which allow for the exploration of the IB curve in all scenarios; (ii) provide the exact one-to-one mapping between the Lagrange multiplier and the desired compression rate $r$ for known IB curve shapes{; and (iii) show we can approximately obtain a specific compression level with the convex IB Lagrangian for both known and unknown IB curve shapes}. This eliminates the burden of solving the optimization problem for many values of the Lagrange multiplier. That is, we prove that we can solve the original constrained problem with a single optimization.
\end{abstract}

\section{Introduction}
\label{sec:introduction}

Let $X \in \mathcal{X}$ and $Y \in \mathcal{Y}$ be two statistically dependent random variables with joint distribution $p_{(X,Y)}$. The information bottleneck (IB) \citep{tishby2000information} investigates the problem of extracting the relevant information from $X$ for the task of predicting $Y$. 

For this purpose, the IB defines a bottleneck variable $T \in \mathcal{T}$ obeying the Markov chain $Y \leftrightarrow X \leftrightarrow T$ so that $T$ acts as a representation of $X$. \citet{tishby2000information} define the relevant information as the information the representation keeps from $Y$ after the compression of $X$ (i.e., $I(T;Y)$), provided a certain level of compression (i.e,  $I(X;T) \leq r$). Therefore, we select the representation which yields the value of the IB curve that best fits our requirements.

\begin{Definition}[IB functional]
\label{def:ib_functional}
Let $X$ and $Y$ be statistically dependent variables. Let $\Delta$ be the set of random variables $T$ obeying the Markov condition $Y \leftrightarrow X \leftrightarrow T$. Then the IB functional is 

\begin{equation}
	F_{\textnormal{IB,max}}(r) = \max_{T \in \Delta} \left\lbrace I(T;Y) \right\rbrace \text{ s.t. } I(X;T) \leq r, \ \forall r \in [0,\infty).
	\label{eq:ib_functional}
\end{equation}
\end{Definition}

\begin{Definition}[IB curve]
\label{def:ib_curve}
The IB curve is the set of points defined by the solutions of
$F_{\textnormal{IB,max}}(r)$ for varying values of $r \in [0,\infty)$.
\end{Definition} 

\begin{Definition}[Information plane] The plane is defined by the axes $I(T;Y)$ and $I(X;T)$.
\end{Definition}

{This method has been successfully applied to solve different problems from a variety of domains. For example: 
\begin{itemize}
\item \textbf{Supervised learning.} In supervised learning, we are presented with a set of $n$ pairs of input features and task outputs instances. We seek an approximation of the conditional probability distribution between the task outputs $Y$ and the input features $X$. In classification tasks (i.e., when $Y$ is a discrete random variable), the introduction of the variable $T$ learned through the information bottleneck principle maintained the performance of standard algorithms based on the cross-entropy loss while providing with more adversarial attacks robustness and invariance to nuisances \citep{alemi2016deep, peng2018variational, achille2018information}. Moreover, by the nature of its definition the information bottleneck appears to be closely related with a trade-off between accuracy on the observable set and generalization to new, unseen instances (see Section \ref{sec:ib_supervised_learning}).
\item \textbf{Clustering.} In clustering, we are presented with a set of $n$ pairs of instances of a random variable $X$ and their attributes of interest $Y$. We seek for groups of instances (or clusters $T$) such that the attributes of interest within the instances of each cluster are similar and the attributes of interest of the instances of different clusters are dissimilar. Therefore, the information bottleneck can be employed since it allows to aim for attribute representative clusters (maximizing the similarity between instances within the clusters) and enforce a certain compression of the random variable $X$ (ensuring a certain difference between instances of the different clusters). This has been successfully implemented, for instance, for gene expression analysis and word, document, stock pricing or movie rating clustering \citep{slonim2000document,slonim2000agglomerative,slonim2005information}.
\item	\textbf{Image Segmentation.} In image segmentation, we want to partition an image into segments such that each pixel in a region shares some attributes. If we divide the image into very small regions $X$ (e.g., each region is a pixel or a set of pixels defined by a grid), we can consider the problem of segmentation as that of clustering the regions $X$ based on the region attributes $Y$. Hence, we can use the information bottleneck so that we seek region clusters $T$ that are maximally informative about the attributes $Y$ (e.g., the intensity histogram bins) and maintain a level of compression of the original regions $X$ \citep{teahan2000text}.
\item \textbf{Quantization.} In quantization, we consider a random variable $X \in \mathcal{X}$ such that $\mathcal{X}$ is a large or continuous set. Our objective is to map $X$ into a variable $T \in \mathcal{T}$ such that $\mathcal{T}$ is a smaller, countable set. If we fix the quantization set size to $|\mathcal{T}| = \floor{r}$ and aim at maximizing the information of the quantized variable with another random variable $Y$ and restric the mapping to be deterministic, then the problem is equivalent to the information bottleneck \citep{strouse2017deterministic, nazer2017information}.
\item \textbf{Source coding.} In source coding, we consider a data source $\mathcal{S}$ which generates a signal $Y \in \mathcal{Y}$, which is later perturbed by a channel $\mathcal{C}: \mathcal{Y} \rightarrow \mathcal{X} $ that outputs $X$. We seek a coding scheme that generates a code $T \in \mathcal{T}$ from the output of the channel $X$ which is as informative as possible about the original source signal $Y$ and can be transmitted at a small rate $I(X;T) \leq r$. Therefore, this problem is equivalent to the the formulation of the information bottleneck \citep{hassanpour2018equivalence}.
\end{itemize}
}

{Furthermore, it has been employed as a tool for development or explanation in other disciplines like reinforcement learning \citep{goyal2019infobot,yingjun2019learning,sharma2020dynamicsaware}, attribution methods \citep{schulz2020restricting}, natural language processing \citep{li2019specializing}, linguistics \citep{zaslavsky2018efficient} or neuroscience \citep{chalk2018toward}. Moreover, it has connections with other problems such as source coding with side information (or the Wyner-Ahlswede-K\"orner (WAK) problem), the rate-distortion problem or the cost-capacity problem (see Sections 3, 6 and 7 from \citep{gilad2003information}).}

In practice, solving a constrained optimization problem such as the IB functional is {challenging}. Thus, in order to avoid the non-linear constraints from the IB functional the IB Lagrangian is defined.

\begin{Definition}[IB Lagrangian]
\label{def:ib_lagrangian}
Let $X$ and $Y$ be statistically dependent variables. Let $\Delta$ be the set of random variables $T$ obeying the Markov condition $Y \leftrightarrow X \leftrightarrow T$. Then we define the IB Lagrangian as

\begin{equation}
	\mathcal{L}_{\text{IB}}(T;\beta) = I(T;Y) - \beta I(X;T).
\end{equation}

\end{Definition}

Here $\beta \in [0,1]$ is the Lagrange multiplier which controls the trade-off between the information of $Y$ retained and the compression of $X$. Note we consider $\beta \in [0,1]$ because (i) for $\beta \leq 0$ many uncompressed solutions such as $T = X$ maximize $\mathcal{L}_{\text{IB}}(T;\beta)$, and (ii) for $\beta \geq 1$ the IB Lagrangian is non-positive due to the data processing inequality (DPI) (Theorem 2.8.1 from \citet{cover2012elements}) and trivial solutions like $T=\text{const}$ are maximizers with $\mathcal{L}_{\text{IB}}(T;\beta) = 0$ \citep{kolchinsky2018caveats}. 

We know the solutions of the IB Lagrangian optimization (if existent) are solutions of the IB functional by the Lagrange's sufficiency theorem (Theorem 5 in Appendix A of \citet{courcoubetis2003pricing}). Moreover, since the IB functional is concave (Lemma 5 of \citet{gilad2003information}) we know they exist (Theorem 6 in Appendix A of \citet{courcoubetis2003pricing}).

Therefore, the problem is usually solved by maximizing the IB Lagrangian with adaptations of the Blahut-Arimoto algorithm \citep{tishby2000information}, deterministic annealing approaches \citep{tishby2001data} or a bottom-up greedy agglomerative clustering \citep{slonim2000agglomerative} or its improved sequential counterpart \citep{slonim2002unsupervised}. However, when provided with high-dimensional random variables $X$ such as images, these algorithms do not scale well and deep learning based techniques, where the IB Lagrangian is used as the objective function, prevailed \citep{alemi2016deep, chalk2016relevant, kolchinsky2017nonlinear}.

Note the IB Lagrangian optimization yields a representation $T$ with a given performance ($I(X;T), I(T;Y)$) for a given $\beta$. However{,} there is no one-to-one mapping between $\beta$ and $I(X;T)$. Hence, we cannot directly optimize for a desired compression level $r$ but we need to perform several optimizations for different values of $\beta$ and select the representation with the desired performance; e.g., \citep{alemi2016deep}. The Lagrange multiplier selection is important since (i) sometimes even choices of $\beta < 1$ lead to trivial representations such that $p_{T|X} = p_T$, and (ii) there exist some discontinuities on the performance level w.r.t. the values of $\beta$ \citep{wu2019learnability}. 

Moreover, recently \citet{kolchinsky2018caveats} showed how in deterministic scenarios (such as many classification problems where an input $x_i$ belongs to a single {particular} class $y_i$) the IB Lagrangian could not explore the IB curve. Particularly, they showed that multiple $\beta$ yielded the same performance level and that a single value of $\beta$ could result in different performance levels. To solve this issue, they introduced the squared IB Lagrangian, $\mathcal{L}_{\text{sq-IB}}(T;\beta_{\text{sq}}) = I(T;Y) - \beta_{sq} I(X;T)^2$, which is able to explore the IB curve in any scenario by optimizing for different values of $\beta_{\text{sq}}$. However, even though they realized a one-to-one mapping between $\beta_{\text{sq}}$ and the compression level existed, they did not find such mapping. Hence, multiple optimizations of the Lagrangian were still required to find the best traded-off solution.

The main contributions of this article are:

\begin{enumerate}
	\item We introduce a general family of Lagrangians (the convex IB Lagrangians) which are able to explore the IB curve in any scenario for which the squared IB Lagrangian \citep{kolchinsky2018caveats} is a particular case of. More importantly, the analysis made for deriving this family of Lagrangians can serve as inspiration for obtaining new Lagrangian families which solve other objective functions with intrinsic trade-offs such as the IB Lagrangian.
	\item We show that in deterministic scenarios (and other scenarios where the IB curve shape is known) one can use the convex IB Lagrangian to obtain a desired level of performance with a single optimization. That is, there is a one-to-one mapping between the Lagrange multiplier used for the optmization and the level of compression and informativeness obtained, and we provide the exact mapping. This eliminates the need {for} multiple optimizations to select a suitable representation. 
	\item {We introduce a particular case of the convex IB Lagrangians: the shifted exponential IB Lagrangian, which allow us to approximately obtain a specific compression level in any scenario. This way, we can approximately solve the initial constrained optimization problem from Equation (\ref{eq:ib_functional}) with a single optimization}.
\end{enumerate}

Furthermore, we provide some insight for explaining why there are discontinuities in the performance levels w.r.t. the values of the Lagrange multipliers. In a classification setting, we connect those discontinuities with the intrinsic clusterization of the representations when optimizing the IB bottleneck objective.

The structure of the article is the following: In Section \ref{sec:ib_supervised_learning} we motivate the usage of the IB in supervised learning settings. Then, in Section \ref{sec:ib_deterministic} we outline the important results used about the IB curve in deterministic scenarios. Later, in Section \ref{sec:convex_ib_lagrangian} we introduce the convex IB Lagrangian and explain some of its properties like the bijective mapping between Lagrange multipliers and the compression level and the range of such multipliers. After that, we support our (proved) claims with some empirical evidence on the MNIST \citep{lecun1998gradient} {and TREC-6 \citep{li2002learning}} datasets in Section \ref{sec:experiments}. Finally, in Section \ref{sec:conclusion} we discuss our claims and empirical results. A PyTorch \citep{paszke2017automatic} implementation of the article can be found at \href{https://github.com/burklight/convex-IB-Lagrangian-PyTorch}{\texttt{https://github.com/burklight/convex-IB-Lagrangian-PyTorch}}. 

In the Appendices \ref{app:min_jce} - \ref{proof:bound_domain_beta} we provide with the proofs of the theoretical results. Then, in Appendix \ref{app:alternatives_to_convex_ib_lagrangians} we show some alternative families of Lagrangians with similar properties. Later, in Appendix \ref{app:experimental_setup_details} we provide with the precise experimental setup details to reproduce the results from the paper{, and further experimentation with different datasets and neural network architectures}. To conclude, in Appendix \ref{app:guidelines_on_choosing_proper_h} we show some guidelines on how to set the convex information bottleneck Lagrangians for practical problems.

\section{The IB in supervised learning}
\label{sec:ib_supervised_learning}

In this section{,} we will first give an overview of supervised learning in order to later motivate the usage of the information bottleneck in this setting.

\subsection{Supervised learning overview}

In supervised learning we are given a dataset $\mathcal{D}_n = \lbrace (x_i, y_i) \rbrace_{i=1}^n$ of $n$ pairs of input features and task outputs. In this case, $X$ and $Y$ are the random variables of the input features and the task outputs. We assume $x_i$ and $y_i$ are sampled i.i.d. from the true distribution $p_{(X,Y)} = p_{Y|X}p_X$. The usual aim of supervised learning is to use the dataset $\mathcal{D}_n$ to learn a particular conditional distribution $q_{\hat{Y}|X}$ of the task outputs given the input features, parametrized by $\theta$, which is a good approximation of $p_{Y|X}$. We use $\hat{Y}$ and $\hat{y}$ to indicate the predicted task output random variable and its outcome. We call a supervised learning task \textit{regression} when $Y$ is continuous-valued and \textit{classification} when it is discrete.

Usually, supervised learning methods employ intermediate representations of the inputs before making predictions about the outputs; e.g., hidden layers in neural networks (Chapter 5 from \citet{bishop2006pattern}) or transformations in a feature space through the kernel trick in kernel machines like SVMs or RVMs (Sections 7.1 and 7.2 from \citet{bishop2006pattern}). Let $T$ be a possibly stochastic function of the input features $X$ with a parametrized conditional distribution $q_{T|X}$, then, $T$ obeys the Markov condition $Y \leftrightarrow X \leftrightarrow T$. The mapping from the representation to the predicted task outputs is defined by the parametrized conditional distribution $q_{\hat{Y}|T}$. Therefore, in representation-based machine learning methods{,} the full Markov Chain is $Y \leftrightarrow X \leftrightarrow T \leftrightarrow \hat{Y}$. Hence, the overall estimation of the conditional probability $p_{Y|X}$ is given by the marginalization of the representations; i.e.,  $
	q_{\hat{Y}|X} = \mathbb{E}_{t \sim q_{T|X}} \left[ q_{\hat{Y}|T=t} \right]$\footnote{The notation $q_{\hat{Y}|T=t}$ represents the probability distribution $q_{\hat{Y}|T}(\cdot|t;\theta)$. For the rest of the text, we will use the same notation to represent conditional probability distributions where the conditioning argument is given.}.

In order to achieve the goal of having a good estimation of the conditional probability distribution $p_{Y|X}$, we usually define an \textit{instantaneous cost function} $\mathscr{j}: \mathcal{X} \times \mathcal{Y} \rightarrow \mathbb{R}$. The value of this function $\mathscr{j}(x,y;\theta)$ serves as a heuristic to measure the loss our algorithm, parametrized by $\theta$, obtains when trying to predict the realization of the task output $y$ with the input realization $x$. 

Clearly, we can be interested in minimizing the expectation of the instantaneous cost function over all the possible input features and task outputs, which we call the \textit{cost function}. However, since we only have a finite dataset $\mathcal{D}_n$ we have instead to minimize the \textit{empirical cost function}. 

\begin{Definition}[Cost function and empirical cost function]
\label{def:cost_function}
Let $X$ and $Y$ be the input features and task output random variables and $x \in \mathcal{X}$ and $y \in \mathcal{Y}$ their realizations. Let also $\mathscr{j}$ be the instantaneous cost function, $\theta$ the parametrization of our learning algorithm, and $\mathcal{D}_n = \lbrace (x_i, y_i) \rbrace_{i=1}^n$ the given dataset. Then, we define:
\begin{flalign}
  &\text{1. The cost function:} && {J}(p_{(X,Y)};\theta) = \mathbb{E}_{(x,y) \sim p_{(X,Y)}}[\mathscr{j}(x,y;\theta)] && \\
  &\text{2. The emprical cost function:} && \hat{{J}}(\mathcal{D}_n;\theta) = \frac{1}{n} \sum_{i=1}^n \mathscr{j}(x_i,y_i;\theta) &&
\end{flalign}
\end{Definition}

The discrepancy between the normal and empirical cost functions is called the \textit{generalization gap} or \textit{generalization error} (see Section 1 of \citet{xu2017information}, for instance) and {intuitively}, the smaller this gap is, the better our model generalizes; i.e., the better it will perform to new, unseen samples in terms of our cost function.

\begin{Definition}[Generalization gap]
\label{def:generalization_gap}
Let ${J}(p_{(X,Y)};\theta)$ and $\hat{{J}}(\mathcal{D}_n;\theta)$ be the cost and the empirical cost functions as defined in Definition \ref{def:cost_function}. Then, the generalization gap is defined as
\begin{equation}
	\textnormal{gen}(\mathcal{D}_n;\theta) = {J}(p_{(X,Y)};\theta) - \hat{{J}}(\mathcal{D}_n;\theta),
\end{equation}
and it represents the error incurred when the selected distribution is the one parametrized by $\theta$ when the rule $\hat{J}(\mathcal{D}_n;\theta)$ is used instead of ${J}(p_{(X,Y)};\theta)$ as the function
to minimize.
\end{Definition}

Ideally, we would want to minimize the cost function. Hence, we usually try to minimize the empirical cost function and the generalization gap simultaneously. The modifications to our learning algorithm which intend to reduce the generalization gap but not hurt the performance on the empirical cost function are known as \textit{regularization}.

\subsection{Why do we use the IB?}
\label{subsec:why_the_ib}

\begin{Definition}[Representation cross-entropy cost function] Let $X$ and $Y$ be two statistically dependent variables with joint distribution $p_{(X,Y)} = p_{Y|X}p_X$. Let also $T$ be a random variable obeying the Markov condition $Y \leftrightarrow X \leftrightarrow T$ and $q_{T|X}$ and $q_{\hat{Y}|T}$ be the encoding and decoding distributions of our model, parametrized by $\theta$. Finally, let $\mathbb{C}(p_Z || q_Z) = - \mathbb{E}_{z \sim p_Z}[\log(q_{Z}(z))]$ be the cross entropy between two probability distributions $p_Z$ and $q_Z$. Then, the cross-entropy cost function is

\begin{align}
	{J}_{\textnormal{CE}}(p_{(X,Y)};\theta) = \mathbb{E}_{(x,t) \sim q_{T|X}p_X}\left[\mathbb{C}(q_{Y|T=t}||q_{\hat{Y}|T=t}) \right] 
	= \mathbb{E}_{(x,y) \sim p_{(X,Y)}}\left[\mathscr{j}_{\textnormal{CE}}(x,y;\theta) \right],	
\end{align}

where $\mathscr{j}_{\textnormal{CE}}(x,y;\theta) = - \mathbb{E}_{t \sim q_{T|X=x}}[q_{\hat{Y}|T=t}(y|t;\theta)]$ is the instantaneous representation cross-entropy cost function and $q_{Y|T} = \mathbb{E}_{x\sim p_X}[p_{Y|X=x}q_{T|X=x}/q_{T}]$ and $q_T = \mathbb{E}_{x\sim p_X}[q_{T|X=x}]$.

\label{def:cross_entropy_cost}
\end{Definition}

The cross-entropy is a widely used cost function in classification tasks (e.g., \citet{krizhevsky2012imagenet, shore1982minimum, teahan2000text}) which has many interesting properties \citep{shore1981properties}. Moreover, it is known that minimizing the $J_{\text{CE}}(p_{(X,Y)};\theta)$ maximizes the mutual information $I(T;Y)$. That is: 

\begin{Proposition}[Minimizing the cross entropy maximizes the mutual information] Let $J_{\textnormal{CE}}(p_{(X,Y)};\theta)$ be the representation cross-entropy cost function as defined in Definition \ref{def:cross_entropy_cost}. Let also $I(T;Y)$ be the mutual information between random variables $T$ and $Y$ in the setting from Definition \ref{def:cross_entropy_cost}. Then, minimizing $J_{\textnormal{CE}}(p_{(X,Y)};\theta)$ implies maximizing $I(T;Y)$.
\label{prop:min_jce_max_ity}
\end{Proposition}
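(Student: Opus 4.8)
The plan is to decompose the cross-entropy cost into an information-theoretic term plus a nonnegative penalty, and then read off its dependence on $\theta$. First I would rewrite the integrand of ${J}_{\textnormal{CE}}$ using the pointwise identity $\mathbb{C}(p\|q) = H(p) + \KL(p\|q)$ in the variable $t$; since the integrand depends on $(x,t)$ only through $t$, this gives
\begin{equation}
  {J}_{\textnormal{CE}}(p_{(X,Y)};\theta) = \E_{t\sim q_T}\big[ H(q_{Y|T=t}) + \KL(q_{Y|T=t}\|q_{\hat{Y}|T=t}) \big].
\end{equation}
The first summand is exactly the conditional entropy $H(Y|T)$ computed under the joint law $q_{T|X}p_{(X,Y)}$, because $q_T$ and $q_{Y|T}$ from Definition \ref{def:cross_entropy_cost} are precisely the $T$-marginal and the $Y$-given-$T$ conditional that this joint induces.

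Next I would substitute $H(Y|T) = H(Y) - I(T;Y)$ and note that the marginal $p_Y$, and hence $H(Y)$, is fixed by $p_{(X,Y)}$ and independent of $\theta$. This yields
\begin{equation}
  {J}_{\textnormal{CE}}(p_{(X,Y)};\theta) = H(Y) - I(T;Y) + \E_{t\sim q_T}\big[ \KL(q_{Y|T=t}\|q_{\hat{Y}|T=t}) \big] \ \geq\ H(Y) - I(T;Y),
\end{equation}
where the inequality follows from nonnegativity of the KL divergence and is tight exactly when the decoder satisfies $q_{\hat{Y}|T=t} = q_{Y|T=t}$ for $q_T$-almost every $t$. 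Crucially, $I(T;Y)$ depends only on the encoder $q_{T|X}$ (together with the fixed $p_{(X,Y)}$), while the decoder $q_{\hat{Y}|T}$ enters solely through the KL penalty.

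From here the conclusion is immediate: minimizing ${J}_{\textnormal{CE}}$ over $\theta$ both drives the KL penalty to zero through the decoder parameters and minimizes $H(Y) - I(T;Y)$, i.e.\ maximizes $I(T;Y)$, through the encoder parameters; equivalently $\inf_\theta {J}_{\textnormal{CE}} = H(Y) - \sup I(T;Y)$ over the representable encoders. I do not expect a genuine obstacle here; the one point needing care is bookkeeping about which distributions are held fixed and which are optimized --- keeping $p_{(X,Y)}$ (hence $H(Y)$) fixed, reading $q_{Y|T}$ as the encoder-induced posterior, and observing that $I(T;Y)$ is insensitive to the decoder --- so that ``minimize cross-entropy'' factors cleanly into ``match the true posterior'' plus ``maximize $I(T;Y)$''. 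A minor caveat worth a sentence is that exact tightness of the inequality needs the decoder family to be expressive enough to represent $q_{Y|T}$; without that one still has ${J}_{\textnormal{CE}} = H(Y) - I(T;Y) + (\text{decoder slack})$, which already suffices for the stated implication.
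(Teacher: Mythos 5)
Your proposal is correct and is essentially the paper's own argument rearranged: the paper starts from $I(T;Y)$, inserts the decoder inside the logarithm to produce the same $\E_{t\sim q_T}\bigl[\KL(q_{Y|T=t}\,\|\,q_{\hat{Y}|T=t})\bigr]$ term, drops it by nonnegativity, and arrives at the identical inequality $I(T;Y) \geq H(Y) - J_{\textnormal{CE}}(p_{(X,Y)};\theta)$ with $H(Y)$ fixed. Your exact-decomposition framing and the tightness/expressiveness remark are fine additions but do not change the substance of the argument.
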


The proof of this proposition can be found in Appendix \ref{app:min_jce}.

\begin{Definition}[Nuisance]
\label{def:nuisance}
A nuisance is any random variable which affects the observed data $X$ but is not informative to the task we are trying to solve. That is, $\Xi$ is a nuisance for $Y$ if $Y \perp \Xi$ or $I(\Xi, Y) = 0$.
\end{Definition}

Similarly, we know that minimizing $I(X;T)$ minimizes the generalization gap for restricted classes when using the cross-entropy cost function (Theorem 1 of \citet{vera2018role}), and when using $I(T;Y)$ directly as an objective to maximize (Theorem 4 of \citet{shamir2010learning}). Furthermore, \citet{achille2018emergence} in Proposition 3.1 upper bound the information of the input representations, $T$, with nuisances that affect the observed data, $\Xi$, with $I(X;T)$.  Therefore, minimizing $I(X;T)$ helps generalization by not keeping useless information of $\Xi$ in our representations.

Thus, jointly maximizing $I(T;Y)$ and minimizing $I(X;T)$ is a good choice both in terms of performance in the available dataset and in new, unseen data, which motivates studies on the IB. 

\section{The Information Bottleneck in deterministic scenarios}
\label{sec:ib_deterministic}


\citet{kolchinsky2018caveats} showed that when $Y$ is a deterministic function of $X$ (i.e., $Y = f(X)$), the IB curve is piecewise linear. More precisely, it is shaped as stated in Proposition \ref{prop:ib_curve_linear}.

\begin{Proposition} [The IB curve is piecewise linear in deterministic scenarios]
\label{prop:ib_curve_linear}
 Let $X$ be a random variable and $Y = f(X)$ be a deterministic function of $X$. Let also $T$ be the bottleneck variable that solves the IB functional. Then the IB curve in the information plane is defined by the following equation:

\begin{equation}
\left\{
\begin{array}{lll}
      I(T;Y) = I(X;T) & \textnormal{ if } & I(X;T) \in [0,I(X;Y)) \\
      I(T;Y) = I(X;Y) & \textnormal{ if } & I(X;T) \geq I(X;Y)
\end{array} 
\right.
\label{eq:ib_curve_det_1}
\end{equation}

\label{th:ib_curve_piecewise_linear}

\end{Proposition}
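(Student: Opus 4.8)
The plan is to prove the claim by pairing a converse bound, which comes straight from the data processing inequality, with a matching achievability construction. As a preliminary, note that $Y=f(X)$ gives $H(Y|X)=0$, hence $I(X;Y)=H(Y)$. For \emph{any} $T\in\Delta$ the Markov chain $Y\leftrightarrow X\leftrightarrow T$ and the DPI (Theorem 2.8.1 of \citet{cover2012elements}) yield both $I(T;Y)\le I(X;T)$ and $I(T;Y)\le I(X;Y)$, so $I(T;Y)\le \min\{I(X;T),I(X;Y)\}$. Consequently $F_{\textnormal{IB,max}}(r)\le \min\{r,I(X;Y)\}$, i.e. the IB curve cannot lie above the stated piecewise-linear graph.

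For the regime $r\ge I(X;Y)$, achievability is immediate: take $T=Y=f(X)$, which obeys the Markov condition and satisfies $I(X;T)=I(X;Y)\le r$ and $I(T;Y)=H(Y)=I(X;Y)$. Together with the converse this forces $F_{\textnormal{IB,max}}(r)=I(X;Y)$, giving the flat branch of Equation~(\ref{eq:ib_curve_det_1}).

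The heart of the argument is the regime $r\in[0,I(X;Y))$. Here I would construct, for each $\lambda\in[0,1]$, a partial-erasure channel $q_{T|X}$: flip an independent coin with success probability $\lambda$; on success output $T=f(X)=Y$, on failure output a fixed erasure symbol $\perp\notin\mathcal{Y}$. This $q_{T|X}$ is a legitimate element of $\Delta$ since $T$ is a function of $X$ and exogenous randomness. A direct computation gives $H(T|X)=H_b(\lambda)$ (the binary entropy of the coin), $H(T)=H_b(\lambda)+\lambda H(Y)$, and — using that the erasure event is independent of $X$, so that $p_{Y|T=\perp}=p_Y$ — that $H(Y|T)=(1-\lambda)H(Y)$. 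Hence $I(X;T)=I(T;Y)=\lambda H(Y)=\lambda I(X;Y)$. Choosing $\lambda=r/I(X;Y)$ produces a feasible $T$ with $I(X;T)=r$ and $I(T;Y)=r$, so $F_{\textnormal{IB,max}}(r)\ge r$, and with the converse $F_{\textnormal{IB,max}}(r)=r$. Since an optimal $T$ at level $r<I(X;Y)$ must have $I(X;T)=r$ (feasibility gives $\le r$, while $I(T;Y)=r\le I(X;T)$ gives $\ge r$), reading off the optimal points $(I(X;T),I(T;Y))=(r,r)$ for $r<I(X;Y)$ and $(r,I(X;Y))$ for $r\ge I(X;Y)$ reproduces exactly Equation~(\ref{eq:ib_curve_det_1}).

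The step I expect to demand the most care is the achievability computation, specifically the justification that $p_{Y|T=\perp}=p_Y$ — this independence is precisely what makes $I(T;Y)$ collapse to $\lambda I(X;Y)$ rather than something strictly larger — and the verification that the channel respects $Y\leftrightarrow X\leftrightarrow T$; the remaining manipulations are routine entropy bookkeeping. It is also worth remarking that this same construction exhibits the degeneracy that motivates the later sections (many distinct feasibility levels sharing the line $I(T;Y)=I(X;T)$), though that observation is not needed for the proposition itself.
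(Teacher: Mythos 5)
Your proof is correct. Note that the paper itself does not prove this proposition: it is imported from \citet{kolchinsky2018caveats}, so there is no internal proof to compare against; your converse-plus-achievability argument is essentially the standard one from that reference. The converse direction is exactly the two applications of the DPI to the chain $Y \leftrightarrow X \leftrightarrow T$, giving $F_{\textnormal{IB,max}}(r)\le\min\{r,I(X;Y)\}$, and your erasure construction is sound: the coin $B$ is independent of $X$, hence of $Y=f(X)$, so $p_{Y|T=\perp}=p_Y$ holds and the bookkeeping $I(X;T)=I(T;Y)=\lambda H(Y)$ is right; the Markov condition holds because $T$ is generated from $X$ and exogenous noise only. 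Two small points worth making explicit: (i) the argument tacitly assumes $H(Y)=I(X;Y)<\infty$ (e.g., $Y$ discrete, the setting intended here), both for the entropy computations and for the choice $\lambda=r/I(X;Y)$ to be well defined; and (ii) in the regime $r\ge I(X;Y)$ an optimizer need not saturate the constraint (e.g., $T=Y$ has $I(X;T)=I(X;Y)<r$), so to read off the flat branch as stated — $I(T;Y)=I(X;Y)$ whenever $I(X;T)\ge I(X;Y)$ — you should phrase it as: for any achievable point with $I(X;T)\ge I(X;Y)$ the converse caps $I(T;Y)$ at $I(X;Y)$, and that value is attained (e.g., by your erasure channel augmented with extra randomness, or simply by $T=Y$ at the left endpoint), which is a cosmetic rather than substantive fix.
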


Furthermore, they showed that the IB curve could not be explored by optimizing the IB Lagrangian for multiple $\beta$ because the curve was not strictly concave. That is, there was not a one-to-one relationship between $\beta$ and the performance level.

\begin{Theorem}[In deterministic scenarios, the IB curve cannot be explored using the IB Lagrangian] Let $X$ be a random variable and $Y = f(X)$ be a deterministic function of $X$. Let also $\Delta$ be the set of random variables $T$ obeying the Markov condition $Y \leftrightarrow X \leftrightarrow T$. Then:

\begin{enumerate}
	\item Any solution $T \in \Delta$ such that $I(X;T) \in [0,I(X;Y))$ and $I(T;Y) = I(X;T)$ solves $\argmax_{T \in \Delta} \{ \mathcal{L}_{\textnormal{IB}}(T;\beta) \}$ for $\beta = 1$. That is, many different compression and performance levels can be achieved for $\beta = 1$.
	\item Any solution $T \in \Delta$ such that $I(X;T) > I(X;Y)$ and $I(T;Y) = I(X;Y)$ solves $\argsup_{T \in \Delta} \{ \mathcal{L}_{\textnormal{IB}}(T;\beta) \}$\footnote{Note we use the supremum in this case since for $\beta = 0$ we have that $I(X;T)$ could be infinite and then the search set from Equation (\ref{eq:ib_functional}); i.e., $\lbrace T : Y \leftrightarrow X \leftrightarrow T \rbrace \cap \lbrace T : I(X;T) < \infty \rbrace $) is not compact anymore.} for $\beta = 0$. That is, many compression levels can be achieved with the same performance for $\beta = 0$.
	\item Any solution $T \in \Delta$ such that $I(X;T) = I(T;Y) = I(X;Y)$ solves $\argmax_{T \in \Delta} \{ \mathcal{L}_{\textnormal{IB}}(T;\beta) \}$ for all $\beta \in (0,1)$. That is, many different $\beta$ achieve the same compression and performance level.
\end{enumerate}

\label{th:det_ib_curve_not_explorable}
\end{Theorem}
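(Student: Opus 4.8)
The plan is to handle all three parts uniformly. First I would prove a single inequality: for every $T \in \Delta$ and every $\beta \in [0,1]$,
\begin{equation}
  \mathcal{L}_{\textnormal{IB}}(T;\beta) \leq (1-\beta)\, I(X;Y),
\end{equation}
so that $\sup_{T \in \Delta}\mathcal{L}_{\textnormal{IB}}(T;\beta) \leq (1-\beta) I(X;Y)$. Specialising to $\beta = 1$, $\beta = 0$, and $\beta \in (0,1)$ gives the candidate optimal values $0$, $I(X;Y)$, and $(1-\beta) I(X;Y)$. Then, for each of the three bullets, I would check that the described representation $T$ achieves exactly that value, so that it lies in the $\argmax$ (resp.\ $\argsup$) set and the bound is tight. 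The piecewise-linear shape of the IB curve (Proposition \ref{prop:ib_curve_linear}) is only used to know that such representations exist — they are precisely the points of that curve — while the inequality itself rests only on the data processing inequality.

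For the bound, rewrite the Lagrangian as a non-negative combination of two DPI-controlled quantities:
\begin{equation}
  \mathcal{L}_{\textnormal{IB}}(T;\beta) = I(T;Y) - \beta I(X;T) = (1-\beta)\, I(T;Y) + \beta\,\bigl(I(T;Y) - I(X;T)\bigr).
\end{equation}
Since $Y \leftrightarrow X \leftrightarrow T$ is a Markov chain, the DPI gives $I(T;Y) \leq I(X;Y)$ and, reading the chain as $T \leftrightarrow X \leftrightarrow Y$, also $I(T;Y) \leq I(X;T)$. For $\beta \in [0,1]$ both coefficients $1-\beta$ and $\beta$ are non-negative, so the first summand is at most $(1-\beta)I(X;Y)$ and the second is at most $0$; adding gives the displayed bound. (I am using $I(X;Y) = H(Y) < \infty$, which holds in the deterministic setting of interest.)

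Attainment is then immediate in each case. Part 1: if $I(T;Y) = I(X;T)$ then $\mathcal{L}_{\textnormal{IB}}(T;1) = I(T;Y) - I(X;T) = 0$, matching the $\beta = 1$ bound; the hypothesis $I(X;T) < I(X;Y)$ is just the feasibility range, since $I(T;Y) \leq I(X;Y)$ rules out $I(T;Y) = I(X;T) > I(X;Y)$. Part 2: if $I(T;Y) = I(X;Y)$ then $\mathcal{L}_{\textnormal{IB}}(T;0) = I(T;Y) = I(X;Y)$, matching the $\beta = 0$ bound; because $I(X;T)$ may be unbounded over $\Delta$ this is phrased with $\argsup$ as in the footnote, though here the supremum is in fact attained. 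Part 3: if $I(X;T) = I(T;Y) = I(X;Y)$ — e.g.\ $T = f(X)$ — then $\mathcal{L}_{\textnormal{IB}}(T;\beta) = I(X;Y) - \beta I(X;Y) = (1-\beta) I(X;Y)$ for every $\beta \in (0,1)$, again matching the bound. In each case every such $T$ yields the same value, which is exactly the content of the three statements (many performance/compression pairs, or many $\beta$, give the same optimum).

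There is no hard computation here; the points to be careful about are (i) invoking the DPI in both directions, i.e.\ that $I(T;Y) \leq I(X;T)$ as well as $I(T;Y) \leq I(X;Y)$; (ii) using a supremum rather than a maximum for $\beta = 0$, since the feasible set is non-compact there; and (iii) keeping the boundary values $\beta \in \{0,1\}$ consistent with the $\beta \in (0,1)$ case. If one prefers to route through Proposition \ref{prop:ib_curve_linear} explicitly, the same conclusions follow from $\sup_{T \in \Delta}\mathcal{L}_{\textnormal{IB}}(T;\beta) = \sup_{r \geq 0}\bigl(F_{\textnormal{IB,max}}(r) - \beta r\bigr)$ with $F_{\textnormal{IB,max}}(r) = \min\{r, I(X;Y)\}$, by reading off the maximiser of this piecewise-linear function of $r$.
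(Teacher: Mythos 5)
Your proof is correct, and it takes a genuinely different and more self-contained route than the paper's. The paper proves parts 1 and 2 by combining the piecewise-linear curve of Proposition \ref{prop:ib_curve_linear} with the slope identity $\beta = dI(T;Y)/dI(X;T)$ of Tishby et al., and handles part 3 in two steps (monotonicity of $\beta$ along the concave curve to locate the solution, then a tangent-line argument that itself uses the same bound $\mathcal{L}_{\textnormal{IB}}(T;\beta) \leq (1-\beta)I(T;Y) \leq (1-\beta)I(X;Y)$ you start from). You instead prove the single DPI-based inequality $\mathcal{L}_{\textnormal{IB}}(T;\beta) \leq (1-\beta)I(X;Y)$ for all $\beta \in [0,1]$ via the decomposition $(1-\beta)I(T;Y) + \beta\bigl(I(T;Y)-I(X;T)\bigr)$ and verify attainment case by case; this treats all three bullets uniformly, avoids the derivative identification (which is delicate exactly at the kink and at the endpoints, as the paper itself concedes for $I(X;T)=I(X;Y)$ in part 2), and relegates Proposition \ref{prop:ib_curve_linear} to its proper role of guaranteeing that the claimed representations exist. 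As a bonus your equality analysis immediately yields the converse the paper proves separately in part 3(a): for $\beta \in (0,1)$ equality forces $I(T;Y)=I(X;Y)$ and $I(T;Y)=I(X;T)$, so $(I(X;Y),I(X;Y))$ is the only achievable optimum in the information plane. What the paper's route buys is the explicit link to the geometric slope picture used throughout the rest of the article; your route buys elementarity (only the DPI in both directions and finiteness of $I(X;Y)=H(Y)$, which you correctly flag as an assumption of the deterministic setting) and correct handling of the $\argsup$ subtlety at $\beta=0$, where you note the supremum is in fact attained by the stated representations.
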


An alternative proof for this theorem can be found in Appendix \ref{app:det_ib_curve_not_explorable}.

\section{The Convex IB Lagrangian}
\label{sec:convex_ib_lagrangian}

\subsection{Exploring the IB curve}
\label{subsec:exploring_ib_curve}

Clearly, a situation like the one depicted in Theorem \ref{th:det_ib_curve_not_explorable} is not desirable, since we cannot aim for different levels of compression or performance. For this reason, we generalize the effort from \citet{kolchinsky2018caveats} and look for families of Lagrangians which are able to explore the IB curve. Inspired by the squared IB Lagrangian, $\mathcal{L}_{\text{sq-IB}}(T;{\beta_{\text{sq}}}) = I(T;Y) - \beta_{\text{sq}} I(X;T)^2$, we look at the conditions a function of $I(X;T)$ requires in order to be able to explore the IB curve. In this way, we realize that any monotonically increasing and strictly convex function will be able to do so, and we call the family of Lagrangians with these characteristics the \textit{convex IB Lagrangians}, due to the nature of the introduced function.

\begin{Theorem}[Convex IB Lagrangians] Let $\Delta$ be the set of r.v. $T$ obeying the Markov condition $Y \leftrightarrow X \leftrightarrow T$. Then, if $u$ is a \textbf{monotonically increasing and strictly convex function}, the IB curve can always be recovered by the solutions of $\argmax_{T \in \Delta} \{\mathcal{L}_{\textnormal{IB},u}(T;{\beta_u})\}$, with 

\begin{equation}
	\mathcal{L}_{\textnormal{IB},u}(T;{\beta_u}) = I(T;Y) - \beta_u u(I(X;T)).
\end{equation}

That is, for each point $(I(X;T),I(T;Y))$ s.t. $dI(T;Y)/dI(X;T) > 0$ there is a unique $\beta_u$ for which maximizing $\mathcal{L}_{\textnormal{IB},u}(T;{\beta_u})$ achieves this solution. Furthermore, $\beta_u$ is strictly decreasing w.r.t. $I(X;T)$. We call $\mathcal{L}_{\textnormal{IB},u}(T;{\beta_u})$ the convex IB Lagrangian.

\label{th:ib_convex_lagrangians}
\end{Theorem}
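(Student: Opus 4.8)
The plan is to reduce the statement to the concavity of the IB functional $F_{\textnormal{IB,max}}$ established in Lemma 5 of \citet{gilad2003information}, combined with the hypotheses on $u$. First I would parametrize the IB curve by the compression level: write $f(r) = F_{\textnormal{IB,max}}(r)$, which by concavity of $F_{\textnormal{IB,max}}$ is a concave, nondecreasing function of $r$, and which is the relevant information achievable at $I(X;T) = r$. Maximizing $\mathcal{L}_{\textnormal{IB},u}(T;\beta_u)$ over $\Delta$ then decomposes: for fixed $I(X;T) = r$, optimality forces $I(T;Y) = f(r)$, so the problem collapses to the scalar optimization $\max_{r \geq 0} \{ f(r) - \beta_u\, u(r) \}$. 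The key point to verify here is that this reduction is valid, i.e.\ that the supremum over $T$ with $I(X;T)=r$ is attained at a point of the curve; this follows because $I(T;Y)$ appears with a positive coefficient and $\beta_u u(I(X;T))$ depends on $T$ only through $I(X;T)$.

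Next I would analyze the scalar problem $g_{\beta_u}(r) = f(r) - \beta_u u(r)$. Since $f$ is concave and $u$ is strictly convex, $-\beta_u u$ is strictly concave for every $\beta_u > 0$, hence $g_{\beta_u}$ is strictly concave; therefore it has a unique maximizer $r^\star(\beta_u)$, characterized by the first-order condition $f'(r^\star) = \beta_u\, u'(r^\star)$ (using one-sided derivatives at kinks, as in the deterministic piecewise-linear case of Proposition \ref{prop:ib_curve_linear}). Conversely, given a target point $(r, f(r))$ on the curve with slope $s := dI(T;Y)/dI(X;T) = f'(r) > 0$, I set $\beta_u := f'(r)/u'(r)$; because $u$ is monotonically increasing, $u'(r) > 0$, so $\beta_u$ is well-defined and positive, and by strict concavity of $g_{\beta_u}$ this $r$ is exactly its unique maximizer. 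This establishes the claimed one-to-one correspondence between admissible points of the curve and values $\beta_u$.

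Finally, for the monotonicity claim I would show $\beta_u(r) = f'(r)/u'(r)$ is strictly decreasing in $r$: the numerator $f'(r)$ is nonincreasing (concavity of $f$) and the denominator $u'(r)$ is strictly increasing (strict convexity of $u$, which gives $u'$ strictly increasing), so the ratio is strictly decreasing wherever it is defined. A cleaner way that avoids assuming differentiability of $f$ is to argue directly from the supergradient inequality: if $r_1 < r_2$ are the maximizers associated with $\beta_1, \beta_2$ respectively, adding the two optimality inequalities $g_{\beta_1}(r_1) \geq g_{\beta_1}(r_2)$ and $g_{\beta_2}(r_2) \geq g_{\beta_2}(r_1)$ yields $(\beta_1 - \beta_2)(u(r_2) - u(r_1)) \geq 0$; since $u$ is strictly increasing and $r_1 \neq r_2$, this forces $\beta_1 > \beta_2$, giving strict monotonicity.

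The main obstacle I anticipate is the handling of non-smooth points of the IB curve — in deterministic scenarios $f$ is only piecewise linear, so $f'$ need not exist at the breakpoint $r = I(X;Y)$, and one must work with the subdifferential and be careful that ``$dI(T;Y)/dI(X;T) > 0$'' is interpreted so that the corner and the flat part $r \geq I(X;Y)$ are correctly excluded (there the slope is $0$ and no finite $\beta_u > 0$ selects those points). The convex-analytic reformulation via the supergradient inequalities in the previous paragraph is the robust route around this difficulty, since it never differentiates $f$; one only needs that $u$ is differentiable with $u' > 0$ to pin down the exact value of $\beta_u$, which is guaranteed by the hypotheses on $u$.
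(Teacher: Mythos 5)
Your proposal is correct, but it takes a genuinely different route from the paper's. The paper works in the space of representations $T$: it rewrites the constraint $I(X;T)\le r$ as $u(I(X;T))\le u(r)$ (using monotonicity of $u$), runs a Lagrangian-sufficiency chain of inequalities to show that a maximizer of $\mathcal{L}_{\textnormal{IB},u}(T;\beta_u^*)$ satisfying the constraint attains $F_{\textnormal{IB,max}}(r)$, invokes the duality result in \citet{courcoubetis2003pricing} (together with concavity of $I(T;Y)$ and convexity of $I(X;T)$ in $T$, so that convexity of $u$ makes $-u(I(X;T))$ concave) for the existence of multipliers, and finally obtains $\beta_u=\beta/u'(I(X;T))$ by formally differentiating the Lagrangian with respect to $T$, getting uniqueness and strict monotonicity from $\beta$ nonincreasing (Lemma 6 of \citet{gilad2003information}) together with $u'$ strictly increasing. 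You instead collapse everything onto the information plane: concavity of $F_{\textnormal{IB,max}}$ (Lemma 5 of \citet{gilad2003information}) plus strict convexity of $u$ makes $g_{\beta_u}(r)=f(r)-\beta_u u(r)$ strictly concave for $\beta_u>0$, so uniqueness of the maximizer, the correspondence $\beta_u=f'(r)/u'(r)$ (the same mapping as in Proposition \ref{prop:bijective_mapping_beta_ixt}), and attainment by any $T$ realizing the curve point all follow from one-dimensional convex analysis, with no appeal to an external duality theorem and no differentiation with respect to $T$; your subdifferential treatment also handles the kink and the flat part of the deterministic curve more carefully than the paper does. What the paper's route buys is the explicit relation $\beta_u=\beta/u'(I(X;T))$ to the original IB multiplier, which it reuses in the corollary; what your route buys is a self-contained and more rigorous uniqueness/monotonicity argument that is robust at nonsmooth points. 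One small fix: in your supergradient alternative, adding the two optimality inequalities only yields $(\beta_1-\beta_2)\bigl(u(r_2)-u(r_1)\bigr)\ge 0$, hence $\beta_1\ge\beta_2$; strictness then follows because $\beta_1=\beta_2$ would force $r_1=r_2$ by uniqueness of the maximizer of the strictly concave $g_{\beta_1}$, which you have already established earlier in the argument.
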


The proof of this theorem can be found on Appendix \ref{proof:ib_convex_lagrangians}. Furthermore, by exploiting the IB curve duality (Lemma 10 of \citet{gilad2003information}) we were able to derive other families of Lagrangians which allow for the exploration of the IB curve (Appendix \ref{app:alternatives_to_convex_ib_lagrangians}).

\begin{Remark}
Clearly, we can see how if $u$ is the identity function (i.e., $u(I(X;T)) = I(X;T)$) then we end up with the normal IB Lagrangian. However, since the identity function is not strictly convex, it cannot ensure the exploration of the IB curve.
\end{Remark}

During the proof of this theorem we observed a relationship between the Lagrange {multipliers} and the solutions obtained of the normal IB Lagrangian $\mathcal{L}_{\textnormal{IB}}(T;\beta)$ and the convex IB Lagrangian $\mathcal{L}_{\textnormal{IB},u}(T;\beta_u)$. This relationship is formalized in the following corollary.

\begin{Corollary}[IB Lagrangian and IB convex Lagrangian connection] Let $\mathcal{L}_{\textnormal{IB}}(T;\beta)$ be the IB Lagrangian and $\mathcal{L}_{\textnormal{IB},u}(T;\beta_u)$ the convex IB Lagrangian. Then, maximizing $\mathcal{L}_{\textnormal{IB}}(T;\beta)$ and $\mathcal{L}_{\textnormal{IB},u}(T;\beta_u)$ can obtain the same point in the IB curve if $\beta_u = \beta / u'(I(X;T))$, where $u'$ is the derivative of $u$.
\end{Corollary}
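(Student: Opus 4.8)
The plan is to reduce both optimizations to the \emph{same} one‑dimensional problem in the scalar compression value $r = I(X;T)$, and then to read off the claimed relation by comparing the two first‑order (tangency) conditions --- which is exactly the computation already implicit in the proof of Theorem~\ref{th:ib_convex_lagrangians}. First I would note that, grouping candidate representations by their value of $I(X;T)$ and using that on the positively‑sloped branch of the IB curve the best attainable informativeness at level $I(X;T)=r$ equals $F_{\textnormal{IB,max}}(r)$, one has
\begin{align*}
\sup_{T\in\Delta}\mathcal{L}_{\textnormal{IB}}(T;\beta) &= \sup_{r\geq 0}\bigl( F_{\textnormal{IB,max}}(r) - \beta r\bigr), \\
\sup_{T\in\Delta}\mathcal{L}_{\textnormal{IB},u}(T;\beta_u) &= \sup_{r\geq 0}\bigl( F_{\textnormal{IB,max}}(r) - \beta_u\, u(r)\bigr),
\end{align*}
so that a point $(r^\star, F_{\textnormal{IB,max}}(r^\star))$ of the IB curve is produced by $\mathcal{L}_{\textnormal{IB}}(\cdot;\beta)$ (resp. by $\mathcal{L}_{\textnormal{IB},u}(\cdot;\beta_u)$) precisely when $r^\star$ maximizes the corresponding right‑hand side.

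Next I would use that $F_{\textnormal{IB,max}}$ is concave and non‑decreasing (Lemma~5 of \citet{gilad2003information}) and that $u$ is monotonically increasing and strictly convex, so that $r\mapsto F_{\textnormal{IB,max}}(r)-\beta_u u(r)$ is concave for every $\beta_u\geq 0$; hence in both problems a point at which the curve has a well‑defined slope $s := dI(T;Y)/dI(X;T) > 0$ is a maximizer if and only if the first‑order condition holds, namely $\beta = s$ for the IB Lagrangian and $s = \beta_u\, u'(r^\star)$ for the convex IB Lagrangian (here $u'(r^\star)>0$, since $u$ is increasing and strictly convex, so the quotient below is well defined). Eliminating the common slope $s$ gives $\beta = \beta_u\, u'(r^\star)$, i.e. $\beta_u = \beta/u'(I(X;T))$; conversely, starting from a $\beta$ that yields $r^\star$ and \emph{defining} $\beta_u$ by this formula makes the two first‑order conditions identical, so $\mathcal{L}_{\textnormal{IB},u}(\cdot;\beta_u)$ attains the same point. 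This is precisely the assertion of the corollary: the two natural parametrizations of the positively‑sloped part of the IB curve differ by the change of variable $\beta\mapsto\beta/u'(r)$.

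The step I expect to be the main obstacle is purely technical: justifying the first‑order characterization at points where $F_{\textnormal{IB,max}}$ fails to be differentiable --- e.g. the kink at $I(X;T)=I(X;Y)$ in the deterministic, piecewise‑linear curve of Proposition~\ref{prop:ib_curve_linear} --- together with the fact that $F_{\textnormal{IB,max}}$ is concave but in general not \emph{strictly} concave, which would force one to phrase optimality via subdifferentials ($\beta\in\partial F_{\textnormal{IB,max}}(r^\star)$ and $\beta_u u'(r^\star)\in\partial F_{\textnormal{IB,max}}(r^\star)$). However, the corollary explicitly restricts to points with $dI(T;Y)/dI(X;T)>0$, i.e. to interior points of the increasing branch where the slope $s$ genuinely exists and is positive; there the subdifferential is the singleton $\{s\}$, $u'$ is continuous and strictly positive, and the argument collapses to the one‑line elimination of $s$ carried out above, reusing the machinery already established in the proof of Theorem~\ref{th:ib_convex_lagrangians}.
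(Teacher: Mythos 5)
Your proposal is correct, and its core step --- equating the slope of the IB curve both to $\beta$ (via \citet{tishby2000information}) and to $\beta_u u'(r)$ at the common optimum, then eliminating the slope --- is exactly the identity the paper extracts inside the proof of Theorem~\ref{th:ib_convex_lagrangians}, where the corollary is obtained as a by-product. Where you genuinely differ is in how you reach that first-order condition: the paper differentiates $\mathcal{L}_{\textnormal{IB},u}(T;\beta_u)$ with respect to $T$, sets the derivative to zero and integrates over all $T\in\Delta$, implicitly invoking Tishby's identification $\beta = dI(T;Y)/dI(X;T)$; you instead scalarize both maximizations over $r=I(X;T)$, writing $\sup_{T\in\Delta}\mathcal{L}_{\textnormal{IB},u}(T;\beta_u)=\sup_{r\ge 0}\{F_{\textnormal{IB,max}}(r)-\beta_u u(r)\}$ (valid since $u$ is increasing, $\beta_u\ge 0$, and the compression constraint binds on the positively-sloped branch), and then use concavity of $F_{\textnormal{IB,max}}$ (Lemma 5 of \citet{gilad2003information}) together with convexity of $u$ so that stationarity is both necessary and sufficient for a global maximum. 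Your route buys rigor: it replaces the paper's informal variational step with a clean one-dimensional argument, it makes explicit that the converse direction (defining $\beta_u=\beta/u'(r^\star)$ and concluding the convex Lagrangian attains the same point) follows from concavity rather than mere stationarity, and it isolates the only delicate spot --- kinks and flat parts of the curve --- which you correctly sidestep by working at points of positive slope; note only that this restriction appears in Theorem~\ref{th:ib_convex_lagrangians} rather than in the corollary's wording, and that at slope zero the relation holds trivially with $\beta=\beta_u=0$. The paper's version, in exchange, is shorter and reuses the very computation it needs anyway for Proposition~\ref{prop:bijective_mapping_beta_ixt}.
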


This corollary {allows} us to better understand why the addition of $u$ allows for the exploration of the IB curve in deterministic scenarios. If we note that for $\beta = 1$ we can obtain any point in the increasing region of the curve, then we clearly see how evaluating $u'$ for different values of $I(X;T)$ define different values of $\beta_u$ that obtain such points. Moreover, it lets us see how if for $\beta = 0$ maximizing the IB Lagrangian could obtain any point $(I(X;Y);I(X;T))$ with $I(X;T) > I(X;Y)$, then the same happens for the IB convex Lagrangian.

\subsection{Aiming for a specific compression level}
\label{subsec:aim_for_specific_ixt}

Let $B_u$ denote the domain of Lagrange multipliers $\beta_u$ for which we can find solutions in the IB curve with the convex IB Lagrangian. Then, the convex IB Lagrangians do not only allow us to explore the IB curve with different $\beta_u$. They also allow us to identify the specific $\beta_u$ that obtains a given point $(I(X;T), I(T;Y))$, provided we know the IB curve in the information plane. Conversely, the convex IB Lagrangian allows {finding} the specific point $(I(X;T), I(T;Y))$ that is obtained by a given $\beta_u$.  

\begin{Proposition}[Bijective mapping between IB curve point and convex IB Lagrange multiplier] Let the IB curve in the information plane be known; i.e., $I(T;Y) = f_{\textnormal{IB}}(I(X;T))$ is known. Then there is a bijective mapping from Lagrange multipliers $\beta_u \in B_u \setminus \{ 0 \}$ from the convex IB Lagrangian to points in the IB curve $(I(X;T), f_{\textnormal{IB}}(I(X;T))$. Furthermore, these mappings are:

\begin{equation}
	\beta_u = \frac{df_{\textnormal{IB}}(I(X;T))}{dI(X;T)} \frac{1}{u'(I(X;T))} \quad \text{and} \quad I(X;T) = (u')^{-1} \left( \frac{df_{\textnormal{IB}}(I(X;T))}{dI(X;T)} \frac{1}{\beta_u} \right),
\end{equation}

where $u'$ is the derivative of $u$ and $(u')^{-1}$ is the inverse of $u'$.

\label{prop:bijective_mapping_beta_ixt}
\end{Proposition}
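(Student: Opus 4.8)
The plan is to leverage the machinery already established in Theorem~\ref{th:ib_convex_lagrangians} and its Corollary, and simply make the bijection explicit. The key observation is that Theorem~\ref{th:ib_convex_lagrangians} already tells us that for every point $(I(X;T), I(T;Y))$ on the IB curve with $dI(T;Y)/dI(X;T) > 0$ there is a \emph{unique} $\beta_u$ achieving it, and that $\beta_u$ is strictly decreasing in $I(X;T)$. So the map $I(X;T) \mapsto \beta_u$ is injective on this region; what remains is (i) to write down the formula, and (ii) to invert it.

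First I would recall from the proof of Theorem~\ref{th:ib_convex_lagrangians} (or re-derive in two lines) the first-order stationarity condition for $\max_{T\in\Delta}\{I(T;Y) - \beta_u u(I(X;T))\}$ along the curve: parametrizing the curve by $s = I(X;T)$ with $I(T;Y) = f_{\textnormal{IB}}(s)$, the objective becomes $g(s) = f_{\textnormal{IB}}(s) - \beta_u u(s)$, and setting $g'(s) = 0$ gives $f_{\textnormal{IB}}'(s) = \beta_u u'(s)$, i.e.
\begin{equation}
	\beta_u = \frac{f_{\textnormal{IB}}'(I(X;T))}{u'(I(X;T))}.
\end{equation}
Here I would note that $u$ monotonically increasing and strictly convex means $u' > 0$ and $u'$ strictly increasing, hence $u'$ is invertible with a well-defined inverse $(u')^{-1}$ on its range; this is exactly what makes the manipulation below legitimate. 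For $\beta_u \in B_u \setminus \{0\}$ we may divide, obtaining $u'(I(X;T)) = f_{\textnormal{IB}}'(I(X;T))/\beta_u$, and then apply $(u')^{-1}$ to both sides to get the stated expression for $I(X;T)$. (Strictly speaking $f_{\textnormal{IB}}'(I(X;T))$ still appears on the right, so this is an implicit characterization rather than a closed form; I would remark on this, pointing out that in the cases of interest—e.g.\ the deterministic case where $f_{\textnormal{IB}}' \equiv 1$ on the increasing branch—it becomes fully explicit.)

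For the bijectivity claim I would argue both directions. Surjectivity onto the increasing portion of the curve is exactly the content of Theorem~\ref{th:ib_convex_lagrangians}: every such point is hit by some $\beta_u$. Injectivity: if two Lagrange multipliers $\beta_u^{(1)} \neq \beta_u^{(2)}$ yielded the same curve point $(I(X;T), f_{\textnormal{IB}}(I(X;T)))$, the stationarity formula above would force $\beta_u^{(1)} = f_{\textnormal{IB}}'(I(X;T))/u'(I(X;T)) = \beta_u^{(2)}$, a contradiction; equivalently, this is immediate from the strict monotonicity of $\beta_u$ in $I(X;T)$ already proved in Theorem~\ref{th:ib_convex_lagrangians}. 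Excluding $\beta_u = 0$ is necessary precisely because at $\beta_u = 0$ the maximizer is no longer pinned to a single compression level (this mirrors the $\beta = 0$ degeneracy in Theorem~\ref{th:det_ib_curve_not_explorable}), so $0$ cannot be part of a bijection.

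The main obstacle is not the algebra but the regularity bookkeeping: one must be careful that $f_{\textnormal{IB}}$ is differentiable where the formula is applied (the IB curve is concave by Lemma~5 of \citet{gilad2003information}, so $f_{\textnormal{IB}}'$ exists except at countably many kinks, and on the strictly-increasing portion the relevant one-sided derivatives suffice), that we restrict attention to the region $dI(T;Y)/dI(X;T) > 0$ where the statement is meant to hold, and that $(u')^{-1}$ is applied only on the range of $u'$. Once these domains are pinned down, the proposition follows directly from the stationarity condition and the strict monotonicity established in Theorem~\ref{th:ib_convex_lagrangians}.
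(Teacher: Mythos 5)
Your proposal is correct and follows essentially the same route as the paper's proof: derive the first-order stationarity condition along the curve to get $\beta_u = f'_{\textnormal{IB}}(I(X;T))/u'(I(X;T))$, invert $u'$ using strict convexity of $u$ (excluding $\beta_u = 0$), and lean on Theorem~\ref{th:ib_convex_lagrangians} for uniqueness. Your parametrization of the objective by $s = I(X;T)$ is in fact a slightly cleaner way of writing the paper's differentiate-in-$T$-then-integrate step, and your remark that the formula for $I(X;T)$ is implicit (becoming explicit when $f'_{\textnormal{IB}}$ is constant, as in the deterministic case) is a fair and accurate observation.
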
 

This is especially interesting since in deterministic scenarios we know the shape of the IB curve (Theorem \ref{th:ib_curve_piecewise_linear}) and since the convex IB Lagrangians allow for the exploration of the IB curve (Theorem \ref{th:ib_convex_lagrangians}). A proof for Proposition \ref{prop:bijective_mapping_beta_ixt} can be found in Appendix \ref{proof:bijective_mapping}.

\begin{Remark}
Note that the definition from \citet{tishby2000information} $\beta = df_{\textnormal{IB}}(I(X;T))/dI(X;T)$ only allows for a bijection between $\beta$ and $I(X;T)$ if $f_{\textnormal{IB}}$ is a strictly convex, and known function, and we have seen this is not the case in deterministic scenarios (Theorem \ref{th:det_ib_curve_not_explorable}).
\end{Remark}

A direct result derived from this proposition is that we know the domain of Lagrange multipliers, $B_u$, which allow for the exploration of the IB curve if the shape of the IB curve is known. Furthermore, if the shape is not known we can at least bound that range. 

\begin{Corollary}[Domain of convex IB Lagrange multiplier with known IB curve shape] Let the IB curve in the information plane be $I(T;Y) = f_{\textnormal{IB}}(I(X;T))$ and let $I_{\textnormal{max}} = I(X;Y)$. Let also $I(X;T) = r_{\textnormal{max}}$ be the minimum mutual information s.t. $f_{\textnormal{IB}}(r_{\textnormal{max}}) = I_{\textnormal{max}}$; i.e., {$r_{\textnormal{max}} = \arginf_r \{ f_{\textnormal{IB}}(r)\} \textnormal{ s.t. }  f_{\textnormal{IB}}(r) = I_{\textnormal{max}}$} \footnote{{Note that there are some scenarios where $r_{\textnormal{max}} \rightarrow \infty$ (see, e.g., \citep{du2017strong}). In these scenarios $\beta_{u,\textnormal{min}} = \lim_{r \rightarrow \infty} \left \{ f'_{\textnormal{IB}}(r) / u'(r) \right \} \geq 0$.} }. Then, the range of Lagrange multipliers that allow the exploration of the IB curve with the convex IB Lagrangian is $B_u = [\beta_{u,\textnormal{min}},\beta_{u,\textnormal{max}}]$, with 

\begin{equation}
	\beta_{u,\textnormal{min}} = \lim_{r \rightarrow r_{\textnormal{max}}^{-}} \left \{ \frac{f'_{\textnormal{IB}}(r)}{u'(r)}\right \} \quad \text{and} \quad \beta_{u,\textnormal{max}} = \lim_{r \rightarrow 0^+} \left \{ \frac{f'_{\textnormal{IB}}(r)}{u'(r)}\right \},
\end{equation}

where $f'_{\textnormal{IB}}(r)$ and $u'(r)$ are the derivatives of $f_{\textnormal{IB}}(I(X;T))$ and $u(I(X;T))$ w.r.t. $I(X;T)$ evaluated at $r$ respectively.

\label{cor:domain_conv_ib_lagrange}
\end{Corollary}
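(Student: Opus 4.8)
The plan is to derive Corollary \ref{cor:domain_conv_ib_lagrange} directly from the bijective mapping established in Proposition \ref{prop:bijective_mapping_beta_ixt}. Recall that proposition gives, for every point $(I(X;T), f_{\textnormal{IB}}(I(X;T)))$ in the increasing region of the IB curve, the associated Lagrange multiplier $\beta_u = f'_{\textnormal{IB}}(r)/u'(r)$ where $r = I(X;T)$. So the domain $B_u$ of usable multipliers is exactly the image of the function $g(r) := f'_{\textnormal{IB}}(r)/u'(r)$ as $r$ ranges over the relevant interval of compression values. The first step is to identify that interval: the IB curve is informative (i.e., $dI(T;Y)/dI(X;T) > 0$) precisely for $r \in (0, r_{\textnormal{max}})$, since for $r \geq r_{\textnormal{max}}$ we have $f_{\textnormal{IB}}(r) = I_{\textnormal{max}}$ constant and the derivative vanishes, while $r_{\textnormal{max}}$ is defined as the smallest $r$ achieving $I_{\textnormal{max}}$. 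Hence $B_u$ is the image $g\big((0, r_{\textnormal{max}})\big)$, possibly together with the endpoint value $\beta_u = 0$ corresponding to the flat region.

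The second step is to argue that $g$ is continuous on $(0, r_{\textnormal{max}})$ and — by Theorem \ref{th:ib_convex_lagrangians}, which asserts $\beta_u$ is strictly decreasing in $I(X;T)$ — strictly monotone decreasing there. A strictly decreasing continuous function on an open interval has as its image the open interval between its one-sided limits at the endpoints; taking the closure (since we also want to include the boundary points of the curve, the fully-compressed point $r \to 0^+$ and the knee $r \to r_{\textnormal{max}}^-$) gives $B_u = [\beta_{u,\textnormal{min}}, \beta_{u,\textnormal{max}}]$ with
\begin{equation}
\beta_{u,\textnormal{min}} = \lim_{r \rightarrow r_{\textnormal{max}}^{-}} \frac{f'_{\textnormal{IB}}(r)}{u'(r)}, \qquad \beta_{u,\textnormal{max}} = \lim_{r \rightarrow 0^{+}} \frac{f'_{\textnormal{IB}}(r)}{u'(r)},
\end{equation}
where the assignment of $\min$ to the $r_{\textnormal{max}}$ end and $\max$ to the $0$ end follows from monotonicity. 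One should also note these limits exist in $[0,\infty]$ by monotonicity (a monotone function always has one-sided limits), and remark that the $r_{\textnormal{max}} \to \infty$ case is handled by the footnote's convention, giving $\beta_{u,\textnormal{min}} = \lim_{r\to\infty} f'_{\textnormal{IB}}(r)/u'(r) \geq 0$; nonnegativity here is immediate since $f'_{\textnormal{IB}} \geq 0$ (the curve is nondecreasing) and $u' > 0$ (monotonically increasing $u$).

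The main obstacle I anticipate is not any deep computation but rather the careful bookkeeping at the boundary: one must be precise about whether the endpoints $\beta_{u,\textnormal{min}}$ and $\beta_{u,\textnormal{max}}$ are actually attained (the corollary states a closed interval). At $r \to 0^+$ and $r \to r_{\textnormal{max}}^-$ these are the corner points of the IB curve where the derivative $f'_{\textnormal{IB}}$ may be defined only as a one-sided derivative, so strictly speaking the supremum/infimum of $g$ over the open interval may or may not be a value of $g$; the statement is cleanest if we interpret $B_u$ as the closure of the set of multipliers that hit a strictly-informative point, or equivalently allow the boundary behaviour (full compression $I(X;T)=0$ and the knee $I(X;T)=r_{\textnormal{max}}$) to be included. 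I would make this interpretation explicit in one sentence and otherwise the proof is a short invocation of continuity plus the monotonicity already proven in Theorem \ref{th:ib_convex_lagrangians}.
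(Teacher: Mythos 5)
Your proposal is correct and follows essentially the same route as the paper: it treats $B_u$ as the image of $r \mapsto f'_{\textnormal{IB}}(r)/u'(r)$ over the strictly increasing branch $(0,r_{\textnormal{max}})$, and combines the bijection of Proposition \ref{prop:bijective_mapping_beta_ixt}, the strict monotonicity of $\beta_u$ from Theorem \ref{th:ib_convex_lagrangians}, and continuity of $f_{\textnormal{IB}}$ (from concavity) to read off $\beta_{u,\textnormal{min}}$ and $\beta_{u,\textnormal{max}}$ as the one-sided limits at $r_{\textnormal{max}}^-$ and $0^+$. The only cosmetic difference is that the paper anchors the flat region via a small lemma showing $\beta_u = 0$ recovers the point $(r_{\textnormal{max}}, I_{\textnormal{max}})$ and argues directly that $0 < \beta_u < \beta_{u,\textnormal{min}}$ cannot be guaranteed to explore the curve, whereas you handle the endpoints through the closure remark — an equivalent bit of bookkeeping.
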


\begin{Corollary}[Domain of convex IB Lagrange multiplier bound] 
\label{cor:bound_domain}
The range of the Lagrange multipliers that allow the exploration of the IB curve is contained by $[0,\beta_{\textnormal{u,top}}]$ which is also contained by $[0,\beta_{u,\textnormal{top}}^{+}]$, where

\begin{equation}
	\beta_{u,\textnormal{top}} =  \frac{(\inf_{\Omega_x \subset \mathcal{X}} \lbrace \beta_0(\Omega_x)  \rbrace)^{-1}}{\lim_{r \rightarrow 0^+} \left \{ u'(r)\right \}}, \text{ and } \beta_{u,\textnormal{top}}^{+} =  \frac{1}{\lim_{r \rightarrow 0^+} \left \{ u'(r)\right \}},
\end{equation}

$u'(r)$ is the derivative of $u(I(X;T))$ w.r.t. $I(X;T)$ evaluated at $r$, $\mathcal{X}$ is the set of possible realizations of $X$ and $\beta_0$\footnote{Note in \citep{wu2019learnability} they consider the dual problem (see Appendix \ref{app:alternatives_to_convex_ib_lagrangians}) so when they refer to $\beta^{-1}$ it translates to $\beta$ in this article.} and $\Omega_x$ are defined as in \citep{wu2019learnability}. That is, 
$B_u \subseteq [0,\beta_{\textnormal{u,top}}] \subseteq [0,\beta_{\textnormal{u,top}}^+]$. 

\end{Corollary}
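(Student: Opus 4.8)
The plan is to combine Corollary \ref{cor:domain_conv_ib_lagrange} with the known monotonicity properties of $f_{\textnormal{IB}}$ and the characterization of $\beta_0(\Omega_x)$ from \citet{wu2019learnability}. First I would recall from Corollary \ref{cor:domain_conv_ib_lagrange} that the exploration range is $B_u = [\beta_{u,\textnormal{min}},\beta_{u,\textnormal{max}}]$ with $\beta_{u,\textnormal{max}} = \lim_{r\to 0^+}\{f'_{\textnormal{IB}}(r)/u'(r)\}$; since $f_{\textnormal{IB}}$ is concave and nondecreasing (Lemma 5 of \citet{gilad2003information}), $f'_{\textnormal{IB}}(r)$ is largest as $r\to 0^+$, and in any case $\beta_{u,\textnormal{min}}\geq 0$ (using the footnote's observation that even when $r_{\textnormal{max}}\to\infty$ the limit is $\geq 0$). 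So it suffices to show $\beta_{u,\textnormal{max}} \leq \beta_{u,\textnormal{top}} \leq \beta_{u,\textnormal{top}}^+$, i.e. to control $\lim_{r\to 0^+} f'_{\textnormal{IB}}(r)$ and then divide through by $\lim_{r\to 0^+} u'(r)$, which factors out cleanly since $u$ is fixed and independent of the curve.

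The crux is the bound $\lim_{r\to 0^+} f'_{\textnormal{IB}}(r) \leq (\inf_{\Omega_x\subset\mathcal{X}}\{\beta_0(\Omega_x)\})^{-1}$. Here I would invoke the result of \citet{wu2019learnability}: they show (in their dual formulation, where their $\beta^{-1}$ corresponds to our $\beta$, per the footnote) that the IB Lagrangian yields only the trivial representation $p_{T|X}=p_T$ whenever $\beta$ exceeds a threshold determined by $\inf_{\Omega_x}\{\beta_0(\Omega_x)\}$, where $\beta_0(\Omega_x)$ is built from the conditional distributions restricted to a subset $\Omega_x$ of input realizations. Equivalently, the maximal slope of the IB curve at the origin — which by the Tishby characterization $\beta = df_{\textnormal{IB}}/dI(X;T)$ is exactly $\lim_{r\to 0^+}f'_{\textnormal{IB}}(r)$ — is at most $(\inf_{\Omega_x}\{\beta_0(\Omega_x)\})^{-1}$. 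Substituting this into $\beta_{u,\textnormal{max}}$ gives the first containment $B_u\subseteq[0,\beta_{u,\textnormal{top}}]$. The second containment $[0,\beta_{u,\textnormal{top}}]\subseteq[0,\beta_{u,\textnormal{top}}^+]$ is then immediate because $\beta_0(\Omega_x)\geq 1$ for every $\Omega_x$ (it is itself a ratio bounded below by $1$ in \citet{wu2019learnability}, reflecting that the DPI-driven trivialization already occurs at $\beta=1$), so $(\inf_{\Omega_x}\{\beta_0(\Omega_x)\})^{-1}\leq 1$ and dividing both sides by the same positive quantity $\lim_{r\to 0^+}\{u'(r)\}$ preserves the inequality.

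I expect the main obstacle to be the careful translation between the primal and dual conventions and the precise form of $\beta_0(\Omega_x)$: one must make sure that the ``first-kink'' slope of $f_{\textnormal{IB}}$ at the origin really is governed by the same quantity \citet{wu2019learnability} call $\beta_0$, including the correct handling of the infimum over subsets $\Omega_x$ (which captures that the steepest nontrivial direction may be supported on a sub-population of $X$), and that $u'$ may blow up or vanish at $0$ so the limits $\lim_{r\to 0^+}\{u'(r)\}$ should be interpreted in the extended reals, with the degenerate cases (limit $0$ or $\infty$) noted as giving vacuous or trivial bounds respectively. A secondary, more technical point is justifying the interchange of the limit $r\to 0^+$ with the ratio $f'_{\textnormal{IB}}/u'$, which is legitimate because both one-sided limits exist (monotone functions) and $u'$ is eventually positive near $0$ by strict convexity and monotonicity of $u$. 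Once those conventions are pinned down, the chain $B_u\subseteq[0,\beta_{u,\textnormal{top}}]\subseteq[0,\beta_{u,\textnormal{top}}^+]$ follows by direct substitution.
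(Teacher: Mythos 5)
Your proposal is correct and takes essentially the same route as the paper's proof: reduce via Corollary \ref{cor:domain_conv_ib_lagrange} to bounding the slope of $f_{\textnormal{IB}}$ at the origin, invoke Theorem 2 of \citet{wu2019learnability} to get $\lim_{r\to 0^+} f'_{\textnormal{IB}}(r) \leq (\inf_{\Omega_x \subset \mathcal{X}}\{\beta_0(\Omega_x)\})^{-1}$, and divide by $\lim_{r\to 0^+}\{u'(r)\}$. The only cosmetic difference is that the paper first derives the outer bound $[0,\beta_{u,\textnormal{top}}^{+}]$ directly from the DPI/non-negativity limits on the Pareto frontier and then tightens it, whereas you obtain it from $\beta_0(\Omega_x)\geq 1$; these are equivalent justifications.
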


Corollaries \ref{cor:domain_conv_ib_lagrange} and \ref{cor:bound_domain} allow us to reduce the range search for $\beta$ when we want to explore the IB curve. Practically, $\inf_{\Omega_x \subset \mathcal{X}} \lbrace \beta_0(\Omega_x)  \rbrace$ might be difficult to calculate so  \citet{wu2019learnability} derived an algorithm to approximate it. However, we still recommend setting the numerator to 1 for simplicity. The proofs for both corollaries are found in Appendices \ref{proof:domain_beta_f_known} and \ref{proof:bound_domain_beta}.

\section{Experimental support}
\label{sec:experiments}

In order to showcase our claims we use the MNIST \citep{lecun1998gradient} {and the TREC-6 \citep{li2002learning} datasets} . We modify the nonlinear-IB method \citep{kolchinsky2017nonlinear}, which is a neural network that minimizes the cross-entropy while also minimizing a differentiable kernel-based estimate of $I(X;T)$ \citep{kolchinsky2017estimating}. Then, we use this technique to maximize a lower bound on the convex IB Lagrangians by applying the functions $u$ to the $I(X;T)$ estimate.

\begin{figure}
\centering
\includegraphics[width=\linewidth]{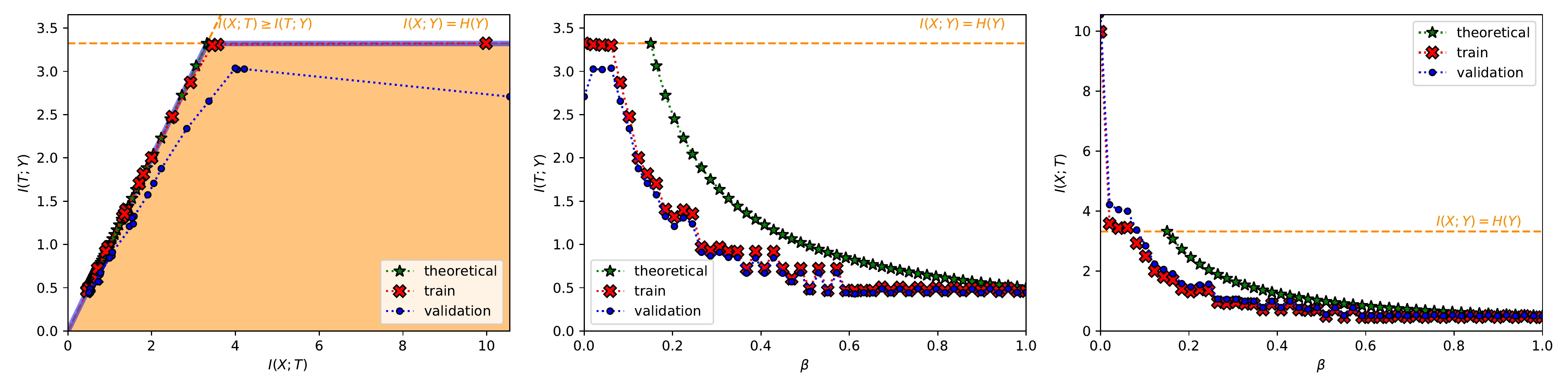}
\includegraphics[width=\linewidth]{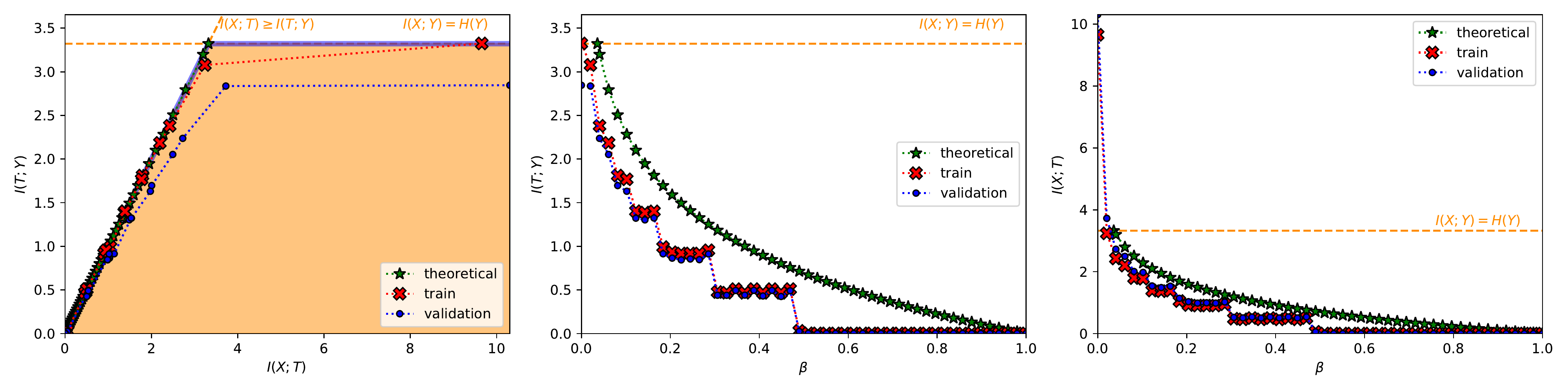}
\caption{The top row shows the results for the power IB Lagrangian  with $\alpha = 1$, and the bottom row for the exponential IB Lagrangian with $\eta = 1$, {both in the MNIST dataset}. In each row, from left to right it is shown (i) the information plane, where the region of possible solutions of the IB problem is shadowed in light orange and the information-theoretic limits are the dashed orange line; (ii) $I(T;Y)$ as a function of $\beta_u$; and (iii) the compression $I(X;T)$ as a function of $\beta_u$. In all plots{,} the red crosses joined by a dotted line represent the values computed with the training set, the blue dots the values computed with the validation set and the green stars the theoretical values computed as dictated by Proposition \ref{prop:bijective_mapping_beta_ixt}. Moreover, in all plots{,} it is indicated $I(X;Y) = H(Y) = \log_2(10)$ in a dashed, orange line. All values are shown in bits.} 
\label{fig:example_performance}
\end{figure}

\begin{figure}
\centering
\begin{subfigure}[b]{0.4\textwidth}
\includegraphics[width=\textwidth]{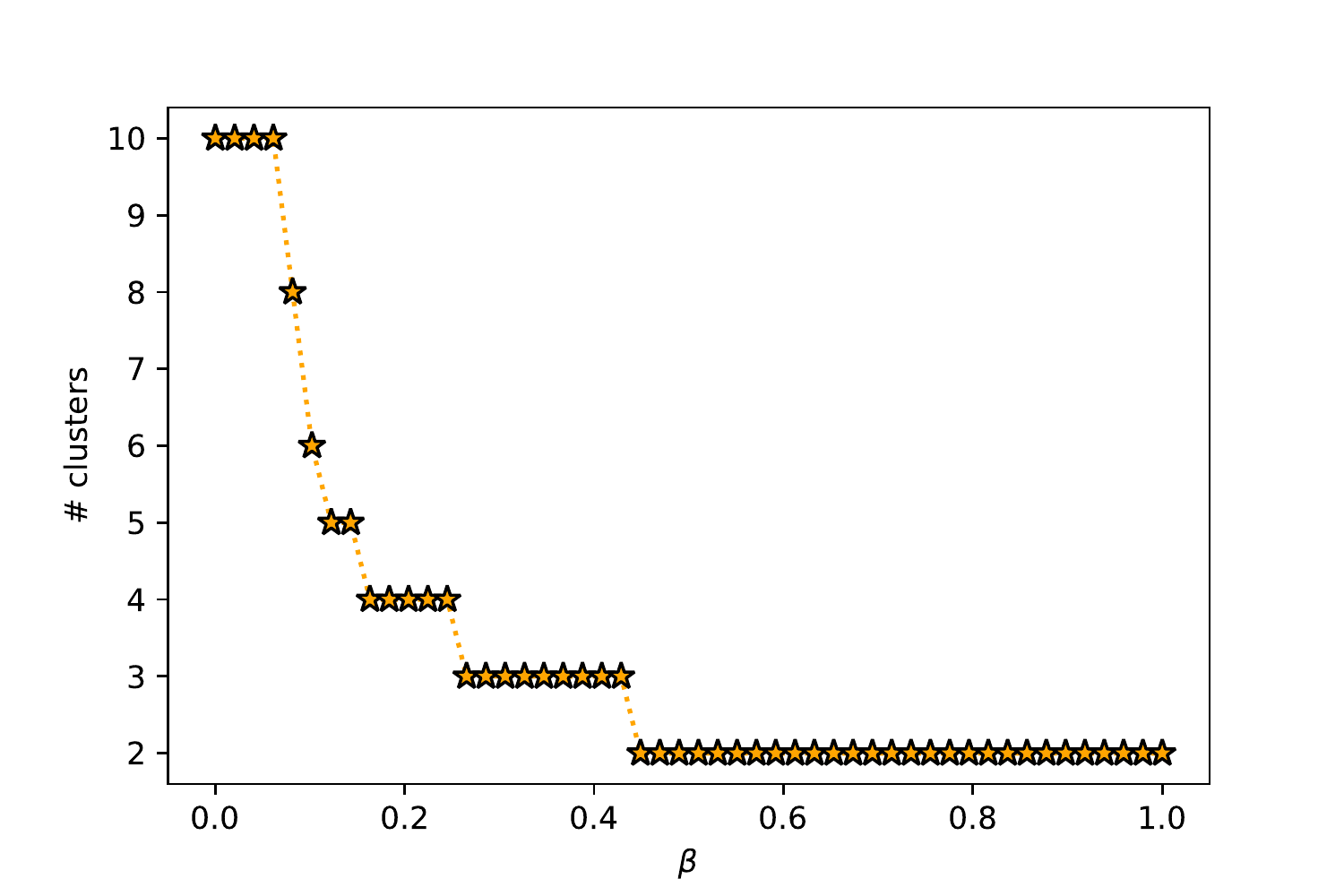}
\caption{Number of clusters for different $\beta_{\textnormal{pow}}$.}
\label{fig:example_number_clusters_power_alpha_1}
\end{subfigure}
\begin{subfigure}[b]{0.55\textwidth}
\includegraphics[width=\textwidth]{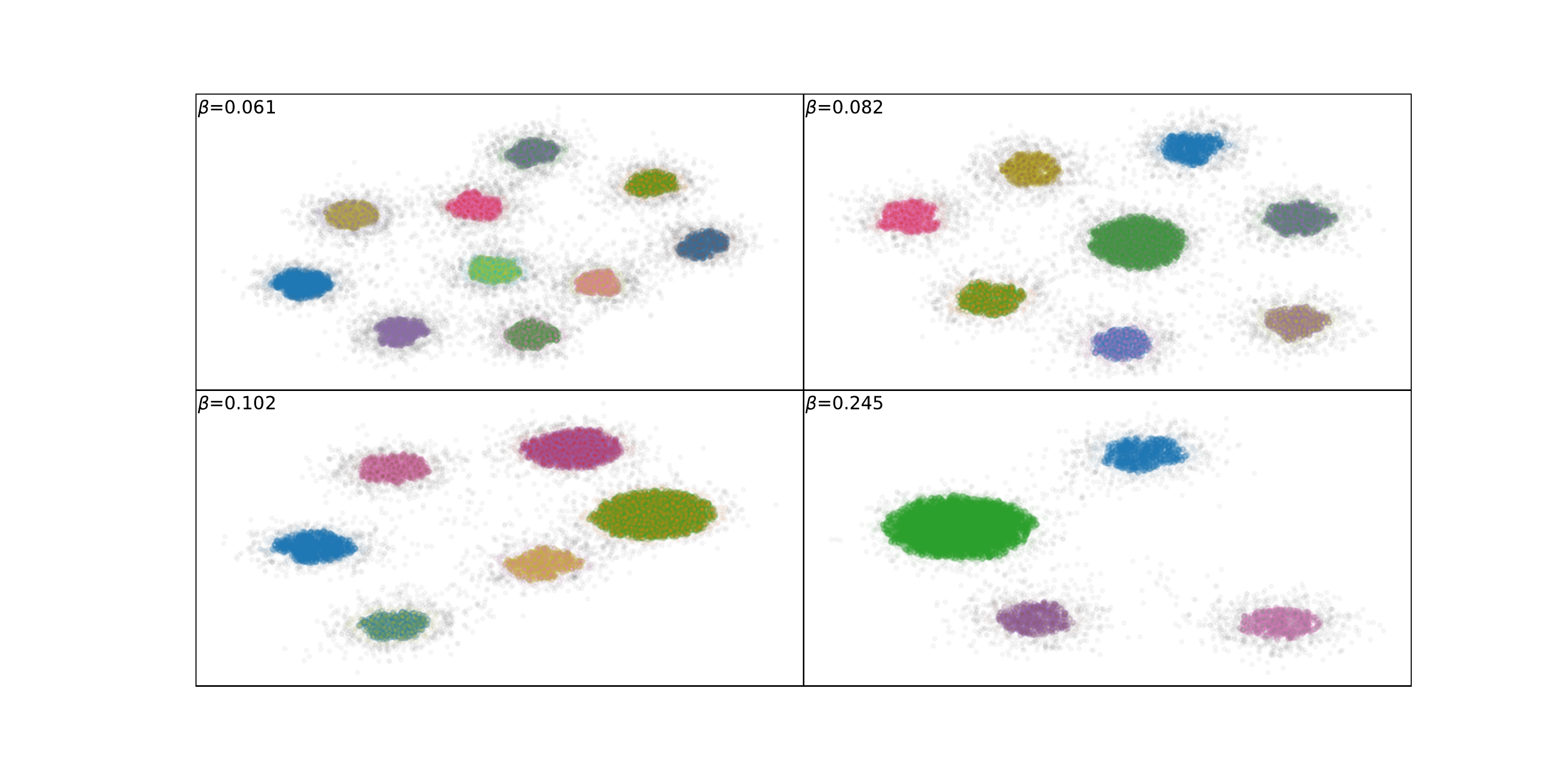}
\caption{Example of clusters for different $\beta_{\textnormal{pow}}$.}
\label{fig:example_clusters}
\end{subfigure}
\caption[Depiction of the clusterization behavior of the bottleneck variable for the power IB Lagrangian with $\alpha = 1$.]{Depiction of the clusterization behavior\footnotemark of the bottleneck variable for the power IB Lagrangian {in the MNIST dataset} with $\alpha = 1$. }
\label{fig:clusters_alpha_1}
\end{figure}
\footnotetext{The clusters were obtained using the DBSCAN algorithm \citep{ester1996density, schubert2017dbscan}.}

The network structure is the following: First, a stochastic encoder\footnote{The encoder needs to be stochastic to (i) ensure a finite and well-defined mutual information \citep{kolchinsky2018caveats,amjad2019learning} and (ii) make gradient-based optimization methods over the IB Lagrangian useful \citep{amjad2019learning}.} $T = f_{\textnormal{enc}}(X;\theta) + W$ with $p_W = \mathcal{N}(0,I_d)$ such that $T \in \mathbb{R}^d$, where $d$ is the dimension of the bottleneck variable. Second, a deterministic decoder $q_{\hat{Y}|T} = f_{\textnormal{dec}}(T;\theta)$. For the MNIST dataset both the encoder and the decoder are fully-connected networks, for a fair comparison with \citep{kolchinsky2017nonlinear}. {For the TREC-6 dataset, the encoder is a set of convolutions of word embeddings followed by a fully-connected network and the decoder is also a fully-connected network.} For further details about the experiment setup, additional results for different values of $\alpha$ and $\eta$ {and supplementary experimental results for different datasets and network architectures,} please refer to Appendix \ref{app:experimental_setup_details}.


In Figure \ref{fig:example_performance} we show our results for two particularizations of the convex IB Lagrangians:  
\begin{enumerate}
\item the \textbf{power IB Lagrangians}\footnote{Note when $\alpha = 1$ we have the squared IB functional from \citet{kolchinsky2018caveats}.}: $\mathcal{L}_{\textnormal{IB,pow}}(T;{\beta_\textnormal{pow}},\alpha) = I(T;Y) - \beta_{\textnormal{pow}} I(X;T)^{(1+\alpha)}$, $\alpha > 0$ .
\item the \textbf{exponential IB Lagrangians}: $\mathcal{L}_{\textnormal{IB,exp}}(T;{\beta_\textnormal{exp}},\eta) = I(T;Y) - \beta_{\textnormal{exp}} \exp(\eta I(X;T))$, $\eta > 0$.
\end{enumerate}

We can clearly see how both Lagrangians are able to explore the IB curve (first column from Figure \ref{fig:example_performance}) and how the theoretical performance trend of the Lagrangians matches the experimental results (second and third columns from Figure \ref{fig:example_performance}). There are small mismatches between the theoretical and experimental performance. This is because using the nonlinear-IB, as stated by \citet{kolchinsky2018caveats}, does not guarantee that we find optimal representations due to factors like: (i) {inaccurate} estimation of $I(X;T)$, (ii) restrictions on the structure of $T$, (iii) use of an estimation of the decoder instead of the real one and (iv) the typical non-convex optimization issues that arise with gradient-based methods. The main difference comes from the discontinuities in performance for increasing $\beta$, which cause is still unknown (cf. \citet{wu2019learnability}). It has been observed, however, that the bottleneck variable performs an intrinsic clusterization in classification tasks  (see, for instance \citep{kolchinsky2017nonlinear, kolchinsky2018caveats, alemi2018uncertainty} or Figure \ref{fig:example_clusters}). We observed how this clusterization matches with the quantized performance levels observed (e.g., compare Figure \ref{fig:example_number_clusters_power_alpha_1} with the top center graph in Figure \ref{fig:example_performance}); with maximum performance when the number of clusters is equal to the cardinality of $Y$ and reducing performance with a reduction of the number of clusters, which is in line with the concurrent work from \citet{wu2020phase}. We do not have a mathematical proof for the exact relationship between these two phenomena; however, we agree with \citet{wu2019learnability} that it is an interesting matter and hope this observation serves as motivation to derive new theory.

\begin{figure}
\centering
\includegraphics[width=0.6\textwidth]{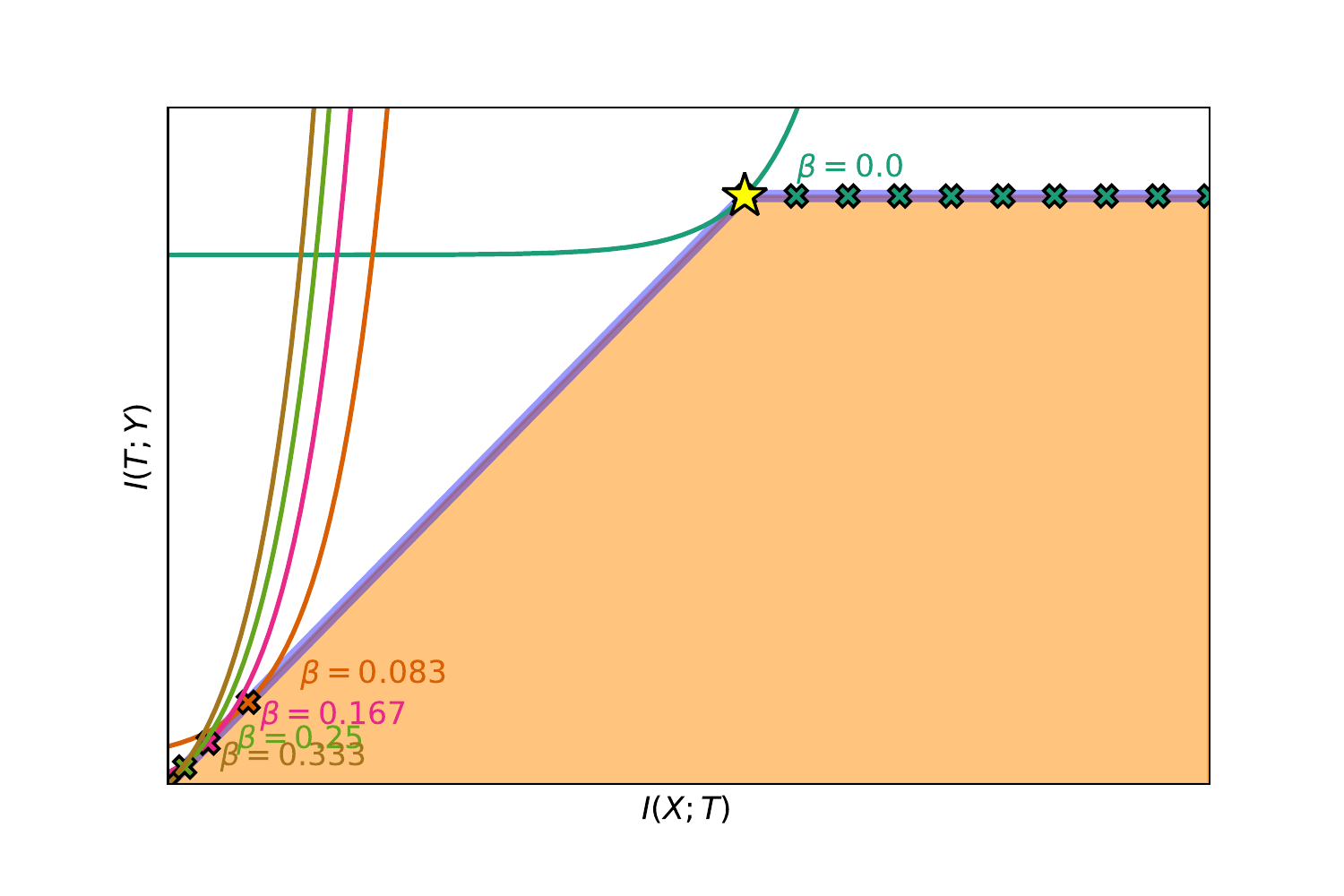}
\caption[Example of value convergence with the exponential IB Lagrangian with $\eta = 3$. We show the intersection of the isolines of $\mathcal{L}_{\textnormal{IB,exp}}(T;{\beta_{\textnormal{exp}}})$ for different $\beta_{\textnormal{exp}} \in B_{\textnormal{exp}}$ with the IB curve.]{Example of value convergence with the exponential IB Lagrangian with $\eta = 3$. We show the intersection of the isolines of $\mathcal{L}_{\textnormal{IB,exp}}(T;{\beta_{\textnormal{exp}}})$ for different $\beta_{\textnormal{exp}} \in B_{\textnormal{exp}} \approx [1.56\cdot10^{-5}, 3^{-1}]$ using Corollary \ref{cor:domain_conv_ib_lagrange}.}
\label{fig:value_convergence}
\end{figure}

\begin{figure}[t]
\centering
\includegraphics[width=\linewidth]{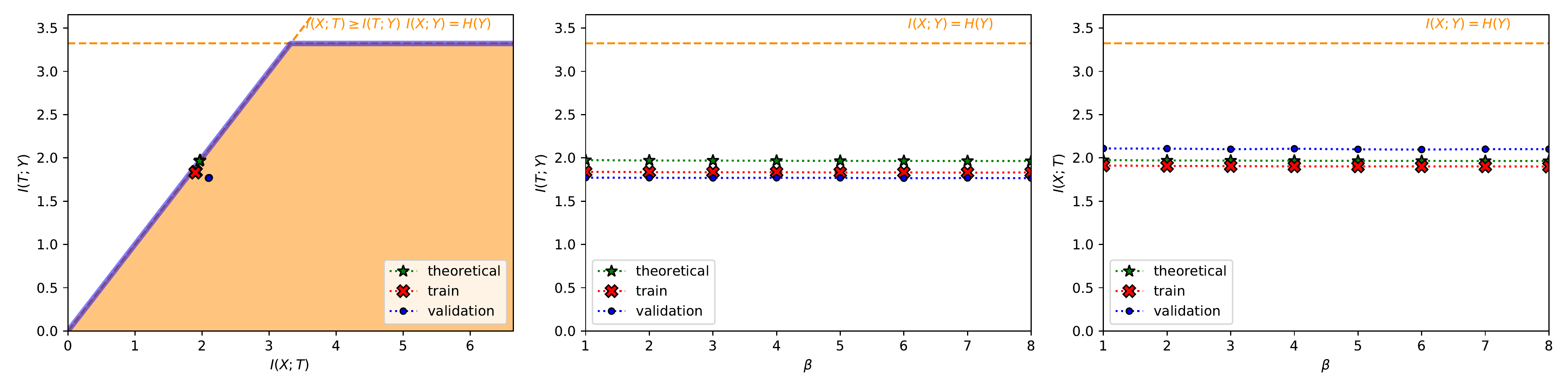}
\includegraphics[width=\linewidth]{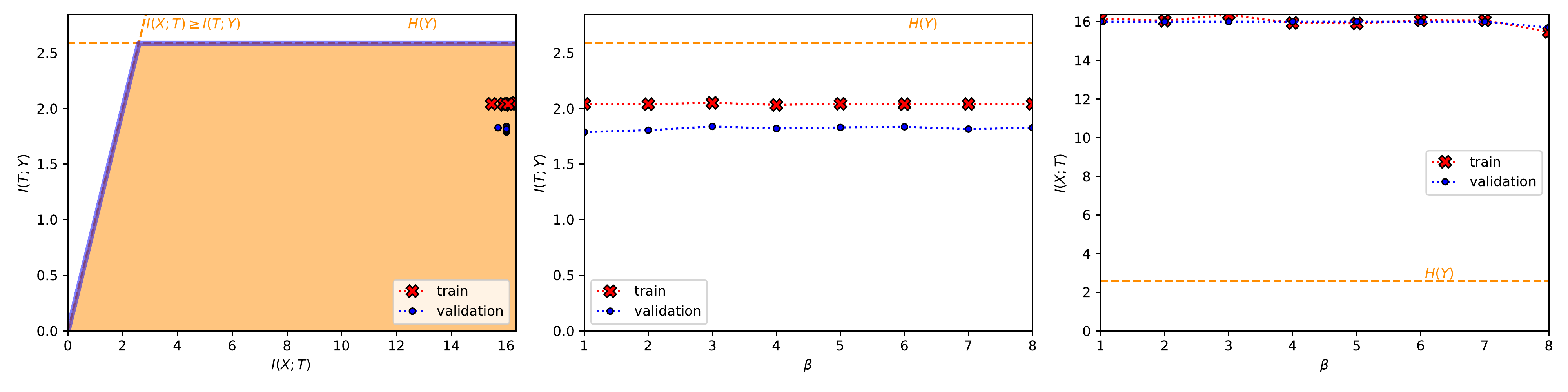}
\caption{{Example of value convergence exploitation with the shifted exponential Lagrangian with $\eta = 200$. In the top row, for the MNIST dataset aiming for a compression level $r^* = 2$ and in the bottom row, for the TREC-6 dataset aiming for a compression level of $r^* = 16$. In each row, from left to right it is shown (i) the information plane, where the region of possible solutions of the IB problem is shadowed in light orange and the information-theoretic limits are the dashed orange line; (ii) $I(T;Y)$ as a function of $\beta_u$; and (iii) the compression $I(X;T)$ as a function of $\beta_u$. In all plots, the red crosses joined by a dotted line represent the values computed with the training set, the blue dots the values computed with the validation set and the green stars the theoretical values computed as dictated by Proposition \ref{prop:bijective_mapping_beta_ixt}. Moreover, in all plots, it is indicated $H(Y)$ in a dashed, orange line. All values are shown in bits.} }
\label{fig:example_performance_value_convergence}
\end{figure}

{In practice, there are different criteria for choosing the function $u$. For instance, the exponential IB Lagrangian could be more desirable than the power IB Lagrangian when we want to draw the IB curve since it has a finite range of $\beta_u$. This is $B_u = [(\eta \exp(\eta I_{\textnormal{max}}))^{-1}, \eta^{-1}]$ for the exponential IB Lagrangian vs. $B_u = [((1+\alpha)I_{\textnormal{max}}^\alpha)^{-1}, \infty)$ for the power IB Lagrangian. Furthermore, there is a trade-off between (i) how much the selected $u$ function {resembles} a linear function in our region of interest; e.g., with $\alpha$ or $\eta$ close to zero, since it will suffer from similar problems as the original IB Lagrangian; and (ii) how fast it grows in our region of interest; e.g., higher values of $\alpha$ or $\eta$, since it will suffer from value convergence; i.e., optimizing for separate values of $\beta_u$ will achieve similar levels of performance (Figure \ref{fig:value_convergence}). Please, refer to Appendix \ref{app:guidelines_on_choosing_proper_h} for a more thorough explanation of {these two phenomena}.

{Particularly, the value convergence phenomenon can be exploited in order to approximately obtain a particular level of compression $r^*$, both for known and unkown IB curves (see Appendix \ref{app:guidelines_on_choosing_proper_h} or the example in Figure \ref{fig:example_performance_value_convergence}). For known IB curves, we also know the achieved predictability $I(T;Y)$ since it is the same as the level of compression $I(X;T)$. For this exploitation, we can employ the shifted version of the exponential IB Lagrangian (which is also a particular case of the convex IB Lagrangian):
\begin{itemize}
\item the \textbf{shifted exponential IB Lagrangians}: $$\mathcal{L}_{\textnormal{IB,sh-exp}}(T;{\beta_\textnormal{sh-exp}},\eta,r^*) = I(T;Y) - \beta_{\textnormal{sh-exp}} \exp(\eta (I(X;T)-r^*)) \textnormal{ , } \eta > 0 \textnormal{ , } r^* \in [0,\infty).$$
\end{itemize}
For this Lagrangian, the optimization procedure converges to representations with approximately the desired compression level $r^*$ if the hyperparameter $\eta$ is set to a large value.}

{In Figure \ref{fig:example_performance_value_convergence} we show the results of aiming for a compression level of $r^* = 2$ bits in the MNIST dataset and of $r^* =16$ bits in the TREC-6 dataset, both with $\eta = 200$. We can see how for different values of $\beta_{\textnormal{sh-exp}}$ we can obtain the same desired compression level, which makes this method stable to variations in the Lagrange multiplier selection.}

To sum up, in order to achieve a desired level of performance with the convex IB Lagrangian as an objective one should:

\begin{enumerate}
\item In a deterministic or close to {a} deterministic setting (see $\epsilon$-deterministic definition in \citet{kolchinsky2018caveats}): Use the adequate $\beta_u$ for that performance using Proposition \ref{prop:bijective_mapping_beta_ixt}. Then if the {performance} is lower than desired, i.e., we are placed in the wrong performance plateau, gradually reduce the value of $\beta_u$ until reaching the previous performance plateau. {Alternatively, exploit the value convergence phenomenon with, for instance, the shifted exponential IB Lagrangian}.
\item In a stochastic setting: {Exploit the value convergence phenomenon with, for instance, the shifted exponential IB Lagrangian. Alternatively,} draw the IB curve with multiple values of $\beta_u$ on the range defined by Corollary \ref{cor:bound_domain} and select the representations that best fit their interests.
\end{enumerate}

\section{Conclusion}
\label{sec:conclusion}

The information bottleneck is a widely used and studied technique. However, it is known that the IB Lagrangian cannot be used to achieve varying levels of performance in deterministic scenarios. Moreover, in order to achieve a particular level of performance multiple optimizations with different Lagrange multipliers must be done to draw the IB curve and select the best traded-off representation. 

In this article we introduced a general family of Lagrangians which allow to (i) achieve varying levels of performance in any scenario, and (ii) pinpoint a specific Lagrange multiplier $\beta_u$ to optimize for a specific performance level in known IB curve scenarios; e.g., deterministic. Furthermore, we showed the $\beta_u$ domain when the IB curve is known and a $\beta_u$ domain bound for exploring the IB curve when it is {unknown}. This way we can reduce and/or avoid multiple optimizations and, hence, reduce the computational effort for finding well traded-off representations. {Moreover, (iii) when the IB curve is not known, we saw how we can exploit the value convergence issue of the convex IB Lagrangian to approximately obtain a specific compression level for both known and unknown IB curve shapes.} Finally, (iv) we provided some insight to the discontinuities on the performance levels w.r.t. the {Lagrange} multipliers by connecting those with the intrinsic clusterization of the bottleneck variable.



\bibliography{references}

\begin{thebibliography}{52}
\providecommand{\natexlab}[1]{#1}
\providecommand{\EM}{\em}
\providecommand{\RNtxt}{\relax}
\RNtxt{}

\bibitem[Achille, Soatto(2018{\natexlab{a}})A.~Achille,
  S.~Soatto]{achille2018emergence}
{\EM Achille Alessandro, Soatto Stefano}.
\newblock Emergence of invariance and disentanglement in deep representations
  \allowbreak\newblock// The Journal of Machine Learning Research.
  2018{\natexlab{a}}. 19, 1. 1947--1980.

\bibitem[Achille, Soatto(2018{\natexlab{b}})A.~Achille,
  S.~Soatto]{achille2018information}
{\EM Achille Alessandro, Soatto Stefano}.
\newblock Information dropout: Learning optimal representations through noisy
  computation \allowbreak\newblock// IEEE transactions on pattern analysis and
  machine intelligence. 2018{\natexlab{b}}. 40, 12. 2897--2905.

\bibitem[Alemi et~al.(2018)A.~A. Alemi, I.~Fischer, J.~V.
  Dillon]{alemi2018uncertainty}
{\EM Alemi Alexander~A, Fischer Ian, Dillon Joshua~V}.
\newblock Uncertainty in the variational information bottleneck
  \allowbreak\newblock// arXiv preprint arXiv:1807.00906. 2018.

\bibitem[Alemi et~al.(2016)A.~A. Alemi, I.~Fischer, J.~V. Dillon,
  K.~Murphy]{alemi2016deep}
{\EM Alemi Alexander~A, Fischer Ian, Dillon Joshua~V, Murphy Kevin}.
\newblock Deep variational information bottleneck \allowbreak\newblock// arXiv
  preprint arXiv:1612.00410. 2016.

\bibitem[Amjad, Geiger(2019)R.~A. Amjad, B.~C. Geiger]{amjad2019learning}
{\EM Amjad Rana~Ali, Geiger Bernhard~Claus}.
\newblock Learning representations for neural network-based classification
  using the information bottleneck principle \allowbreak\newblock// IEEE
  Transactions on Pattern Analysis and Machine Intelligence. 2019.

\bibitem[Bishop(2006)C.~M. Bishop]{bishop2006pattern}
{\EM Bishop Christopher~M}.
\newblock Pattern recognition and machine learning. 2006.

\bibitem[Chalk et~al.(2016)M.~Chalk, O.~Marre, G.~Tkacik]{chalk2016relevant}
{\EM Chalk Matthew, Marre Olivier, Tkacik Gasper}.
\newblock Relevant sparse codes with variational information bottleneck
  \allowbreak\newblock// Advances in Neural Information Processing Systems.
  2016.  1957--1965.

\bibitem[Chalk et~al.(2018)M.~Chalk, O.~Marre,
  G.~Tka{\v{c}}ik]{chalk2018toward}
{\EM Chalk Matthew, Marre Olivier, Tka{\v{c}}ik Ga{\v{s}}per}.
\newblock Toward a unified theory of efficient, predictive, and sparse coding
  \allowbreak\newblock// Proceedings of the National Academy of Sciences. 2018.
  115, 1. 186--191.

\bibitem[Courcoubetis(2003)C.~Courcoubetis]{courcoubetis2003pricing}
{\EM Courcoubetis Costas}.
\newblock Pricing Communication Networks Economics, Technology and Modelling.
  2003.

\bibitem[Cover, Thomas(2012)T.~M. Cover, J.~A. Thomas]{cover2012elements}
{\EM Cover Thomas~M, Thomas Joy~A}.
\newblock Elements of information theory. 2012.

\bibitem[Ester et~al.(1996)M.~Ester, H.-P. Kriegel, J.~Sander, X.~Xu,
  et~al.]{ester1996density}
{\EM Ester Martin, Kriegel Hans-Peter, Sander J{\"o}rg, Xu~Xiaowei, others }.
\newblock A density-based algorithm for discovering clusters in large spatial
  databases with noise. \allowbreak\newblock// KDD.  34. 1996.  226--231.

\bibitem[Gilad-Bachrach et~al.(2003)R.~Gilad-Bachrach, A.~Navot,
  N.~Tishby]{gilad2003information}
{\EM Gilad-Bachrach Ran, Navot Amir, Tishby Naftali}.
\newblock An information theoretic tradeoff between complexity and accuracy
  \allowbreak\newblock// Learning Theory and Kernel Machines. 2003.  595--609.

\bibitem[Glorot, Bengio(2010)X.~Glorot, Y.~Bengio]{glorot2010understanding}
{\EM Glorot Xavier, Bengio Yoshua}.
\newblock Understanding the difficulty of training deep feedforward neural
  networks \allowbreak\newblock// Proceedings of the thirteenth international
  conference on artificial intelligence and statistics. 2010.  249--256.

\bibitem[Goyal et~al.(2019)A.~Goyal, R.~Islam, D.~Strouse, Z.~Ahmed,
  M.~Botvinick, H.~Larochelle, S.~Levine, Y.~Bengio]{goyal2019infobot}
{\EM Goyal Anirudh, Islam Riashat, Strouse Daniel, Ahmed Zafarali, Botvinick
  Matthew, Larochelle Hugo, Levine Sergey, Bengio Yoshua}.
\newblock Infobot: Transfer and exploration via the information bottleneck
  \allowbreak\newblock// arXiv preprint arXiv:1901.10902. 2019.

\bibitem[Hassanpour et~al.(2018)S.~Hassanpour, D.~W{\"u}bben,
  A.~Dekorsy]{hassanpour2018equivalence}
{\EM Hassanpour Shayan, W{\"u}bben Dirk, Dekorsy Armin}.
\newblock On the equivalence of double maxima and KL-means for information
  bottleneck-based source coding \allowbreak\newblock// 2018 IEEE Wireless
  Communications and Networking Conference (WCNC). 2018.  1--6.

\bibitem[Kingma, Ba(2014)D.~P. Kingma, J.~Ba]{kingma2014adam}
{\EM Kingma Diederik~P, Ba~Jimmy}.
\newblock Adam: A method for stochastic optimization \allowbreak\newblock//
  arXiv preprint arXiv:1412.6980. 2014.

\bibitem[Kolchinsky, Tracey(2017)A.~Kolchinsky,
  B.~Tracey]{kolchinsky2017estimating}
{\EM Kolchinsky Artemy, Tracey Brendan}.
\newblock Estimating mixture entropy with pairwise distances
  \allowbreak\newblock// Entropy. 2017. 19, 7. 361.

\bibitem[Kolchinsky et~al.(2019{\natexlab{a}})A.~Kolchinsky, B.~D. Tracey,
  S.~Van~Kuyk]{kolchinsky2018caveats}
{\EM Kolchinsky Artemy, Tracey Brendan~D, Van~Kuyk Steven}.
\newblock Caveats for information bottleneck in deterministic scenarios
  \allowbreak\newblock// ICLR. 2019{\natexlab{a}}.

\bibitem[Kolchinsky et~al.(2019{\natexlab{b}})A.~Kolchinsky, B.~D. Tracey,
  D.~H. Wolpert]{kolchinsky2017nonlinear}
{\EM Kolchinsky Artemy, Tracey Brendan~D, Wolpert David~H}.
\newblock Nonlinear information bottleneck \allowbreak\newblock// Entropy.
  2019{\natexlab{b}}. 21, 12. 1181.

\bibitem[Krizhevsky et~al.(2012)A.~Krizhevsky, I.~Sutskever, G.~E.
  Hinton]{krizhevsky2012imagenet}
{\EM Krizhevsky Alex, Sutskever Ilya, Hinton Geoffrey~E}.
\newblock Imagenet classification with deep convolutional neural networks
  \allowbreak\newblock// Advances in neural information processing systems.
  2012.  1097--1105.

\bibitem[LeCun et~al.(1998)Y.~LeCun, L.~Bottou, Y.~Bengio, P.~Haffner,
  et~al.]{lecun1998gradient}
{\EM LeCun Yann, Bottou L{\'e}on, Bengio Yoshua, Haffner Patrick, others }.
\newblock Gradient-based learning applied to document recognition
  \allowbreak\newblock// Proceedings of the IEEE. 1998. 86, 11. 2278--2324.

\bibitem[Li, Eisner(2019)X.~L. Li, J.~Eisner]{li2019specializing}
{\EM Li~Xiang~Lisa, Eisner Jason}.
\newblock Specializing Word Embeddings (for Parsing) by Information Bottleneck
  \allowbreak\newblock// Proceedings of the 2019 Conference on Empirical
  Methods in Natural Language Processing and the 9th International Joint
  Conference on Natural Language Processing (EMNLP-IJCNLP). 2019.  2744--2754.

\bibitem[Li, Roth(2002)X.~Li, D.~Roth]{li2002learning}
{\EM Li~Xin, Roth Dan}.
\newblock Learning question classifiers \allowbreak\newblock// Proceedings of
  the 19th international conference on Computational linguistics-Volume 1.
  2002.  1--7.

\bibitem[Nazer et~al.(2017)B.~Nazer, O.~Ordentlich,
  Y.~Polyanskiy]{nazer2017information}
{\EM Nazer Bobak, Ordentlich Or, Polyanskiy Yury}.
\newblock Information-distilling quantizers \allowbreak\newblock// 2017 IEEE
  International Symposium on Information Theory (ISIT). 2017.  96--100.

\bibitem[Pace, Barry(1997)R.~K. Pace, R.~Barry]{pace1997sparse}
{\EM Pace R~Kelley, Barry Ronald}.
\newblock Sparse spatial autoregressions \allowbreak\newblock// Statistics \&
  Probability Letters. 1997. 33, 3. 291--297.

\bibitem[Paszke et~al.(2017)A.~Paszke, S.~Gross, S.~Chintala, G.~Chanan,
  E.~Yang, Z.~DeVito, Z.~Lin, A.~Desmaison, L.~Antiga,
  A.~Lerer]{paszke2017automatic}
{\EM Paszke Adam, Gross Sam, Chintala Soumith, Chanan Gregory, Yang Edward,
  DeVito Zachary, Lin Zeming, Desmaison Alban, Antiga Luca, Lerer Adam}.
\newblock Automatic differentiation in pytorch \allowbreak\newblock// NIPS
  Autodiff Workshop. 2017.

\bibitem[Pedregosa et~al.(2011)F.~Pedregosa, G.~Varoquaux, A.~Gramfort,
  V.~Michel, B.~Thirion, O.~Grisel, M.~Blondel, P.~Prettenhofer, R.~Weiss,
  V.~Dubourg, et~al.]{pedregosa2011scikit}
{\EM Pedregosa Fabian, Varoquaux Ga{\"e}l, Gramfort Alexandre, Michel Vincent,
  Thirion Bertrand, Grisel Olivier, Blondel Mathieu, Prettenhofer Peter, Weiss
  Ron, Dubourg Vincent, others }.
\newblock Scikit-learn: Machine learning in Python \allowbreak\newblock//
  Journal of machine learning research. 2011. 12, Oct. 2825--2830.

\bibitem[Peng et~al.(2018)X.~B. Peng, A.~Kanazawa, S.~Toyer, P.~Abbeel,
  S.~Levine]{peng2018variational}
{\EM Peng Xue~Bin, Kanazawa Angjoo, Toyer Sam, Abbeel Pieter, Levine Sergey}.
\newblock Variational Discriminator Bottleneck: Improving Imitation Learning,
  Inverse RL, and GANs by Constraining Information Flow \allowbreak\newblock//
  ICLR. 2018.

\bibitem[Pennington et~al.(2014)J.~Pennington, R.~Socher,
  C.~Manning]{pennington2014glove}
{\EM Pennington Jeffrey, Socher Richard, Manning Christopher}.
\newblock Glove: Global vectors for word representation \allowbreak\newblock//
  Proceedings of the 2014 conference on empirical methods in natural language
  processing (EMNLP). 2014.  1532--1543.

\bibitem[Pin~Calmon du et~al.(2017)F.~du~Pin~Calmon, Y.~Polyanskiy,
  Y.~Wu]{du2017strong}
{\EM Pin~Calmon Flavio du, Polyanskiy Yury, Wu~Yihong}.
\newblock Strong data processing inequalities for input constrained additive
  noise channels \allowbreak\newblock// IEEE Transactions on Information
  Theory. 2017. 64, 3. 1879--1892.

\bibitem[Schubert et~al.(2017)E.~Schubert, J.~Sander, M.~Ester, H.~P. Kriegel,
  X.~Xu]{schubert2017dbscan}
{\EM Schubert Erich, Sander J{\"o}rg, Ester Martin, Kriegel Hans~Peter,
  Xu~Xiaowei}.
\newblock DBSCAN revisited, revisited: why and how you should (still) use
  DBSCAN \allowbreak\newblock// ACM Transactions on Database Systems (TODS).
  2017. 42, 3. 19.

\bibitem[Schulz et~al.(2020)K.~Schulz, L.~Sixt, F.~Tombari,
  T.~Landgraf]{schulz2020restricting}
{\EM Schulz Karl, Sixt Leon, Tombari Federico, Landgraf Tim}.
\newblock Restricting the Flow: Information Bottlenecks for Attribution
  \allowbreak\newblock// International Conference on Learning Representations.
  2020.

\bibitem[Shamir et~al.(2010)O.~Shamir, S.~Sabato,
  N.~Tishby]{shamir2010learning}
{\EM Shamir Ohad, Sabato Sivan, Tishby Naftali}.
\newblock Learning and generalization with the information bottleneck
  \allowbreak\newblock// Theoretical Computer Science. 2010. 411, 29-30.
  2696--2711.

\bibitem[Sharma et~al.(2020)A.~Sharma, S.~Gu, S.~Levine, V.~Kumar,
  K.~Hausman]{sharma2020dynamicsaware}
{\EM Sharma Archit, Gu~Shixiang, Levine Sergey, Kumar Vikash, Hausman Karol}.
\newblock Dynamics-Aware Unsupervised Skill Discovery \allowbreak\newblock//
  ICLR. 2020.

\bibitem[Shore, Johnson(1981)J.~Shore, R.~Johnson]{shore1981properties}
{\EM Shore John, Johnson Rodney}.
\newblock Properties of cross-entropy minimization \allowbreak\newblock// IEEE
  Transactions on Information Theory. 1981. 27, 4. 472--482.

\bibitem[Shore, Gray(1982)J.~E. Shore, R.~M. Gray]{shore1982minimum}
{\EM Shore John~E, Gray Robert~M}.
\newblock Minimum cross-entropy pattern classification and cluster analysis
  \allowbreak\newblock// IEEE Transactions on Pattern Analysis and Machine
  Intelligence. 1982. 1. 11--17.

\bibitem[Slonim et~al.(2005)N.~Slonim, G.~S. Atwal, G.~Tka{\v{c}}ik,
  W.~Bialek]{slonim2005information}
{\EM Slonim Noam, Atwal Gurinder~Singh, Tka{\v{c}}ik Ga{\v{s}}per, Bialek
  William}.
\newblock Information-based clustering \allowbreak\newblock// Proceedings of
  the National Academy of Sciences. 2005. 102, 51. 18297--18302.

\bibitem[Slonim et~al.(2002)N.~Slonim, N.~Friedman,
  N.~Tishby]{slonim2002unsupervised}
{\EM Slonim Noam, Friedman Nir, Tishby Naftali}.
\newblock Unsupervised document classification using sequential information
  maximization \allowbreak\newblock// Proceedings of the 25th annual
  international ACM SIGIR conference on Research and development in information
  retrieval. 2002.  129--136.

\bibitem[Slonim, Tishby(2000{\natexlab{a}})N.~Slonim,
  N.~Tishby]{slonim2000agglomerative}
{\EM Slonim Noam, Tishby Naftali}.
\newblock Agglomerative information bottleneck \allowbreak\newblock// Advances
  in neural information processing systems. 2000{\natexlab{a}}.  617--623.

\bibitem[Slonim, Tishby(2000{\natexlab{b}})N.~Slonim,
  N.~Tishby]{slonim2000document}
{\EM Slonim Noam, Tishby Naftali}.
\newblock Document clustering using word clusters via the information
  bottleneck method \allowbreak\newblock// Proceedings of the 23rd annual
  international ACM SIGIR conference on Research and development in information
  retrieval. 2000{\natexlab{b}}.  208--215.

\bibitem[Strouse, Schwab(2017)D.~Strouse, D.~J.
  Schwab]{strouse2017deterministic}
{\EM Strouse DJ, Schwab David~J}.
\newblock The deterministic information bottleneck \allowbreak\newblock//
  Neural computation. 2017. 29, 6. 1611--1630.

\bibitem[Teahan(2000)W.~J. Teahan]{teahan2000text}
{\EM Teahan William~John}.
\newblock Text classification and segmentation using minimum cross-entropy
  \allowbreak\newblock// Content-Based Multimedia Information Access-Volume 2.
  2000.  943--961.

\bibitem[Tishby et~al.(2000)N.~Tishby, F.~C. Pereira,
  W.~Bialek]{tishby2000information}
{\EM Tishby Naftali, Pereira Fernando~C, Bialek William}.
\newblock The information bottleneck method \allowbreak\newblock// arXiv
  preprint physics/0004057. 2000.

\bibitem[Tishby, Slonim(2001)N.~Tishby, N.~Slonim]{tishby2001data}
{\EM Tishby Naftali, Slonim Noam}.
\newblock Data clustering by markovian relaxation and the information
  bottleneck method \allowbreak\newblock// Advances in neural information
  processing systems. 2001.  640--646.

\bibitem[Vera et~al.(2018)M.~Vera, P.~Piantanida, L.~R. Vega]{vera2018role}
{\EM Vera Matias, Piantanida Pablo, Vega Leonardo~Rey}.
\newblock The role of the information bottleneck in representation learning
  \allowbreak\newblock// 2018 IEEE International Symposium on Information
  Theory (ISIT). 2018.  1580--1584.

\bibitem[Voorhees, Tice(2000)E.~M. Voorhees, D.~M. Tice]{voorhees2000building}
{\EM Voorhees Ellen~M, Tice Dawn~M}.
\newblock Building a question answering test collection \allowbreak\newblock//
  Proceedings of the 23rd annual international ACM SIGIR conference on Research
  and development in information retrieval. 2000.  200--207.

\bibitem[Wu, Fischer(2020)T.~Wu, I.~Fischer]{wu2020phase}
{\EM Wu~Tailin, Fischer Ian}.
\newblock Phase Transitions for the Information Bottleneck in Representation
  Learning \allowbreak\newblock// ICLR. 2020.

\bibitem[Wu et~al.(2019)T.~Wu, I.~Fischer, I.~Chuang,
  M.~Tegmark]{wu2019learnability}
{\EM Wu~Tailin, Fischer Ian, Chuang Isaac, Tegmark Max}.
\newblock Learnability for the Information Bottleneck \allowbreak\newblock//
  ICLR. 2019.

\bibitem[Xiao et~al.(2017)H.~Xiao, K.~Rasul, R.~Vollgraf]{xiao2017fashion}
{\EM Xiao Han, Rasul Kashif, Vollgraf Roland}.
\newblock Fashion-MNIST: a Novel Image Dataset for Benchmarking Machine
  Learning Algorithms \allowbreak\newblock// CoRR. 2017. abs/1708.07747.

\bibitem[Xu, Raginsky(2017)A.~Xu, M.~Raginsky]{xu2017information}
{\EM Xu~Aolin, Raginsky Maxim}.
\newblock Information-theoretic analysis of generalization capability of
  learning algorithms \allowbreak\newblock// Advances in Neural Information
  Processing Systems. 2017.  2524--2533.

\bibitem[Yingjun, Xinwen(2019)P.~Yingjun, H.~Xinwen]{yingjun2019learning}
{\EM Yingjun Pei, Xinwen Hou}.
\newblock Learning Representations in Reinforcement Learning:An Information
  Bottleneck Approach. 2019.

\bibitem[Zaslavsky et~al.(2018)N.~Zaslavsky, C.~Kemp, T.~Regier,
  N.~Tishby]{zaslavsky2018efficient}
{\EM Zaslavsky Noga, Kemp Charles, Regier Terry, Tishby Naftali}.
\newblock Efficient compression in color naming and its evolution
  \allowbreak\newblock// Proceedings of the National Academy of Sciences. 2018.
  115, 31. 7937--7942.

\end{thebibliography}

\appendix
\section{Proof of Proposition \ref{prop:min_jce_max_ity}}
\label{app:min_jce}

\begin{proof}
We can easily prove this statement by finding $I(T;Y)$ is lower bounded by the $\gamma J_{\textnormal{CE}}(p_{(X,Y)};\theta) + C$ where $\gamma < 0$ and $C$ does not depend on $T$. This way maximizing such lower bound would be equivalent to minimizing $J_{\textnormal{CE}}(p_{(X,Y)};\theta)$ and, moreover, it would imply maximizing $I(T;Y)$.

We can find such an expression as follows:

\begin{align}
	I(T;Y) &= \mathbb{E}_{(y,t) \sim q_{Y|T}q_T} \left[ \log \left(\frac{q_{Y|T=t}(y|t;\theta)}{p_Y(y)}\right) \right] = H(Y) + \mathbb{E}_{(y,t) \sim q_{Y|T}q_T} \left[ \log(q_{Y|T=t}(y|t;\theta)) \right] \label{eq:prop1_def}\\
	&= H(Y) + \mathbb{E}_{t \sim q_T} \left[ D_{\textnormal{KL}}\left(q_{Y|T=t} || q_{\hat{Y}|T=t}\right) \right] + \mathbb{E}_{(y,t) \sim q_{Y|T}q_T} \left[ \log(q_{\hat{Y}|T}(y|t;\theta)) \right] \label{eq:prop1_mult_div}\\
	&\geq H(Y) +  \mathbb{E}_{(x,y,t) \sim q_{Y|T}q_{T|X}p_X} \left[ \log(q_{\hat{Y}|T=t}(y|t,\theta)) \right] = H(Y) - \mathbb{E}_{(x,t) \sim q_{T|X}p_X} \left[ \mathbb{C}(q_{Y|T=t}||q_{\hat{Y}|T=t}) \right] \label{eq:prop1_kldiv_pos}\\
	&= H(Y) - J_{\textnormal{CE}}(p_{(X,Y)};\theta).
\end{align}

Here, in Equation (\ref{eq:prop1_def}) we just used the definition of the mutual information between two random variables, and then we decoupled it using the definition of the entropy of a variable\footnote{Note we used $H(\cdot)$ which is usually employed for discrete variables. However, in this setting $H(\cdot)$ could also refer to the differential entropy $h(\cdot)$ of a continuous random variable, since we employed the general definition using the expectation.}. Then, in Equation (\ref{eq:prop1_mult_div}) we only multiplied and divided by $q_{\hat{Y}|T}$ inside the logarithm and employed the definition of the Kullback-Leibler divergence. Finally, in Equation (\ref{eq:prop1_kldiv_pos}) we first used the fact the Kullback-Leibler divergence is always positive (Theorem 2.6.3 from \citet{cover2012elements}) and then the properties of the Markov Chain $T \leftrightarrow X \leftrightarrow Y$.

Therefore, since $H(Y)$ does not depend on $T$ and we have a negative multiplicative term on $J_{\textnormal{CE}}(p_{(X,Y)};\theta)$ the proposition is proved.
\end{proof}

\section{Alternative proof of Theorem \ref{th:det_ib_curve_not_explorable}}
\label{app:det_ib_curve_not_explorable}

\begin{proof}
We will proof all the enumerated statements sequentially, since the third one requires from the two first ones to be proved. 

\begin{enumerate}
	\item Proposition \ref{prop:ib_curve_linear} states that the IB curve in the information plane follows the equation $I(T;Y) = I(X;T)$ if $I(X;T) \in [0,I(X;Y))$. Then, since $\beta = dI(T;Y)/dI(X;T)$ \citep{tishby2000information}, we know $\beta = 1$ in all these points. Therefore, for $\beta = 1$ all points ($I(X;T),I(X;T)$) such that $I(X;T) \in [0,I(X;Y))$ are solutions of optimizing the IB Lagrangian.
	\item Similarly, Proposition \ref{prop:ib_curve_linear} states that the IB curve follows the equation $I(T;Y) = I(X;Y)$ if $I(X;T) \geq I(X;Y)$. Then, since $\beta = dI(T;Y)/dI(X;T)$ \citep{tishby2000information}, we know $\beta = 0$ in all points such that $I(X;T) > I(X;Y)$. We cannot ensure it at $I(X;T) = I(X;Y)$ since $\beta = 1$ for $I(X;T) = \lim_{\epsilon \rightarrow 0^+} \lbrace I(X;Y) - \epsilon \rbrace$.
	\item Finally, in order to prove the last statement we will first prove that if $\beta \in (0,1)$ achieves a solution, it is $(I(X;Y),I(X;Y))$. Then, we will prove that if the solution $(I(X;Y),I(X;Y))$ exists, this can be yield by any $\beta \in (0,1)$. Hence, the solution $(I(X;Y),I(X;Y))$ is achieved $\forall \beta \in (0,1)$ and it is the only solution achievable.
	
	\begin{enumerate}
	\item Since the IB curve is concave we know $\beta$ is non-increasing in $I(X;T) \in \mathbb{R}^+$. We also know $\beta = 1$ at the points in the IB curve where $I(X;T) \leq \lim_{\epsilon \rightarrow 0^+} \lbrace I(X;Y) - \epsilon \rbrace$ and $\beta = 1$ at the points in the IB curve where $I(X;T) \geq \lim_{\epsilon \rightarrow 0^+} \lbrace I(X;Y) + \epsilon \rbrace$. Hence, if we achieve a solution with $\beta \in (0,1)$, this solution is $I(X;T) = I(T;Y) = I(X;Y)$.
	\item We can upper bound the IB Lagrangian by
	
	\begin{equation}
	\mathcal{L}_{\textnormal{IB}}(T;\beta) = I(T;Y) - \beta I(X;T) \leq (1 - \beta) I(T;Y)  \leq (1 - \beta) I(X;Y),
	\label{eq:upper_bound_lagr_for_th}
	\end{equation}
	
	where the first and second inequalities use the DPI (Theorem 2.8.1 from \citet{cover2012elements}).
	
	Then, we can consider the point of the IB curve $(I(X;Y),I(X;Y))$. Since the function is concave a tangent line to $(I(X;Y),I(X;Y))$ exists such that all other points in the curve lie below this line. Let $\beta$ be the slope of this curve (which we know it is from \citet{tishby2000information}). Then,
	
	\begin{equation}
	I(X;Y) - \beta I(X;Y) = (1 - \beta) I(X;Y) \geq F_{\textnormal{IB,max}}(r) - \beta r, \ \forall r \in [0,\infty).
	\end{equation}
	
	As we see, by the upper bound on the IB Lagrangian from Equation (\ref{eq:upper_bound_lagr_for_th}), if the point $(I(X;Y),I(X;Y))$ exists, any $\beta$ can be the slope of the tangent line to $(I(X;Y),I(X;Y))$ that ensures concavity.

	\end{enumerate}
\end{enumerate}
\end{proof}

\section{Proof of Theorem \ref{th:ib_convex_lagrangians}}
\label{proof:ib_convex_lagrangians}

\begin{proof}
	We start the proof by remembering the optimization problem at hand (Definition \ref{def:ib_functional}):
	
	 \begin{equation}
	 	F_{\textnormal{IB,max}}(r) = \max_{T \in \Delta} \{I(T;Y)\} \textnormal{ s.t. } I(X;T) \leq r
	 \end{equation}
	 
	 We can modify the optimization problem by 
	 
	 	 \begin{align}
	 \max_{T \in \Delta} \{I(T;Y)\} \textnormal{ s.t. } u(I(X;T)) \leq u(r) 
	 \end{align}
	 	 
	 \textbf{iff $u$ is a monotonically non-decreasing function} since otherwise $u(I(X;T)) \leq u(r)$ would not hold necessarily. Now, let us assume $\exists T^* \in \Delta$ and $\beta_u^*$ s.t. $T^*$ maximizes $\mathcal{L}_{\textnormal{IB},u}(T;{\beta_u^*})$ over all $T \in \Delta$, and  $I(X;T^*) \leq r$. Then, we can operate as follows:
	 
	 	 \begin{align}
	 	\max_{\substack{T \in \Delta \\ u(I(X;T)) \leq u(r)}} \{I(T;Y)\} &= \max_{\substack{T \in \Delta \\ u(I(X;T)) \leq u(r)}} \{I(T;Y) - \beta_u^* (u(I(X;T)) - u(r) + \xi) \} \label{eq:max_ib_func_1}\\
	 	&\leq  \max_{T \in \Delta} \{I(T;Y) - \beta_u^* (u(I(X;T)) - u(r) + \xi) \} \label{eq:max_ib_func_2}\\
	 	&=  I(T^*;Y) - \beta_u^* (u(I(X;T^*) - u(r) + \xi) = I(T^*;Y). \label{eq:max_ib_func_4}
	 \end{align}
	
	 Here, the equality from Equation (\ref{eq:max_ib_func_1}) comes from the fact that since $I(X;T) \leq r$, then $\exists \xi \geq 0$ s.t. $u(I(X;T)) - u(r) + \xi = 0$. Then, the inequality from Equation (\ref{eq:max_ib_func_2}) holds since we have expanded the optimization search space. Finally, in Equation (\ref{eq:max_ib_func_4}) we use that $T^*$ maximizes $\mathcal{L}_{\textnormal{IB},u}(T;{\beta_u^*})$ and that $I(X;T^*) \leq r$. 
	 
	 Now, we can exploit that $u(r)$ and $\xi$ do not depend on $T$ and drop them in the maximization in Equation (\ref{eq:max_ib_func_2}). We can then realize we are maximizing over $\mathcal{L}_{\textnormal{IB},u}(T;{\beta_u^*})$; i.e., 
	 
	 \begin{align}
	 \max_{\substack{T \in \Delta \\ u(I(X;T)) \leq u(r)}} \{I(T;Y)\} &\leq \max_{T \in \Delta} \{I(T;Y) - \beta_u^* (u(I(X;T)) - u(r) + \xi) \} \\
	 &= \max_{T \in \Delta} \{I(T;Y) - \beta_u^* (I(X;T)) \} = \max_{T \in \Delta} \{\mathcal{L}_{\textnormal{IB},u}(T;{\beta_u^*}) \} . \label{eq:max_ib_func_3}
	 \end{align}
	 	 
	 Therefore, since $I(T^*;Y)$ satisfies both the maximization with $T^* \in \Delta$ and the constraint $I(X;T^*) \leq r$, maximizing $\mathcal{L}_{\textnormal{IB},u}(T;{\beta_u^*})$ obtains $F_{\textnormal{IB,max}}(r)$.
	 
	 Now, we know if such $\beta_u^*$ exists, then the solution of the Lagrangian will be a solution for $F_{\textnormal{IB,max}}(r)$. Then, if we consider Theorem 6 from the Appendix of \citet{courcoubetis2003pricing} and consider the maximization problem instead of the minimization problem, we know if both $I(T;Y)$ and $-u(I(X;T))$ are concave functions, then a set of Lagrange multipliers $S_u^*$ exists with these conditions. We can make this consideration because $f$ is concave if $-f$ is convex and $\max \{f\} = \min \{-f\}$. We know $I(T;Y)$ is a concave function of $T$ for $T \in \Delta$ (Lemma 5 of \citet{gilad2003information}) and $I(X;T)$ is convex w.r.t. $T$ given $p_X$ is fixed (Theorem 2.7.4 of \citet{cover2012elements}). Thus, if we want $-u(I(X;T))$ to be concave \textbf{we need $u$ to be a convex function}.
	 
	 Finally, we will look at the conditions of $u$ so that for every point $(I(X;T), I(T;Y))$ in the IB curve, there exists a unique $\beta_u^*$ s.t. $\mathcal{L}_{\textnormal{IB},\textnormal{u}}(T;{\beta_u^*})$ is maximized. That is, the conditions of $u$ s.t. $|S_u^*|=1$. For this purpose we will look at the solutions of the Lagrangian optimization:
	 
	 \begin{equation}
	 \frac{d\mathcal{L}_{\textnormal{IB},u}(T;{\beta_u})}{dT} = \frac{d(I(T;Y) - \beta_u u(I(X;T)))}{dT} = \frac{dI(T;Y)}{dT} - \beta_u \frac{du(I(X;T))}{dI(X;T)} \frac{dI(X;T)}{dT} = 0
	 \label{eq:sol_lagrangian_h}
	 \end{equation}
	 
	 Now, if we integrate both sides of Equation (\ref{eq:sol_lagrangian_h}) over all $T \in \Delta$ we obtain 
	 
	 \begin{equation}
	 \beta_u = \frac{dI(T;Y)}{dI(X;T)} \left( \frac{du(I(X;T))}{dI(X;T)} \right)^{-1} = \frac{\beta}{u'(I(X;T))},
	 \end{equation}
	 
	 where $\beta$ is the Lagrange multiplier from the IB Lagrangian \citep{tishby2000information} and $u'(I(X;T))$ is $\frac{du(I(X;T))}{dI(X;T)}$. Also, if we want to avoid indeterminations of $\beta_u$ we need $u'(I(X;T))$ not to be 0. Since we already imposed $u$ to be monotonically non-decreasing, we can solve this issue by strengthening this condition. That is, we will require \textbf{$u$ to be monotonically increasing}.
	 
	 We would like $\beta_u$ to be continuous, this way there would be a unique $\beta_u$ for each value of $I(X;T)$. We know $\beta$ is a non-increasing function of $I(X;T)$ (Lemma 6 of \citet{gilad2003information}). Hence, if we want $\beta_u$ \textbf{to be a strictly decreasing function of} $I(X;T)$, we will require $u'$ to be {a} strictly increasing function of $I(X;T)$. Therefore, \textbf{we will require $u$ to be a strictly convex function}.
	 
	  Thus, if $u$ is {a} strictly convex and monotonically increasing function, for each point $(I(X;T),I(T;Y))$ in the IB curve s.t. $dI(T;Y)/dI(X;T) > 0$ there is a unique $\beta_u$ for which maximizing $\mathcal{L}_{\textnormal{IB},\textnormal{u}}(T;{\beta_u})$ achieves this solution.
\end{proof}

\section{Proof of Proposition \ref{prop:bijective_mapping_beta_ixt}}
\label{proof:bijective_mapping}

\begin{proof}
	In Theorem \ref{th:ib_convex_lagrangians} we showed how each point of the IB curve $(I(X;T),I(T;Y))$ can be found with a unique $\beta_u$ maximizing $\mathcal{L}_{\textnormal{IB},u}(T;{\beta_u})$. Therefore, since we also proved  $\mathcal{L}_{\textnormal{IB},u}(T;{\beta_u})$ is strictly concave w.r.t. $T$ we can find the values of $\beta_u$ that maximize the Lagrangian for fixed $I(X;T)$. 
	
	First, we look at the solutions of the Lagrangian maximization:
	
	\begin{equation}
	\frac{d\mathcal{L}_{\textnormal{IB},u}(T;{\beta_u})}{dT} = \frac{d(f_{\textnormal{IB}}(I(X;T)) - \beta_u u(I(X;T)))}{dT} = \frac{df_{\textnormal{IB}}(I(X;T))}{dT} - \beta_u \frac{du(I(X;T))}{dI(X;T)} \frac{dI(X;T)}{dT} = 0.
	\end{equation}
	
	Then as before we can integrate at both sides for all $T \in \Delta$ and solve for $\beta_u$:
	
	\begin{equation}
	\beta_u = \frac{df_{\textnormal{IB}}(I(X;T))}{dI(X;T)} \frac{1}{u'(I(X;T))}.
	\label{eq:proof_bijective_mapping_betas_ixt_1}
	\end{equation}
	
	Moreover, since $u$ is a strictly convex function its derivative $u'$ is strictly {increasing}. Hence, $u'$ is an invertible function (since a strictly {increasing} function is bijective and a function is invertible iff it is bijective by definition). Now, if we consider $\beta_u > 0$ to be known and $I(X;T)$ to be the unknown we can solve for $I(X;T)$ and get:
	
	\begin{equation}
		I(X;T) = (u')^{-1} \left( \frac{df_{\textnormal{IB}}(I(X;T))}{dI(X;T)} \frac{1}{\beta_u} \right).
	\end{equation}
	
	Note we require $\beta_u$ not to be 0 so the mapping is defined.
\end{proof}

\section{Proof of Corollary \ref{cor:domain_conv_ib_lagrange}}
\label{proof:domain_beta_f_known}

\begin{proof}
We will start the proof by proving the following useful Lemma.

\begin{Lemma} Let $\mathcal{L}_{\textnormal{IB},u}(T;{\beta_u})$ be a convex IB Lagrangian, then $\sup_{T \in \Delta} \{ \mathcal{L}_{\textnormal{IB},u}(T;0) \}= I(X;Y)$.
\label{lemma:max_beta_0}
\end{Lemma}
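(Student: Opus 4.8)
The plan is to note that the $\beta_u = 0$ case collapses the convex IB Lagrangian to a single mutual-information term, and then to sandwich its supremum between an upper bound coming from the data processing inequality and a lower bound coming from an explicit choice of $T$.

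First I would observe that $\mathcal{L}_{\textnormal{IB},u}(T;0) = I(T;Y) - 0 \cdot u(I(X;T)) = I(T;Y)$, so that $\sup_{T \in \Delta} \{ \mathcal{L}_{\textnormal{IB},u}(T;0) \} = \sup_{T \in \Delta} \{ I(T;Y) \}$ regardless of which monotonically increasing strictly convex $u$ is used. Next, for the upper bound, every $T \in \Delta$ obeys the Markov chain $Y \leftrightarrow X \leftrightarrow T$, so by the DPI (Theorem 2.8.1 of \citet{cover2012elements}) we have $I(T;Y) \leq I(X;Y)$; taking the supremum over $T \in \Delta$ gives $\sup_{T \in \Delta} \{ I(T;Y) \} \leq I(X;Y)$.

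Then, for the matching lower bound, I would exhibit a feasible $T$ achieving equality: the choice $T = X$ trivially satisfies the Markov condition $Y \leftrightarrow X \leftrightarrow T$ and yields $I(T;Y) = I(X;Y)$, so $\sup_{T \in \Delta}\{ I(T;Y)\} \geq I(X;Y)$. Combining the two bounds gives $\sup_{T \in \Delta} \{ \mathcal{L}_{\textnormal{IB},u}(T;0) \} = I(X;Y)$, as claimed. (One can also reach the bound with any $T$ that is a sufficient statistic of $X$ for $Y$, which is worth mentioning since later uses of the lemma will want a compressed maximizer.)

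There is essentially no hard step here; the only point that needs a moment's care is the feasibility of the maximizer. Since the footnote accompanying Theorem \ref{th:det_ib_curve_not_explorable} already flags that the constrained search set $\{T : I(X;T) < \infty\}$ need not be compact when $\beta = 0$, I would be explicit that the present statement concerns the unconstrained supremum of $\mathcal{L}_{\textnormal{IB},u}(T;0)$ over all of $\Delta$, for which $T = X$ is admissible and attains the value $I(X;Y)$, so the supremum is in fact a maximum.
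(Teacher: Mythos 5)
Your proposal is correct and follows essentially the same route as the paper: reduce $\mathcal{L}_{\textnormal{IB},u}(T;0)$ to $I(T;Y)$, upper bound it by $I(X;Y)$ via the DPI on $Y \leftrightarrow X \leftrightarrow T$, and then show the bound is attained. Your explicit witness $T = X$ (or any sufficient statistic) is in fact a cleaner and more generally valid achievability argument than the paper's remark that $I(X;Y)$ is reached when $Y$ is a deterministic function of $T$, which only applies in deterministic settings.
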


\begin{proof}
	Since $\mathcal{L}_{\textnormal{IB},u}(T;0) = I(T;Y)$, maximizing this Lagrangian is directly maximizing $I(T;Y)$. We know $I(T;Y)$ is a concave function of $T$ for $T \in \Delta$ (Theorem 2.7.4 from \citet{cover2012elements}); hence it has a supremum. We also know $I(T;Y) \leq I(X;Y)$. Moreover, we know $I(X;Y)$ can be achieved if, for example, $Y$ is a deterministic function of $T$ (since then the Markov Chain $X \leftrightarrow T \leftrightarrow Y$ is formed). Thus, $\sup_{T \in \Delta} \{ \mathcal{L}_{\textnormal{IB},u}(T;0) \} = I(X;Y)$.
\end{proof}

	For $\beta_u = 0$ we know maximizing $\mathcal{L}_{\textnormal{IB},u}(T;0)$ we can obtain the point in the IB curve $(r_{\textnormal{max}},I_{\textnormal{max}})$ (Lemma \ref{lemma:max_beta_0}). 
	Moreover, we know that for every point $(I(X;T), f_{\textnormal{IB}}(I(X;T)))$ such that $d f_{\textnormal{IB}}(I(X;T)) / dI(X;T) > 0$, $\exists ! \beta_u$ s.t. $\max \{\mathcal{L}_{\textnormal{IB},u}(T;{\beta_u}) \}$ achieves that point (Theorem \ref{th:ib_convex_lagrangians}). Thus,
		$\exists ! \beta_{u,\textnormal{min}}$ s.t. $\lim_{r \rightarrow r_{\textnormal{max}}^{-}} (r, f_{\textnormal{IB}}(r))$ is achieved. From Proposition \ref{prop:bijective_mapping_beta_ixt} we know this $\beta_{u,\textnormal{min}}$ is given by
		
		\begin{equation}
		\beta_{u,\textnormal{min}} = \lim_{r \rightarrow r_{\textnormal{max}}^{-}} \left \{ \frac{f'_{\textnormal{IB}}(r)}{u'(r)}\right \}.
		\end{equation}
		
		Since we know $f_{\textnormal{IB}}(I(X;T))$ is a concave non-decreasing function in $(0,r_{\textnormal{max}})$ (Lemma 5 of \citet{gilad2003information}) we know it is continuous in this interval. In addition we know $\beta_u$ is strictly decreasing w.r.t. $I(X;T)$ (Theorem \ref{th:ib_convex_lagrangians}). Furthermore, by definition of $r_{\textnormal{max}}$ and knowing $I(T;Y) \leq  I(X;Y)$ we know $f'_{\textnormal{IB}}(r) = 0$, $\forall r > r_{\textnormal{max}}$. Therefore, we cannot ensure the exploration of the IB curve for $\beta_u'$ s.t. $0 < \beta_u' < \beta_{u,\textnormal{min}}$.
		
		Then, since $u$ is a strictly increasing function in $(0, r_{\textnormal{max}})$, $u'$ is positive in that interval. Hence, taking into account $\beta_u$ is strictly decreasing we can find a maximum $\beta_u$ when $I(X;T)$ approaches to 0. That is,
		
	\begin{equation}
	\beta_{u,\textnormal{max}} = \lim_{r \rightarrow 0^+} \left \{ \frac{f'_{\textnormal{IB}}(r)}{u'(r)}\right \},
\end{equation}
\end{proof}

\section{Proof of Corollary \ref{cor:bound_domain}}
\label{proof:bound_domain_beta}

\begin{proof}
If we use Corollary \ref{cor:domain_conv_ib_lagrange}, it is straightforward to see that $\beta_u \subseteq [L_-, L_+]$ if $\beta_{u,\textnormal{min}} \geq L_-$  and $\beta_{u,\textnormal{max}} \leq L_+$ for all IB curves $f_{\textnormal{IB}}$ and functions $u$. Therefore, we look at a domain bound dependent on the function choice. That is, if we can find $\beta_{\textnormal{min}} \leq f'_{\textnormal{IB}}(r)$ and $\beta_{\textnormal{max}} \geq f'_{\textnormal{IB}}(r)$ for all IB curves and all values of $r$, then 

\begin{equation}
	B_u \subseteq \left[\frac{\beta_{\textnormal{min}}}{\lim_{r \rightarrow r_{\textnormal{max}}^-} \lbrace u'(r) \rbrace}, \frac{\beta_{\textnormal{max}}}{\lim_{r \rightarrow 0^+} \lbrace u'(r) \rbrace}  \right].
\end{equation}

The region for all possible IB curves regardless of the relationship between $X$ and $Y$ is depicted in Figure \ref{fig:example_ib_curve}. The hard limits are imposed by the DPI (Theorem 2.8.1 from \citet{cover2012elements}) and the fact that the mutual information is non-negative (Corollary with Equation 2.90 for discrete and first Corollary of Theorem 8.6.1 for continuous random variables from \citet{cover2012elements}). Hence, a minimum and maximum values of $f'_{\textnormal{IB}}$ are given by the minimum and maximum values of the slope of the Pareto frontier. Which means

\begin{equation}
	B_u \subseteq \left[0, \frac{1}{\lim_{r \rightarrow 0^+} \lbrace u'(r) \rbrace}  \right].
\end{equation}

Note $0/({\lim_{r \rightarrow r_{\textnormal{max}}^-} \lbrace u'(r) \rbrace}) = 0$ since $u$ is monotonically increasing and, thus, $u'$ will never be 0.

\begin{figure}
\centering
\includegraphics[width=0.7\linewidth]{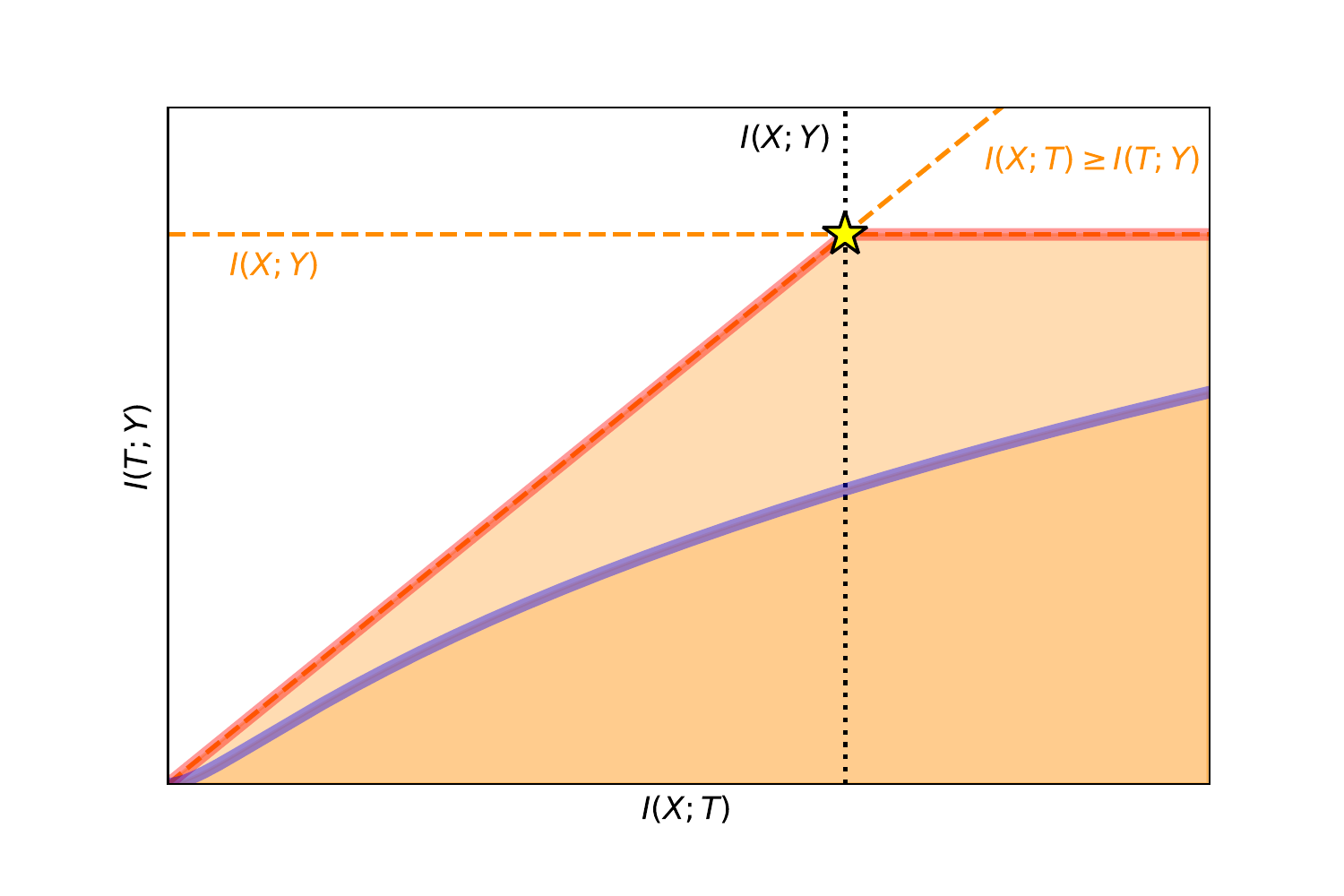}
\caption{Graphical representation of the IB curve in the information plane. Dashed lines
in orange represent tight bounds confining the region (in light orange) of possible IB curves
(delimited by the red line, also known as the Pareto frontier). Black dotted
lines are informative values. In blue we show an example of a possible IB curve confining
a region (in darker orange) of an IB curve which does not achieve the Pareto frontier. Finally, the
yellow star represents the point where the representation keeps the same information about the input and the output.} 
\label{fig:example_ib_curve}
\end{figure}

Then, we can tighten the bound using the results from \citet{wu2019learnability}, where, in Theorem 2, they showed the slope of the Pareto frontier could be bounded in the origin by $f'_{\textnormal{IB}} \leq (\inf_{\Omega_x \subset \mathcal{X}} \lbrace \beta_0(\Omega_x)  \rbrace)^{-1}$. Finally, we know that in deterministic classification tasks $\inf_{\Omega_x \subset \mathcal{X}} \lbrace \beta_0(\Omega_x)  \rbrace = 1$, which aligns with \citet{kolchinsky2018caveats} and what we can observe from Figure \ref{fig:example_ib_curve}. Therefore, 

\begin{equation}
	B_u \subseteq \left[0, \frac{(\inf_{\Omega_x \subset \mathcal{X}} \lbrace \beta_0(\Omega_x)  \rbrace)^{-1}}{\lim_{r \rightarrow 0^+} \lbrace u'(r) \rbrace}  \right] \subseteq \left[0, \frac{1}{\lim_{r \rightarrow 0^+} \lbrace u'(r) \rbrace}  \right].
\end{equation}
\end{proof}

\section{Other Lagrangian Families}
\label{app:alternatives_to_convex_ib_lagrangians}

We can use the same ideas we used for the convex IB Lagrangian to formulate new families of Lagrangians that allow the exploration of the IB curve. For that we will use the duality of the IB curve (Lemma 10 of \citep{gilad2003information}). That is:

\begin{Definition} [IB dual functional]
	\label{def:ib_dual_f}
	Let $X$ and $Y$ be statistically dependent variables. Let also $\Delta$ be the set of random variables $T$ obeying the Markov condition $Y \leftrightarrow X \leftrightarrow T$. Then the IB dual functional is

	\begin{equation}
	F_{\textnormal{IB},\textnormal{min}}(i) = \min_{T \in \Delta} \left\{ I(X;T) \right\} \ \textnormal{s.t.} \ I(T;Y) \geq i, \ \forall i \in [0, I(X;Y)).
	\label{eq:ib_optim_min}
\end{equation}

\end{Definition}

\begin{Theorem}[IB curve duality] Let the IB curve be defined by the solutions of $F_{\textnormal{IB,max}}(r)$ for varying $r \in [0,\infty)$. Then, 

\begin{equation}
	\forall r \exists i \text{ s.t. } (r, F_{\textnormal{IB,max}}(r)) = (F_{\textnormal{IB,min}}(i), i)
\end{equation}

and 

\begin{equation}
	\forall i \exists r \text{ s.t. } (F_{\textnormal{IB,min}}(i), i) = (r, F_{\textnormal{IB,max}}(r)).
\end{equation}

\end{Theorem}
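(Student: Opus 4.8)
The plan is to prove the IB curve duality by exploiting the concavity and monotonicity of $F_{\textnormal{IB,max}}$ (Lemma 5 of \citet{gilad2003information}), which already underpins the rest of the paper. First I would establish the basic structural properties of $F_{\textnormal{IB,max}}$: it is non-decreasing, concave, and satisfies $F_{\textnormal{IB,max}}(0) = 0$ (since $I(X;T) = 0$ forces $T \perp X$, hence $T \perp Y$ by the Markov chain) and $\lim_{r \to \infty} F_{\textnormal{IB,max}}(r) = I(X;Y)$ (achievable by $T = X$, and bounded above by the DPI). Correspondingly, $F_{\textnormal{IB,min}}$ is non-decreasing on $[0, I(X;Y))$ with $F_{\textnormal{IB,min}}(0) = 0$. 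Because the feasible set for $F_{\textnormal{IB,max}}(r)$ is closed under ``discarding information'' (any $T$ can be degraded to lower $I(X;T)$ while only lowering $I(T;Y)$), and because the constraints are the natural dual of one another, I expect the two functionals to trace out the same Pareto frontier.

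The key step is a monotone-inverse argument. I would show that for each $r \in [0,\infty)$, setting $i := F_{\textnormal{IB,max}}(r)$ gives $F_{\textnormal{IB,min}}(i) = r$ provided $r$ lies on the ``strictly increasing part'' of the curve (i.e. $r \le r_{\textnormal{max}}$ in the notation of Corollary \ref{cor:domain_conv_ib_lagrange}); for $r > r_{\textnormal{max}}$ one instead gets $i = I(X;Y)$, which is excluded from the domain of $F_{\textnormal{IB,min}}$, but the point $(r, I(X;Y))$ still coincides with $(F_{\textnormal{IB,min}}(i'), i')$ in the limit $i' \to I(X;Y)^-$ up to the boundary convention, so the statement as written (with $i \in [0, I(X;Y))$) is recovered by taking $r \le r_{\textnormal{max}}$. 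Concretely: let $T^*$ achieve $F_{\textnormal{IB,max}}(r)$, so $I(X;T^*) \le r$ and $I(T^*;Y) = i$. Then $T^*$ is feasible for $F_{\textnormal{IB,min}}(i)$, giving $F_{\textnormal{IB,min}}(i) \le I(X;T^*) \le r$. For the reverse inequality, suppose $F_{\textnormal{IB,min}}(i) = r' < r$; then some $T'$ has $I(X;T') = r' $ and $I(T';Y) \ge i$, so $F_{\textnormal{IB,max}}(r') \ge i = F_{\textnormal{IB,max}}(r)$, contradicting strict monotonicity of $F_{\textnormal{IB,max}}$ on $[0, r_{\textnormal{max}}]$ (which follows from concavity plus $F_{\textnormal{IB,max}}(0)=0 < i$). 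Hence $F_{\textnormal{IB,min}}(i) = r$ and $(F_{\textnormal{IB,min}}(i), i) = (r, F_{\textnormal{IB,max}}(r))$. The symmetric direction — for each $i$ there is $r$ with $(F_{\textnormal{IB,min}}(i), i) = (r, F_{\textnormal{IB,max}}(r))$ — is the same argument with the roles of the two functionals swapped: take $r := F_{\textnormal{IB,min}}(i)$ and the optimal $T$ for $F_{\textnormal{IB,min}}(i)$, which has $I(X;T) = r$ and $I(T;Y) \ge i$; concavity and the definition of the max-functional then force $I(T;Y) = i = F_{\textnormal{IB,max}}(r)$, else one could shrink $I(X;T)$ and still meet the constraint $I(T;Y) \ge i$, contradicting optimality of $r$.

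The main obstacle I anticipate is handling the boundary/flat regime cleanly: once $r \ge r_{\textnormal{max}}$ the max-functional is constant at $I(X;Y)$, so it is not injective there and the naive ``inverse'' breaks down; and the domain of $F_{\textnormal{IB,min}}$ is the half-open interval $[0, I(X;Y))$, which deliberately omits the value $I(X;Y)$ precisely because the infimum of $I(X;T)$ subject to $I(T;Y) \ge I(X;Y)$ may be $+\infty$ (cf. the footnote on $r_{\textnormal{max}} \to \infty$ and \citep{du2017strong}). So the existential statements have to be read with this in mind: for $r$ in the flat part, the witnessing $i$ is $I(X;Y)$ itself for the first equation, which is fine since the first equation only quantifies over $r$ and does not require $i$ to lie in the domain of $F_{\textnormal{IB,min}}$; and for the second equation, every $i \in [0, I(X;Y))$ lands strictly below the flat part and is therefore unproblematic. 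I would also need to confirm that the infimum in $F_{\textnormal{IB,min}}$ is attained (so that I may speak of an optimal $T$), which follows from the same compactness/continuity considerations used implicitly for $F_{\textnormal{IB,max}}$, with the caveat already flagged in the paper's footnote about non-compactness when $I(X;T)$ can be unbounded — but that caveat only bites at the excluded endpoint $i = I(X;Y)$, so it does not affect the claim.
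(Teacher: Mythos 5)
Your argument is sound, but note that the paper itself does not prove this statement at all: it imports it wholesale as Lemma 10 of \citet{gilad2003information}, so there is no in-paper proof to mirror. What you supply is a self-contained monotone-inverse argument — optimizer of one problem is feasible for the other, plus strict monotonicity of $F_{\textnormal{IB,max}}$ on $[0,r_{\textnormal{max}}]$ and continuity (both from concavity and the limit value $I(X;Y)$) to rule out slack — which is essentially the same style of argument as the cited lemma, and it is correct in substance. Two points worth tightening if you wrote this out in full. First, your parenthetical ``strict monotonicity follows from concavity plus $F_{\textnormal{IB,max}}(0)=0<i$'' is compressed: the actual contradiction needs that a concave non-decreasing function which is constant on an interval stays constant thereafter, together with $\lim_{r\to\infty}F_{\textnormal{IB,max}}(r)=I(X;Y)$ and the definition of $r_{\textnormal{max}}$; you do establish the limit earlier, so this is an assembly issue, not a gap. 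Second, your handling of the flat region is the honest reading of the theorem: for $r\geq r_{\textnormal{max}}$ the required $i$ would be $I(X;Y)$, which is excluded from the domain of $F_{\textnormal{IB,min}}$, so the first existential only holds on the strictly increasing part (or under a boundary convention extending $F_{\textnormal{IB,min}}$ to $i=I(X;Y)$); the statement as printed glosses over this, and your explicit caveat — including the attainment-of-infimum caveat tied to the paper's own footnote on $r_{\textnormal{max}}\to\infty$ — is a genuine improvement in precision over simply citing the duality. In short: correct, and more self-contained than the paper, at the cost of having to be careful about endpoints that the citation-only presentation never confronts.
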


From this definition it follows that minimizing the \textit{dual IB Lagrangian}, $
	\mathcal{L}_{\textnormal{IB,dual}}(T;{\beta_{\textnormal{dual}}}) = I(X;T) - \beta_{\textnormal{dual}} I(T;Y) $, for $\beta_{\textnormal{dual}} = \beta^{-1}$ is equivalent to maximizing the IB Lagrangian. In fact, the original Lagrangian for solving the problem was defined this way \citep{tishby2000information}. We decided to use the maximization version because the domain of useful $\beta$ is bounded while it is not for $\beta_{\textnormal{dual}}$.

Following the same reasoning as we did in the proof of Theorem \ref{th:ib_convex_lagrangians}, we can ensure the IB curve can be explored if:

\begin{enumerate}
\item We minimize the concave IB Lagrangian $\mathcal{L}_{\textnormal{IB},v}(T;{\beta_v}) = I(X;T) - \beta_v v(I(T;Y))$.
\item We maximize the dual concave IB Lagrangian $\mathcal{L}_{\textnormal{IB},v,\textnormal{\textnormal{dual}}}(T;{\beta_{v,\textnormal{dual}}}) = v(I(T;Y)) - \beta_{v,\textnormal{dual}} I(X;T)$.
\item We minimize the dual convex IB Lagrangian $\mathcal{L}_{\textnormal{IB},u,\textnormal{dual}}(T;{\beta_{u,\textnormal{dual}}}) = u(I(X;T)) - \beta_{u,\textnormal{dual}} I(T;Y)$.
\end{enumerate}

Here, $u$ is a monotonically increasing strictly convex function, $v$ is a monotonically increasing strictly concave function, and $\beta_v, \beta_{v,\textnormal{dual}}, \beta_{u,\textnormal{dual}}$ are the Lagrange multipliers of the families of Lagrangians defined above.

In a similar manner, one could obtain relationships between the Lagrange multipliers of the IB Lagrangian and the convex IB Lagrangian with these Lagrangian families. For instance, the convex IB Lagrangian $\mathcal{L}_{\textnormal{IB},u}(T;\beta_u)$ is related with the concave IB Lagrangian $\mathcal{L}_{\textnormal{IB},v}(T;\beta_v)$ as defined by Propositon \ref{prop:relation_conv_conc}.

\begin{Proposition}[Relationship between the convex and concave IB Lagrangians] Consider the convex and concave IB Lagrangians $\mathcal{L}_{\textnormal{IB},u}(T;\beta_u)$, $\mathcal{L}_{\textnormal{IB},v}(T;\beta_v)$. Let the IB curve defined as in Definition \ref{def:ib_curve} be $f_{\textnormal{IB}}$. Then, if we fix the functions $u$ and $v$ we can obtain the same point in the IB curve $(r, f_{\textnormal{IB}}(r))$ with both Lagrangians when
\begin{equation}
	\beta_v^{-1} = f_{\textnormal{IB}}'(r) v' \left( f_{\textnormal{IB}}\left( (u')^{-1}\left( \frac{f_{\textnormal{IB}}'(r)}{\beta_u}\right)\right)\right),
\label{eq:beta_v_with_beta_u}
\end{equation}
or equivalently,
\begin{equation}
	\beta_u^{-1} = \frac{1}{f_{\textnormal{IB}}'(r)}u' \left( f_{\textnormal{IB}}^{-1} \left( (v')^{-1} \left( \frac{\beta_v^{-1}}{f_{\textnormal{IB}}'(r)} \right) \right)\right).
\label{eq:beta_u_with_beta_v}
\end{equation}
\label{prop:relation_conv_conc}
\end{Proposition}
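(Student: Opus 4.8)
The plan is to reduce the statement to composing two scalar change-of-variable relations that are essentially already in hand: the $\beta_u \leftrightarrow I(X;T)$ correspondence for the convex IB Lagrangian given by Proposition~\ref{prop:bijective_mapping_beta_ixt}, and the analogous $\beta_v \leftrightarrow I(T;Y)$ correspondence for the concave IB Lagrangian. Since both Lagrangians are required to yield the \emph{same} IB-curve point $(r,f_{\textnormal{IB}}(r))$, eliminating $r$ between the two correspondences produces the claimed identities.

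First I would derive the concave-side correspondence. As observed in Appendix~\ref{app:alternatives_to_convex_ib_lagrangians}, when $v$ is monotonically increasing and strictly concave, $v(I(T;Y))$ is concave in $T$ on $\Delta$ (a nondecreasing concave function of the concave map $I(T;Y)$), so $\mathcal{L}_{\textnormal{IB},v}(T;\beta_v)=I(X;T)-\beta_v v(I(T;Y))$ is strictly convex in $T$ and its minimizer over $\Delta$ is characterized by stationarity. Setting $d\mathcal{L}_{\textnormal{IB},v}/dT=0$ and rearranging the stationarity condition into a ratio of total derivatives, exactly as in the proof of Theorem~\ref{th:ib_convex_lagrangians}, and using $dI(T;Y)/dI(X;T)=f_{\textnormal{IB}}'(r)$ along the curve at $I(X;T)=r$, $I(T;Y)=f_{\textnormal{IB}}(r)$, I obtain
\begin{equation}
  \beta_v^{-1}=f_{\textnormal{IB}}'(r)\,v'\!\big(f_{\textnormal{IB}}(r)\big).
\end{equation}
On the convex side Proposition~\ref{prop:bijective_mapping_beta_ixt} gives $\beta_u=f_{\textnormal{IB}}'(r)/u'(r)$, equivalently $r=(u')^{-1}\!\big(f_{\textnormal{IB}}'(r)/\beta_u\big)$.

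Then I would just compose. Substituting the convex-side expression for $r$ into $f_{\textnormal{IB}}(r)$ inside the concave-side relation yields Equation~(\ref{eq:beta_v_with_beta_u}). For the reverse direction, $v'$ is strictly decreasing (hence invertible) since $v$ is strictly concave, and $f_{\textnormal{IB}}$ is strictly increasing on the informative branch of the curve (hence invertible there); solving the concave-side relation for $r$ gives $r=f_{\textnormal{IB}}^{-1}\!\big((v')^{-1}(\beta_v^{-1}/f_{\textnormal{IB}}'(r))\big)$, and inserting this into $\beta_u^{-1}=u'(r)/f_{\textnormal{IB}}'(r)$ gives Equation~(\ref{eq:beta_u_with_beta_v}).

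The main obstacle is really a matter of care rather than difficulty. The step in which one integrates the stationarity condition over $T\in\Delta$ is the same slightly informal manipulation used in Theorem~\ref{th:ib_convex_lagrangians} and Proposition~\ref{prop:bijective_mapping_beta_ixt}; I would invoke those results rather than reprove it, since its validity rests on strict convexity/concavity making the optimizer unique at each curve point. One must also restrict all the inversions ($u'$, $v'$, $f_{\textnormal{IB}}$) to the region of interest — the strictly increasing branch of the IB curve where $f_{\textnormal{IB}}'(r)>0$ and $r$ is interior to $(0,r_{\textnormal{max}})$ — and exclude $\beta_u=0$ and $\beta_v^{-1}=0$ so that the maps are well defined, exactly as in Corollaries~\ref{cor:domain_conv_ib_lagrange} and~\ref{cor:bound_domain}; the boundary and limiting cases would be handled separately if one wants the sharpest possible domain of validity.
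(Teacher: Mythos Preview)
Your proposal is correct and follows essentially the same route as the paper: derive the stationarity-based mapping $\beta_v^{-1}=f_{\textnormal{IB}}'(r)\,v'(f_{\textnormal{IB}}(r))$ for the concave Lagrangian in the same way Proposition~\ref{prop:bijective_mapping_beta_ixt} does for the convex one, then link the two through $I(T;Y)=f_{\textnormal{IB}}(I(X;T))$ and solve for one multiplier in terms of the other. The paper's proof is terser and omits the domain-of-validity discussion you include, but the argument is the same.
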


\begin{proof}
If we proceed like we did in the proof of Proposition \ref{prop:bijective_mapping_beta_ixt} we can find the mapping between $I(X;T)$ and $\beta_u$ and between $I(T;Y)$ and $\beta_v$. That is,
\begin{equation}
I(X;T) = (u')^{-1} \left( \frac{df_{\textnormal{IB}}(I(X;T))}{dI(X;T)} \frac{1}{\beta_u} \right) \textnormal{ and } I(T;Y) = (v')^{-1} \left( \left( \frac{df_{\textnormal{IB}}(I(X;T))}{dI(X;T)}\right)^{-1} \frac{1}{\beta_v} \right).
\end{equation}
Then, if we recall that $I(T;Y) = f_{\textnormal{IB}}(I(X;T))$, we can directly obtain that
\begin{equation}
f_{\textnormal{IB}}\left( (u')^{-1} \left( \frac{df_{\textnormal{IB}}(I(X;T))}{dI(X;T)} \frac{1}{\beta_u} \right) \right) = (v')^{-1} \left( \left( \frac{df_{\textnormal{IB}}(I(X;T))}{dI(X;T)}\right)^{-1} \frac{1}{\beta_v} \right).
\label{eq:equation_equivalence}
\end{equation}
Then, if we solve Equation (\ref{eq:equation_equivalence}) with a fixed point $(I(X;T) = r, I(T;Y) = f_{\textnormal{IB}}(r))$ for $\beta_v$ we obtain Equation  (\ref{eq:beta_v_with_beta_u}), and if we solve it for $\beta_u$ we obtain Equation (\ref{eq:beta_u_with_beta_v}).
\end{proof}

Also, one could find a range of values for these Lagrangians to allow for the IB curve exploration and define a bijective mapping between their Lagrange multipliers and the IB curve. However, (i) as mentioned in Section \ref{subsec:why_the_ib}, $I(T;Y)$ is particularly interesting to maximize without transformations because of its meaning. Moreover, (ii) like $\beta_{\textnormal{dual}}$, the domain of useful $\beta_v$ and $\beta_{u,\textnormal{dual}}$ is not upper bounded.  These two reasons make these other Lagrangians less preferable. We only include them here for completeness. Nonetheless, we encourage the curiours reader to explore these families of Lagrangians too. For example, a possible interesting research would be investigating if some particularization of the concave IB Lagrangian suffers from an issue like value convergence that can be exploited for approximately obtaining any predictability level $I(T;Y) = i^*$ for many values of $\beta_v$.

\section{Experimental setup details and further experiments}
\label{app:experimental_setup_details}

{In order to generate empirical support for our claims we performed several experiments on different datasets with different neural network architectures and different ways of calculating the information bottleneck.}

\subsection{Information bottleneck calculations}

{The information bottleneck is calculated modifying the nonlinear-IB \citep{kolchinsky2017nonlinear}. This method of calculating the information bottleneck is a neural network that minimizes the cross-entropy while also miniminizing an upper bound estimate of the mutual information $I_{\theta} \approx I(X;T)$. The nonlinear-IB relies on a kernel-based estimate of this mutual information \citep{kolchinsky2017estimating}. We modify this calculation method by applying the function $u$ to the $I(X;T)$ estimate.}

{For the nonlinear-IB calculations we estimated the gradients of both $I_{\theta}(X;T)$ and the cross entropy with the same mini-batch. Moreover, we did not learn the covariance of the mixture of Gaussians used for the kernel density estimation of $I_{\theta}(X;T)$ and we set it to $(\exp(-1))^2$.}

{In both methods, and for all the experiments, we assumed a Gaussian stochastic encoder $T = f_{\textnormal{enc}}(X;\theta) + W$ with $p_W = \mathcal{N}(0,I_d)$, where $d$ are the number of dimensions of the representations. We trained the neural networks with the Adam optimization algorithm \citep{kingma2014adam} with a learning rate of $10^{-4}$ and a $0.6$ decay rate every 10 epochs. We used a batch size of 128 samples and all the weights were initialized according to the method described by \citet{glorot2010understanding} using a Gaussian distribution.}

{Then, we used the DBSCAN algorithm \citep{ester1996density, schubert2017dbscan} for clustering. Particularly, we used the scikit-learn \citep{pedregosa2011scikit} implementation with $\epsilon = 0.3$ and \texttt{min\_samples} = 50.}

{The reader can find the PyTorch \citep{paszke2017automatic} implementation in the following link: \href{https://github.com/burklight/convex-IB-Lagrangian-PyTorch}{\texttt{https://github.com/burklight/convex-IB-Lagrangian-PyTorch}}.}

\subsection{The experiments}

{We performed experiments in four different datasets:}

\begin{itemize}

\item {\textbf{A classification task on the MNIST dataset \citep{lecun1998gradient}} (Figures \ref{fig:example_performance}, \ref{fig:clusters_alpha_1}, \ref{fig:example_performance_power}, \ref{fig:example_performance_exponential} and \ref{fig:example_clusters_alphas_etas} and top row from Figure \ref{fig:value_convergence}). This dataset contains 60,000 training samples and 10,000 testing samples of hand-written digits. The samples are 28x28 pixels and are labeled from 0 to 9; i.e., $\mathcal{X} = \mathbb{R}^{784}$ and $\mathcal{Y} = \lbrace 0, 1, ..., 9 \rbrace$. The data is pre-processed so that the input has zero mean and unit variance. This is a deterministic setting, hence the experiment is designed to showcase how the convex IB Lagrangians allow to explore the IB curve in a setting where the normal IB Lagrangian cannot and the relationship between the performance plateaus and the clusterization phenomena. Furthermore, it intends to showcase the behavior of the power and exponential Lagrangians with different parameters of $\alpha$ and $\eta$. Finally, it wants to demonstrate how the value convergence can be employed to approximately obtain a specific compression value. In this experiment, the encoder $f_{\textnormal{enc}}$ is a three fully-connected layer encoder with 800 ReLU units on the first two layers and 2 linear units on the last layer ($T \in \mathbb{R}^2$), and the decoder $f_{\textnormal{dec}}$ is a fully-conected 800 ReLU unit layers followed by an output layer with 10 softmax units. The convex IB Lagrangian was calculated using the nonlinear-IB.}

{
In Figure \ref{fig:example_performance_power} we show how the IB curve can be explored with different values of $\alpha$ for the power IB Lagrangian and in Figure \ref{fig:example_performance_exponential} for different values of $\eta$ and the exponential IB Lagrangian.}

{
Finally, in Figure \ref{fig:example_clusters_alphas_etas} we show the clusterization for the same values of $\alpha$ and $\eta$ as in Figures \ref{fig:example_performance_power} and \ref{fig:example_performance_exponential}. In this way the connection between the performance discontinuities and the clusterization is more evident. Furthermore, we can also observe how the exponential IB Lagrangian maintains better the theoretical performance than the power IB Lagrangian (see Appendix \ref{app:guidelines_on_choosing_proper_h} for an explanation of why).}
{\item \textbf{A classification task on the Fashion-MNIST dataset \citep{xiao2017fashion}} (Figure \ref{fig:example_performance_fashion_mnist}). As MNSIT, this dataset contains 60,000 training and 10,000 testing samples of 28x28 pixel images labeled from 0 to 9 and constitutes a deterministic setting. The difference is that this dataset contains fashion products instead of hand-written digits and it represents a harder classification task \citep{xiao2017fashion}. The data is also pre-processed so that the input has zero mean and unit variance. For this experiment, the encoder $f_{\textnormal{enc}}$ is composed by a 2-layer convolutional neural network (CNN) with 32 filters on the first layer and 128 filters on the second with kernels of size 5 and stride 2. This CNN is followed by two fully-connected layers of 128 linear units ($T \in \mathbb{R}^{128}$). After the first convolution and the first fully-connected layer a ReLU activation is employed. The decoder $f_{\textnormal{dec}}$ is a fully-connected 128 ReLU unit layer followed by an output layer with 10 softmax units. The convex IB Lagrangian was calculated using the nonlinear-IB. Therefore, this experiment intends to showcase how the convex IB Lagrangian can explore the IB curve for different neural network architectures and harder datasets.}

\item {\textbf{A regression task on the California housing dataset \citep{pace1997sparse}} (Figure \ref{fig:example_performance_california_housing}). This dataset contains 20,640 samples of 8 real number input variables like the longitude and latitude of the house (i.e., $X \in \mathbb{R}^8$) and a task output real variable representing the price of the house (i.e., $Y \in \mathbb{R}$). We used the log-transformed house price as the target variable and dropped the  992 samples in which the house price was equal or greater than \$$500,000$ so that the output distribution was closer to a Gaussian as they did in \citep{kolchinsky2017nonlinear}. The input variables were processed so that they had zero mean and unit variance and we randomly splitted the  samples into a 70\% training and 30\% test dataset. As in \citep{kolchinsky2017estimating}, for regression tasks we approximate $H(Y)$ with the entropy of a Gaussian with variance $\textnormal{Var}(Y)$ and $H(Y|T)$ with the entropy of a Gaussian with variance equal to the mean-squared error (MSE). This leads to the estimate $I(T;Y) \approx 0.5 \log(\textnormal{Var}(Y)/MSE)$. The encoder $f_{\textnormal{enc}}$ is a three fully-connected layer encoder with 128 ReLU units on the first two layers and 2 linear units on the last layer ($T \in \mathbb{R}^2$), and the decoder $f_{\textnormal{dec}}$ is a fully-conected 128 ReLU unit layers followed by an output layer with 1 linear unit. The convex IB Lagrangian was calculated using the nonlinear-IB. Hence, this experiment was designed to showcase the convex IB Lagrangian can explore the IB curve in stochastic scenarios for regression tasks.}

{
\item \textbf{A classification task on the TREC-6 dataset \citep{li2002learning}} (Figure \ref{fig:example_performance_trec} and bottom row from Figure \ref{fig:value_convergence}). This dataset is the 6 classes version of the TREC \citep{voorhees2000building} dataset. It contains 5,452 training and 500 test samples of text questions. Each question is labeled within 6 different semantic categories based on what the answer is; namely: Abbreviation, description and abstract concepts, entities, human beings, locations and numeric values. This dataset does not constitute a deterministic setting, since there are examples that could belong to more than one class and there are examples which are wrongly labeled (e.g., "\textit{What is a fear of parasites?}" could belong both to the description and abstract concept category, however it is labeled into the entity category), and hence $H(Y|X) > 0$. Following \href{https://github.com/bentrevett/pytorch-sentiment-analysis/blob/master/5\%20-\%20Multi-class\%20Sentiment\%20Analysis.ipynb}{this example} the encoder $f_{\textnormal{enc}}$ is composed by a \href{https://nlp.stanford.edu/projects/glove/}{6 billion token pre-trained 100-dimensional Glove word embedding} \citep{pennington2014glove}, followed by a concatenation of 3 convolutions with kernel sizes 2, 3 and 4 respectively, and finalized with a fully-connected 128 linear unit layer ($T \in \mathbb{R}^{128}$). The decoder $f_{\textnormal{dec}}$ is a single fully-connected 6 softmax unit layer. The convex IB Lagrangian was calculated using the nonlinear-IB. Thus, this experiment intends to show an example where the classification task does not convey a deterministic scenario, that the convex IB Lagrangian can recover the IB curve in complex stochastic tasks with complex neural network architectures and that the value convergence can be employed to obtain a specific compression value even in stochastic settings where the IB curve is unkown.}
\end{itemize}



\begin{figure}
\centering
\includegraphics[width=\linewidth]{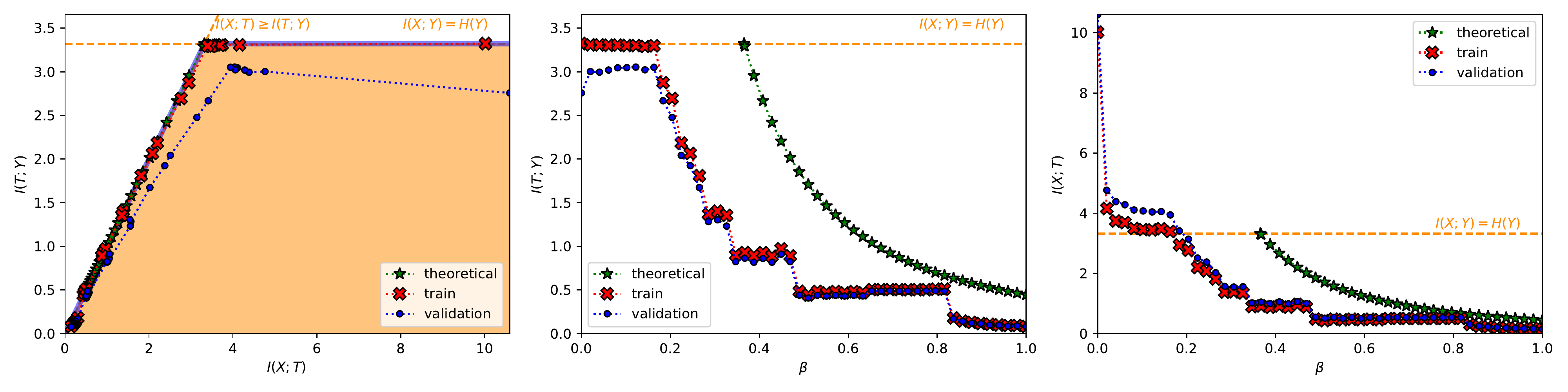}
\includegraphics[width=\linewidth]{figures/example_power_1.pdf}
\includegraphics[width=\linewidth]{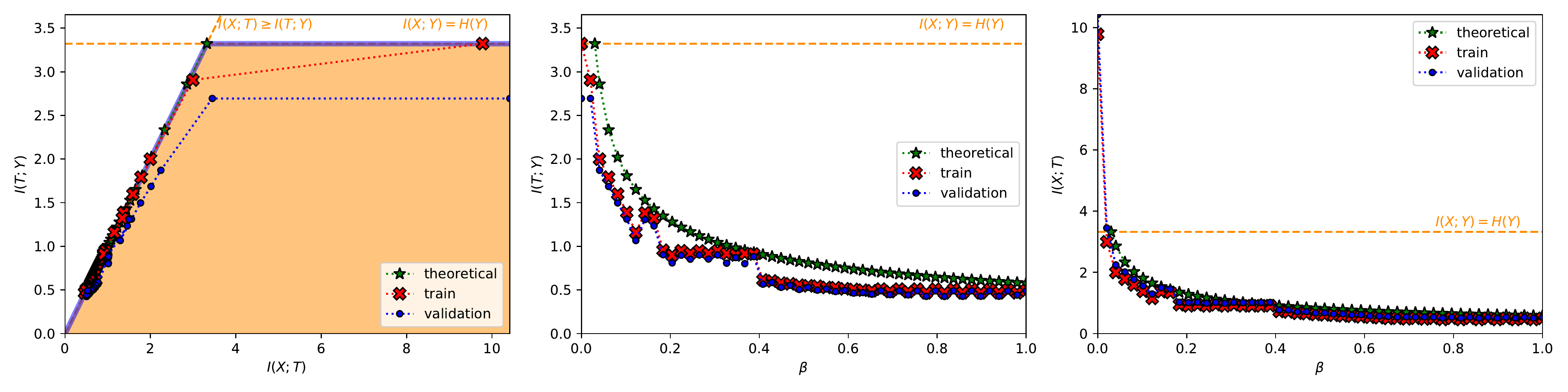}
\caption{Results for the power IB Lagrangian {in the MNIST dataset} with $\alpha = \lbrace 0.5, 1, 2 \rbrace$, from top to bottom. In each row, from left to right it is shown (i) the information plane, where the region of possible solutions of the IB problem is shadowed in light orange and the information-theoretic limits are the dashed orange line; (ii) $I(T;Y)$ as a function of $\beta_u$; and (iii) the compression $I(X;T)$ as a function of $\beta_u$. In all plots{,} the red crosses joined by a dotted line represent the values computed with the training set, the blue dots the values computed with the validation set and the green stars the theoretical values computed as dictated by Proposition \ref{prop:bijective_mapping_beta_ixt}. Moreover, in all plots{,} it is indicated $I(X;Y) = H(Y) = \log_2(10)$ in a dashed, orange line. All values are shown in bits.} 
\label{fig:example_performance_power}
\end{figure}

\begin{figure}
\centering
\includegraphics[width=\linewidth]{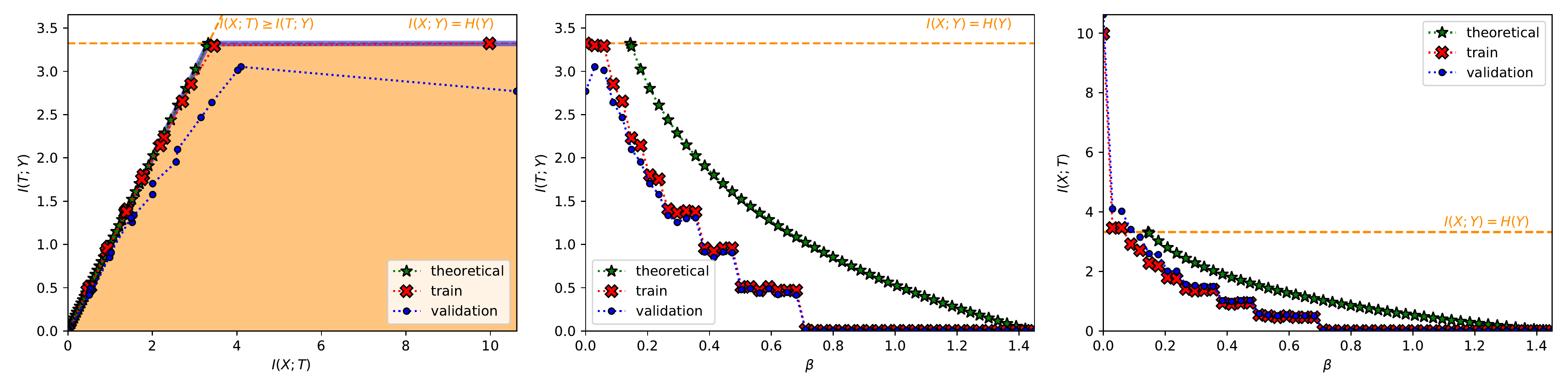}
\includegraphics[width=\linewidth]{figures/example_exp_1.pdf}
\includegraphics[width=\linewidth]{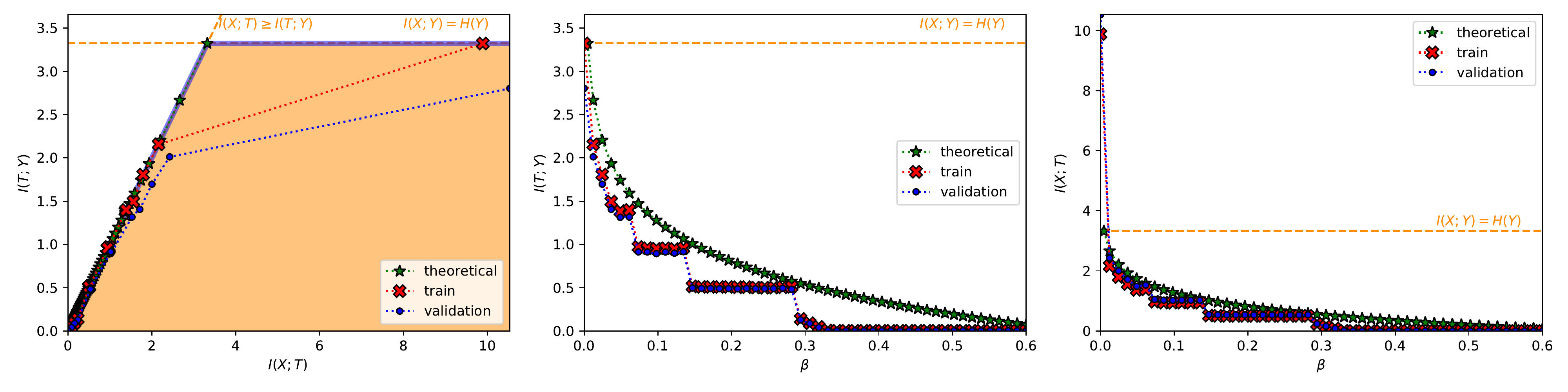}
\caption{Results for the exponential IB Lagrangian {in the MNIST dataset} with $\eta = \lbrace \log(2), 1, 1.5 \rbrace$, from top to bottom. In each row, from left to right it is shown (i) the information plane, where the region of possible solutions of the IB problem is shadowed in light orange and the information-theoretic limits are the dashed orange line; (ii) $I(T;Y)$ as a function of $\beta_u$; and (iii) the compression $I(X;T)$ as a function of $\beta_u$. In all plots{,} the red crosses joined by a dotted line represent the values computed with the training set, the blue dots the values computed with the validation set and the gren stars the theoretical values computed as dictated by Proposition \ref{prop:bijective_mapping_beta_ixt}. Moreover, in all plots{,} it is indicated $I(X;Y) = H(Y) = \log_2(10)$ in a dashed, orange line. All values are shown in bits.} 
\label{fig:example_performance_exponential}
\end{figure}

\begin{figure}
\centering
\includegraphics[width=0.32\linewidth]{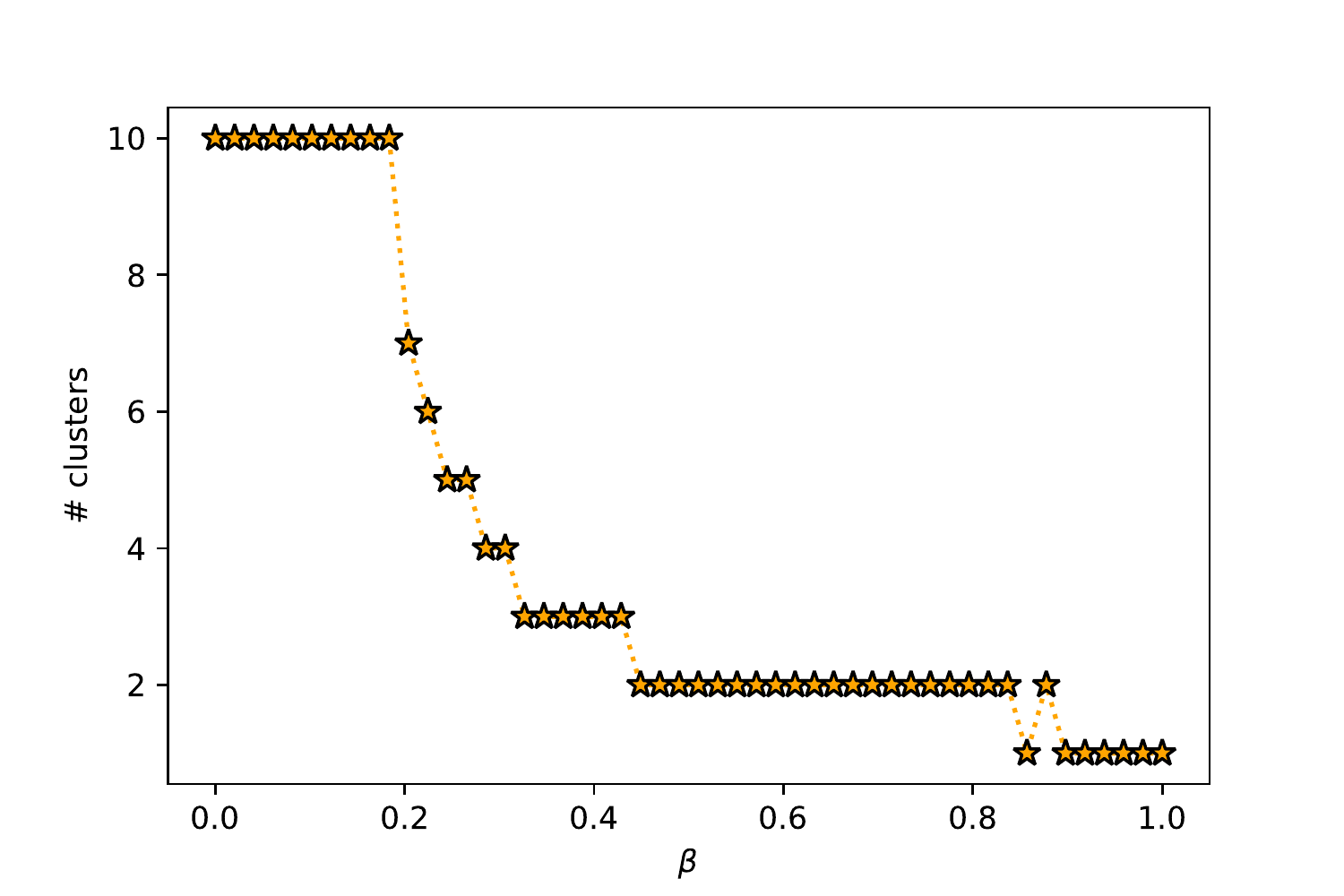} 
\includegraphics[width=0.32\linewidth]{figures/example_number_clusters_power_alpha_1.pdf}
\includegraphics[width=0.32\linewidth]{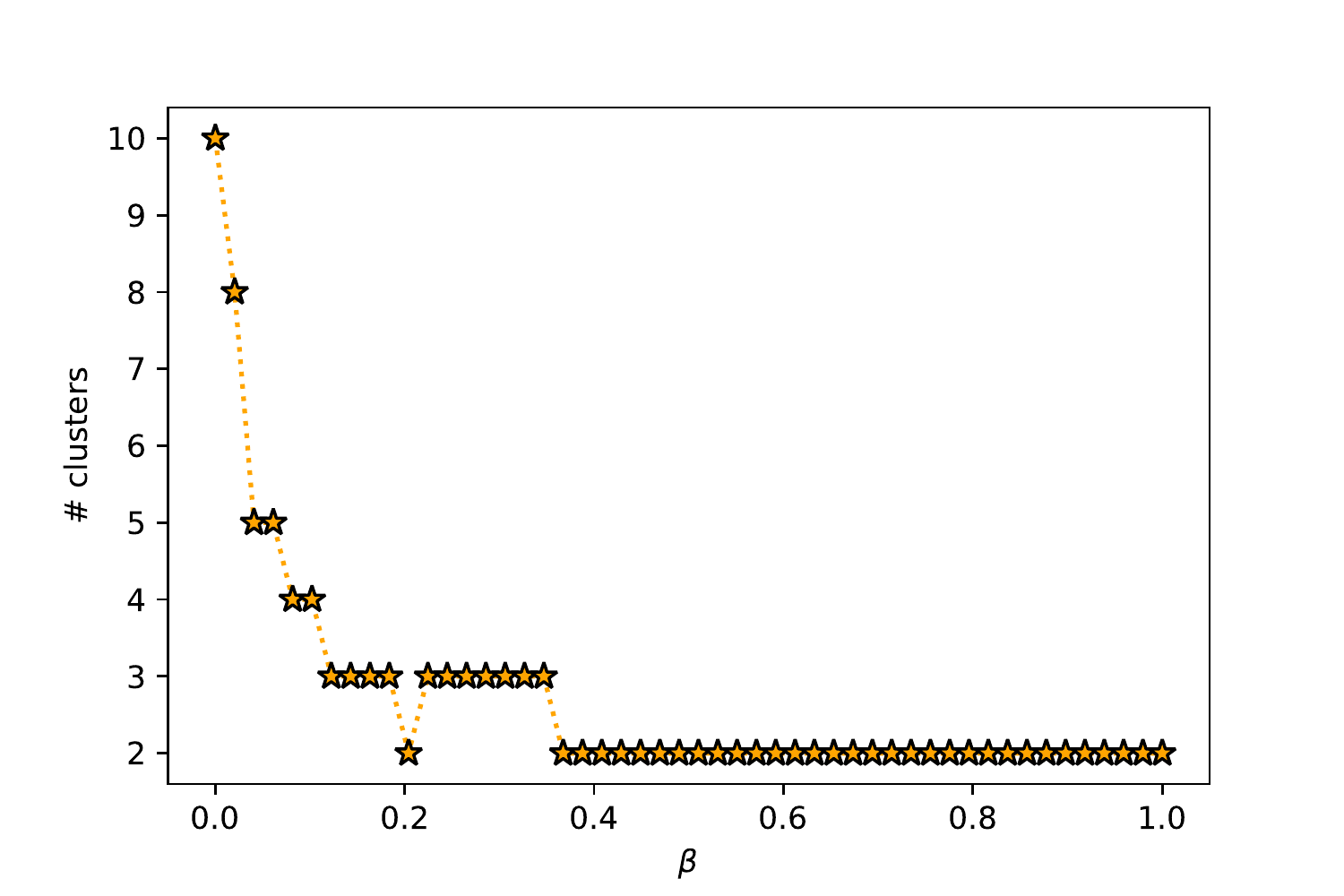}  
\includegraphics[width=0.32\linewidth]{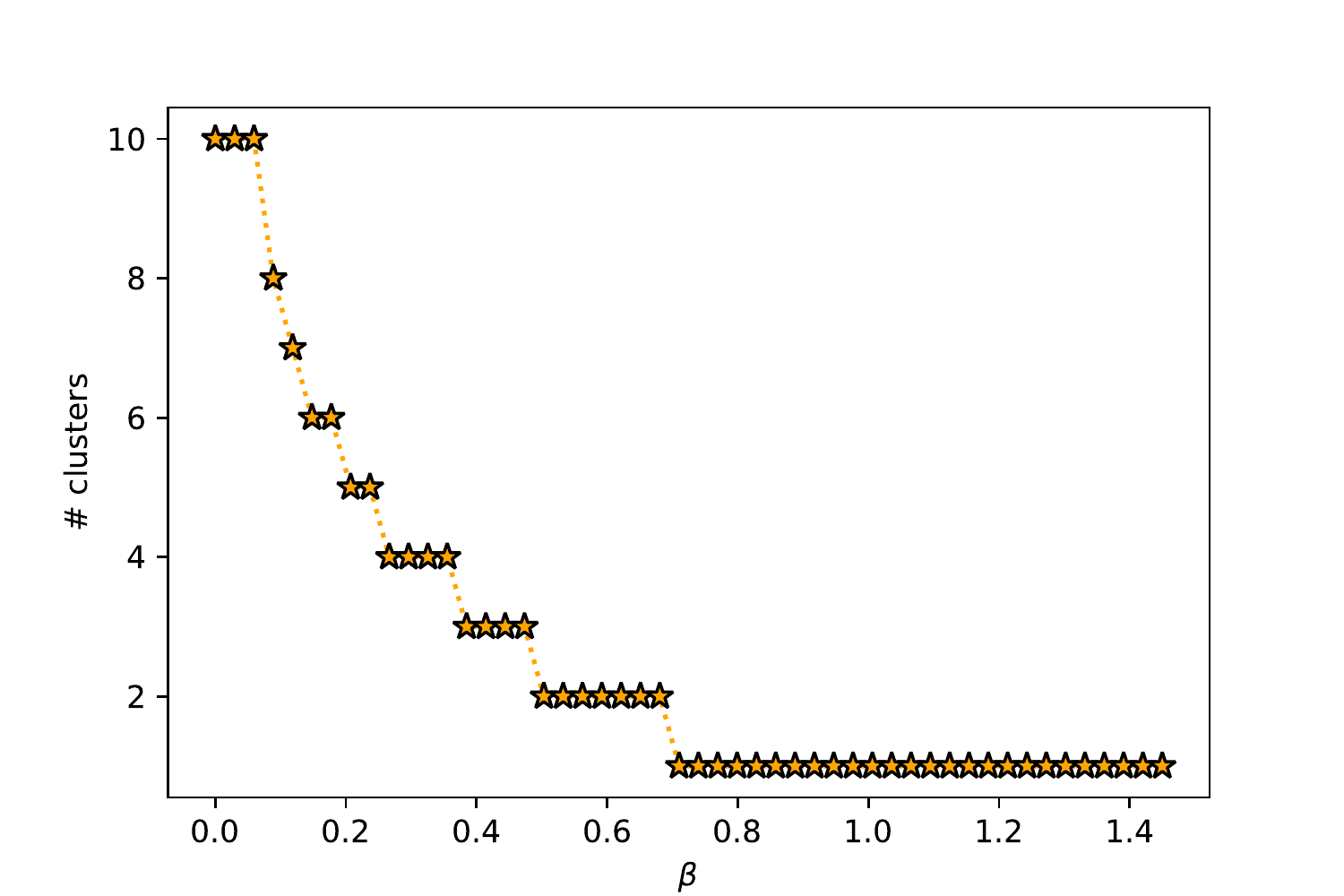}
\includegraphics[width=0.32\linewidth]{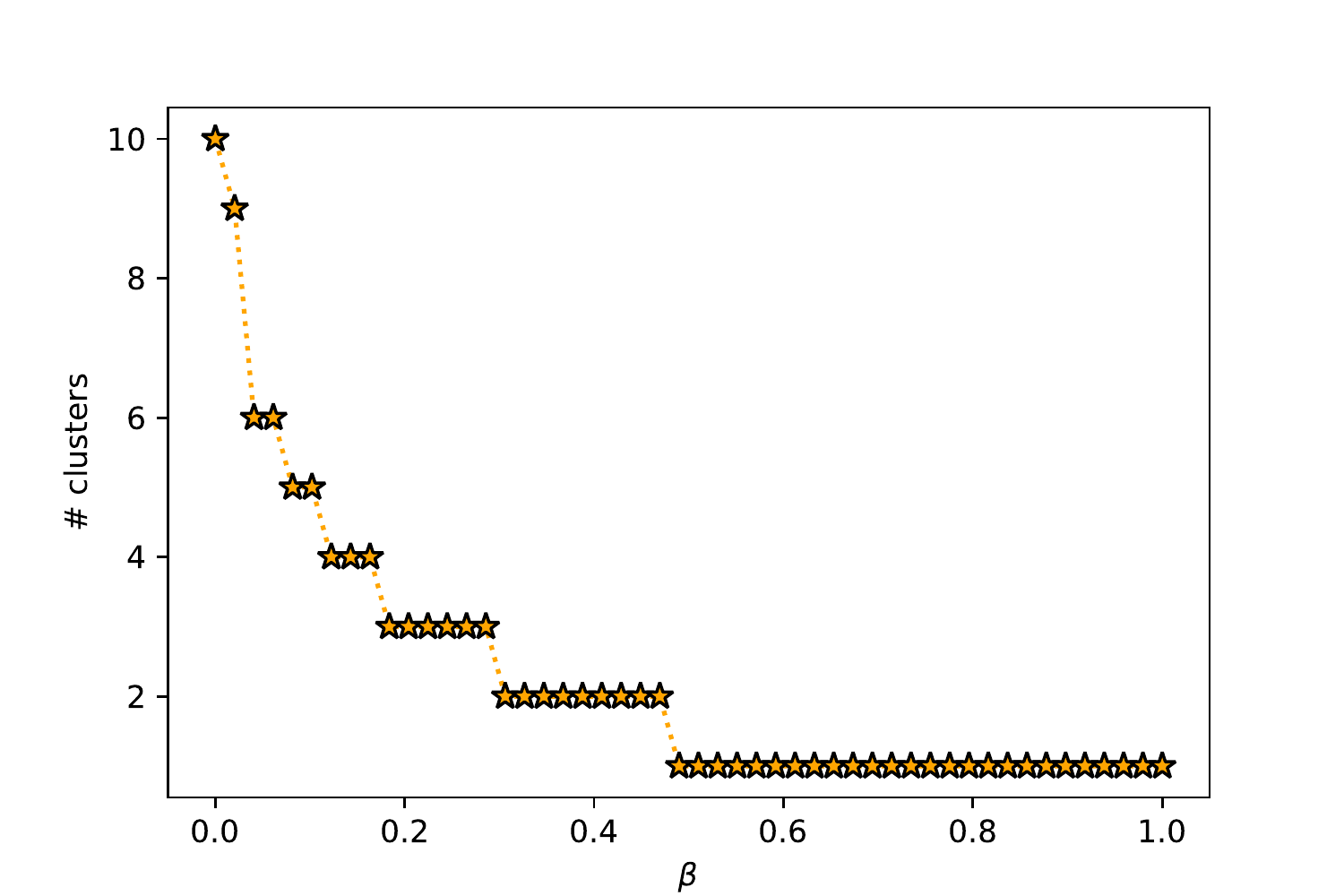}
\includegraphics[width=0.32\linewidth]{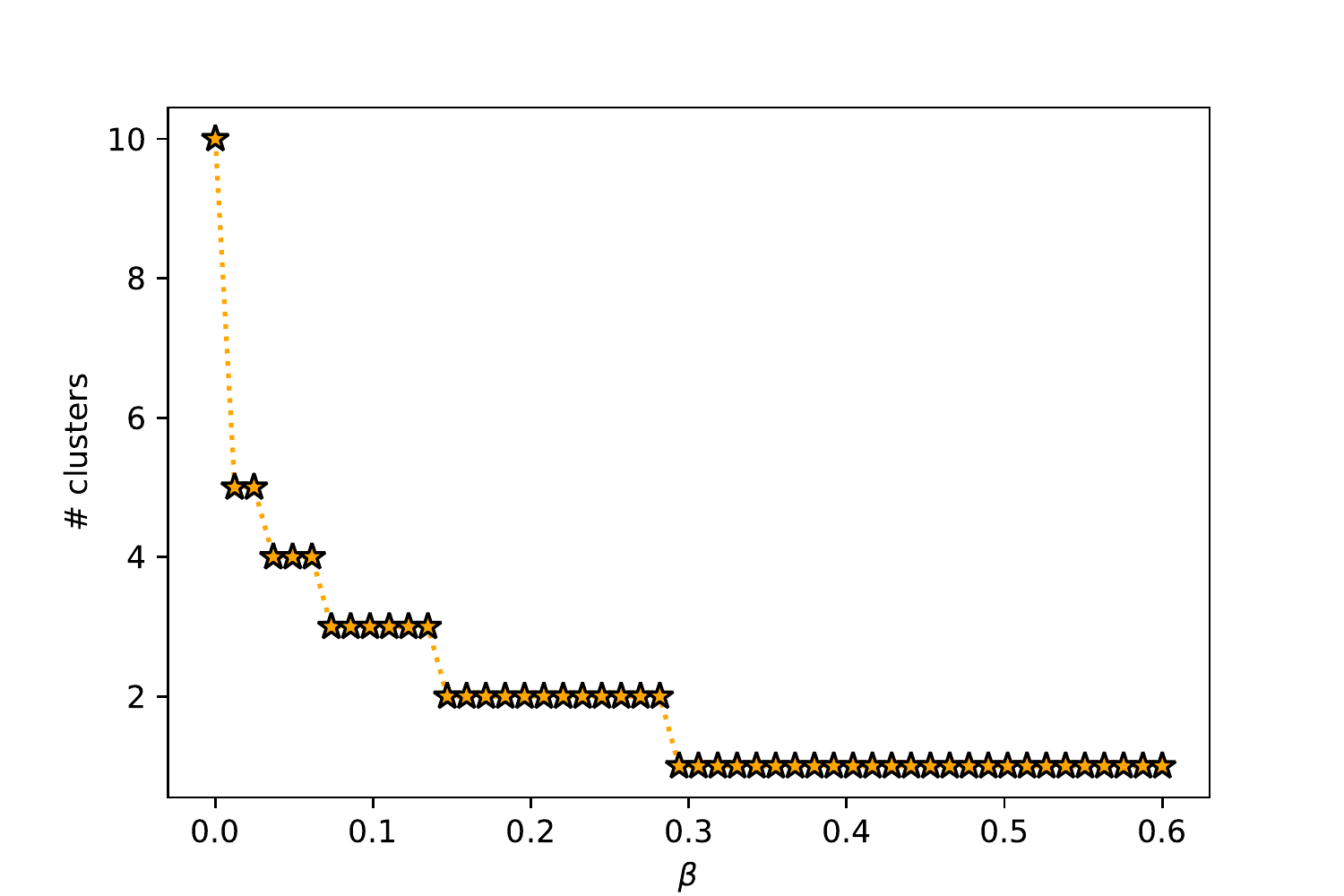}

\caption{Depiction of the clusterization behavior of the bottleneck variable {in the MNIST dataset}. In the first row, from left to right, the power IB Lagrangian with different values of $\alpha = \lbrace 0.5, 1, 2 \rbrace$. In the second row, from left to right, the exponential IB Lagrangian with different values of $\eta = \lbrace \log(2), 1, 1.5 \rbrace$.} 
\label{fig:example_clusters_alphas_etas}
\end{figure}

\begin{figure}
\centering
\includegraphics[width=\linewidth]{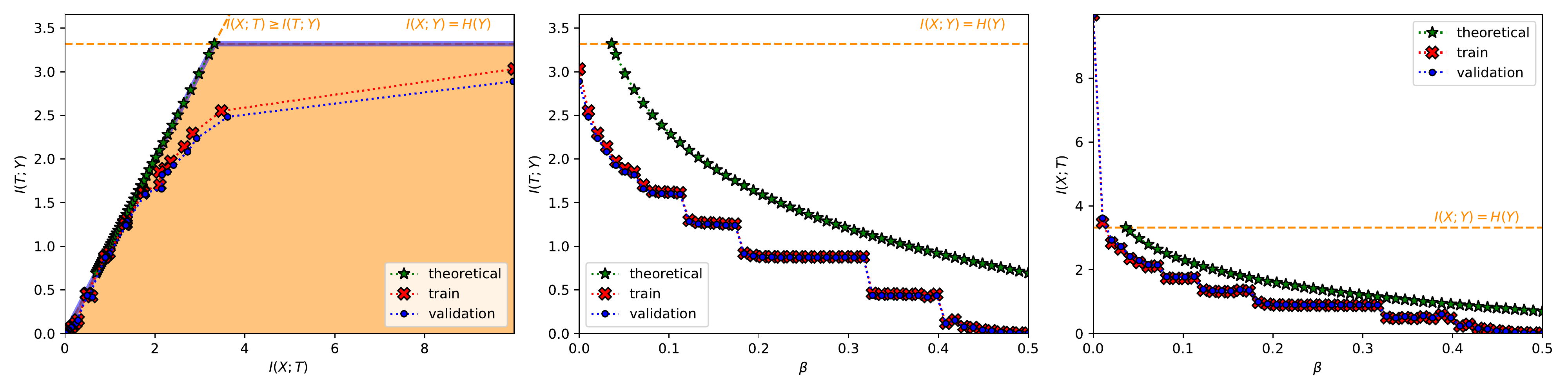}
\caption{{Results for the exponential IB Lagrangian in the Fashion MNIST dataset with $\eta = 1$. From left to right it is shown (i) the information plane, where the region of possible solutions of the IB problem is shadowed in light orange and the information-theoretic limits are the dashed orange line; (ii) $I(T;Y)$ as a function of $\beta_u$; and (iii) the compression $I(X;T)$ as a function of $\beta_u$. In all plots, the red crosses joined by a dotted line represent the values computed with the training set and the blue dots the values computed with the validation set. Moreover, in all plots, it is indicated $I(X;Y) = H(Y) = \log_2(10)$. All values are shown in bits.}}
\label{fig:example_performance_fashion_mnist}
\end{figure}

\begin{figure}
\centering
\includegraphics[width=\linewidth]{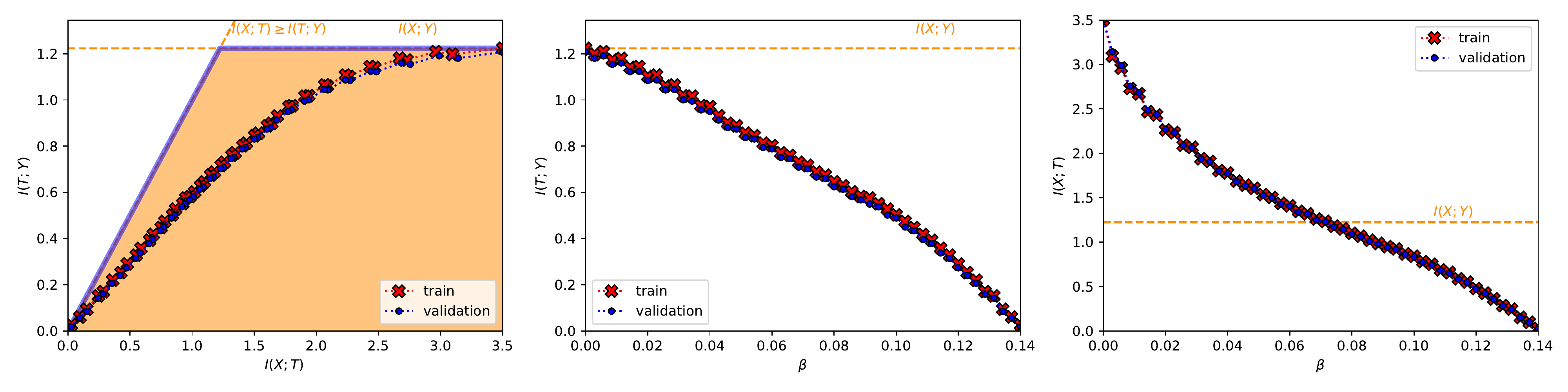}
\includegraphics[width=\linewidth]{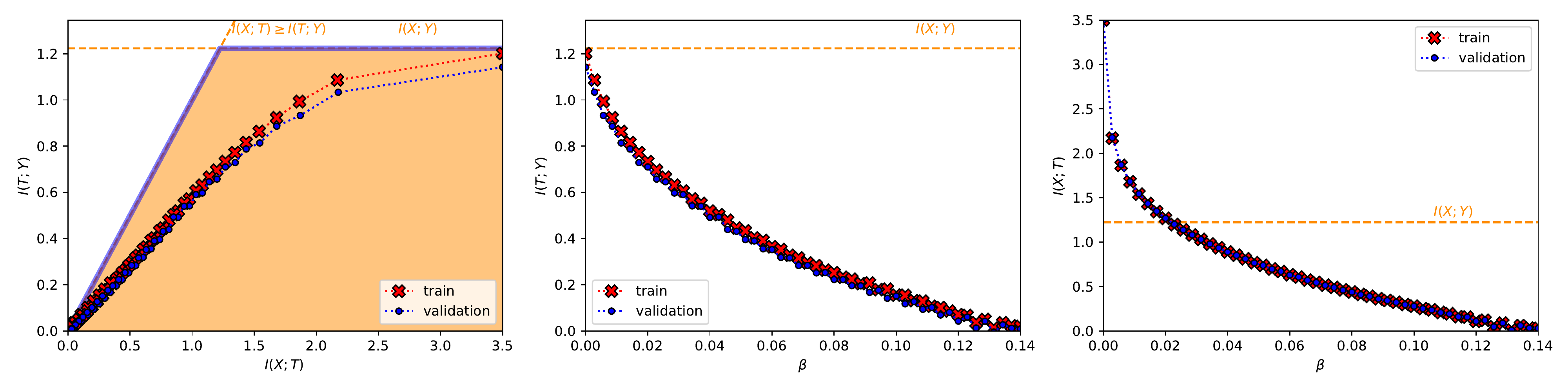}
\caption{{The top row shows the results for the normal IB Lagrangian, and the bottom row for the exponential IB Lagrangian with $\eta = 1$, both in the California housing dataset. In each row, from left to right it is shown (i) the information plane, where the region of possible solutions of the IB problem is shadowed in light orange and the information-theoretic limits are the dashed orange line; (ii) $I(T;Y)$ as a function of $\beta_u$; and (iii) the compression $I(X;T)$ as a function of $\beta_u$. In all plots, the red crosses joined by a dotted line represent the values computed with the training set and the blue dots the values computed with the validation set. Moreover, in all plots, it is indicated $I(X;Y)$ as the empirical value obtained maximizing $I(T;Y)$ without compression limitations as in \citep{kolchinsky2017nonlinear}. All values are shown in bits.}}
\label{fig:example_performance_california_housing}
\end{figure}

\begin{figure}
\centering
\includegraphics[width=\linewidth]{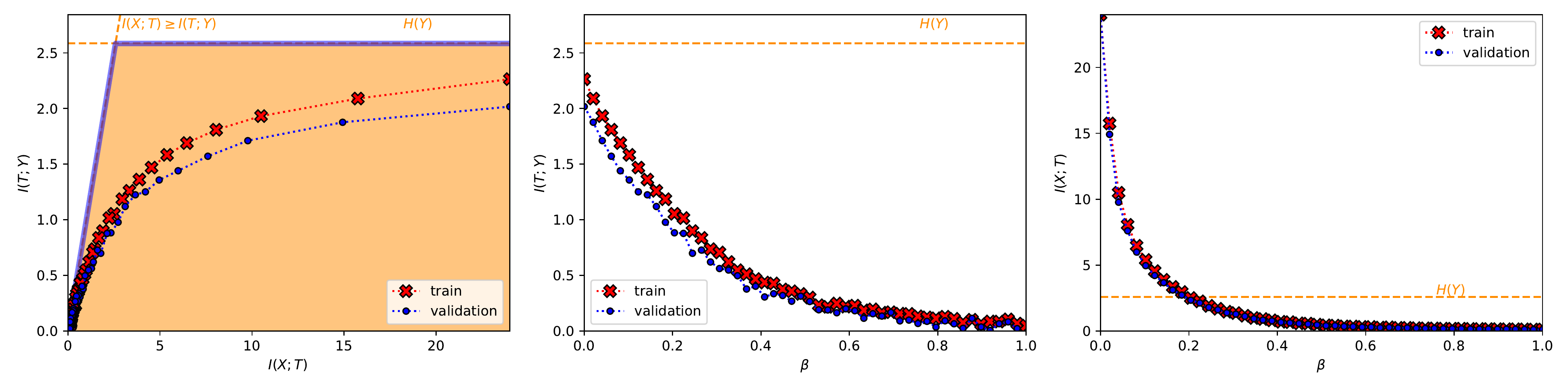}
\includegraphics[width=\linewidth]{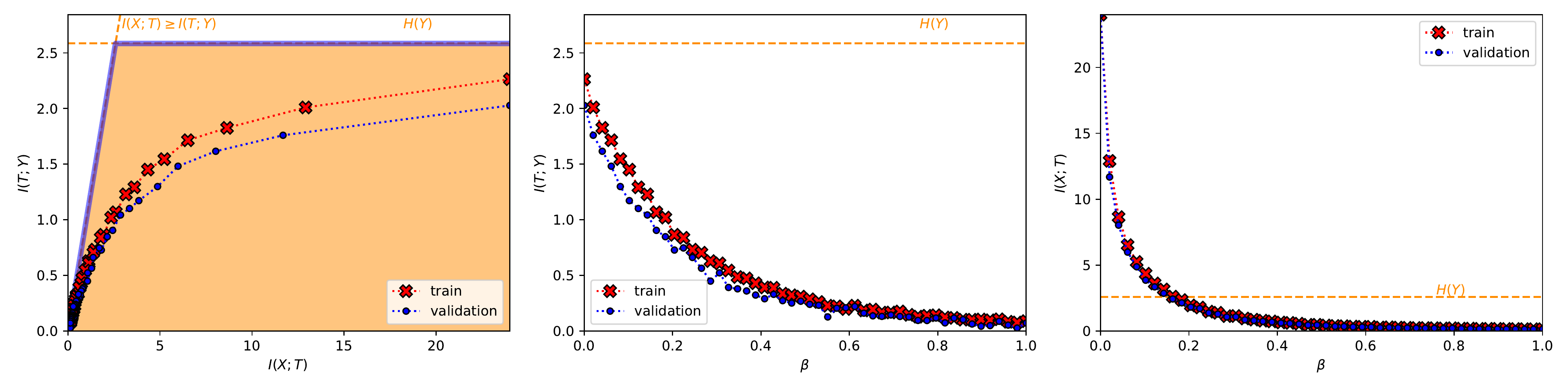}
\caption{{The top row shows the results for the normal IB Lagrangian, and the bottom row for the power IB Lagrangian with $\alpha = 0.1$, both in the TREC-6 dataset. In each row, from left to right it is shown (i) the information plane, where the region of possible solutions of the IB problem is shadowed in light orange and the information-theoretic limits are the dashed orange line; (ii) $I(T;Y)$ as a function of $\beta_u$; and (iii) the compression $I(X;T)$ as a function of $\beta_u$. In all plots, the red crosses joined by a dotted line represent the values computed with the training set and the blue dots the values computed with the validation set. Moreover, in all plots, it is indicated $H(Y) = \log_2(6)$. All values are shown in bits.}}
\label{fig:example_performance_trec}
\end{figure}

\section{Guidelines for selecting a proper function in the Convex IB Lagrangian}
\label{app:guidelines_on_choosing_proper_h}

When chossing the right $u$ function, it is important to find the right balance between avoiding value convergence and aiming for strong convexity. Practically, this balance is found by looking at how much faster $u$ grows w.r.t. the identity function. 

{When the aim is not to draw the IB curve but to find a specific level of performance, we can exploit the value convergence phenomenon in order to design a stable performance targeted $u$ function.}

\subsection{Avoiding value convergence}

In order to explain this issue we are going to use the example of classification on MNIST \citep{lecun1998gradient}, where $I(X;Y) = H(Y) = \log_2(10)$, and again the power and exponential IB Lagrangians.

If we use Proposition \ref{prop:bijective_mapping_beta_ixt} on both Lagrangians we obtain the bijective mapping between their Lagrange multipliers and a certain level of compression in the classification setting:

\begin{enumerate}
\item Power IB Lagrangian: $\beta_{\textnormal{pow}} = \left( (1+\alpha)I(X;T)^\alpha \right) ^{-1}$ and $I(X;T) = \left( (1+\alpha)\beta_{\textnormal{pow}} \right) ^{-\frac{1}{\alpha}}$.
\item Exponential IB Lagrangian: $\beta_{\textnormal{exp}} = \left( \eta \exp(\eta I(X;T)) \right)^{-1}$ and $I(X;T) = - \log(\eta \beta_{\textnormal{exp}})/\eta$.
\end{enumerate}

Hence, we can simply plot the curves of $I(X;T)$ vs. $\beta_u$ for different hyperparameters $\alpha$ and $\eta$ (see Figure \ref{fig:effect_beta_hyperparameters}). In this way we can observe how increasing the growth of the function (e.g., increasing $\alpha$ or $\eta$ in this case) too much provokes that many different values of $\beta_u$ converge to very similar values of $I(X;T)$. This is an issue both for drawing the curve (for obvious reasons) and for aiming for a specific performance level. Due to the nature of the estimation of the IB Lagrangian, the theoretical and practical value of $\beta_u$ that yield a specific $I(X;T)$ may vary slightly (see Figure \ref{fig:example_performance}). Then if we select a function with too high growth, a small change in $\beta_u$ can result in a big change in the performance obtained.

\begin{figure}
\centering
\includegraphics[width=\linewidth]{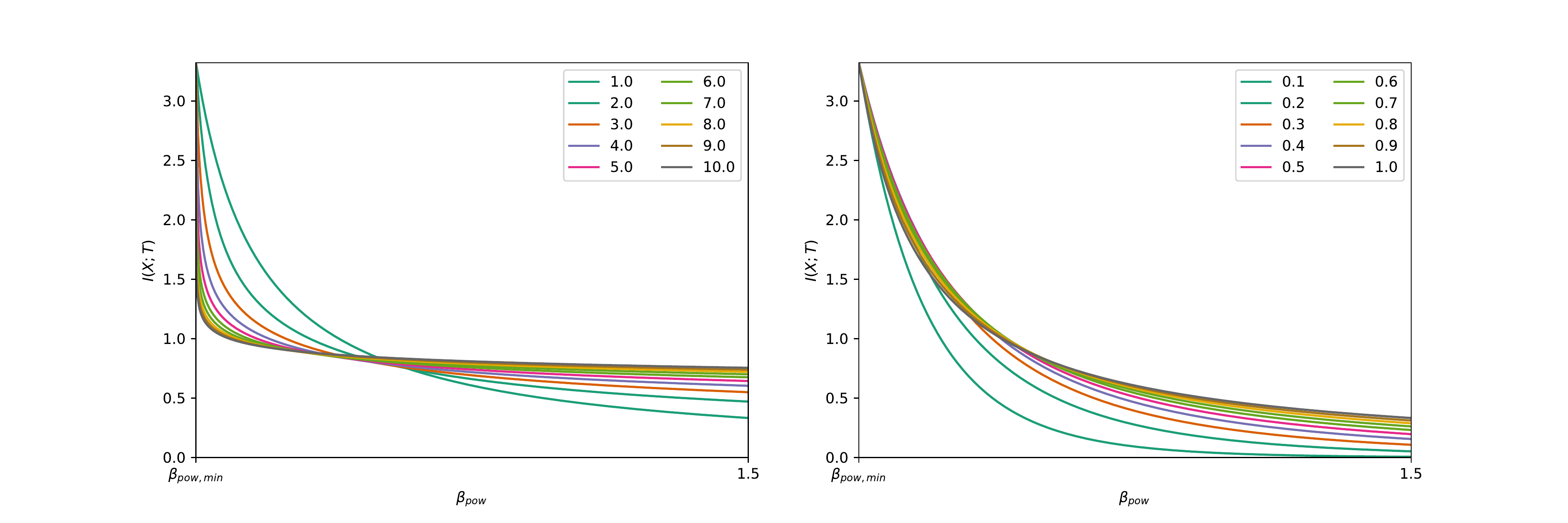} 
\includegraphics[width=\linewidth]{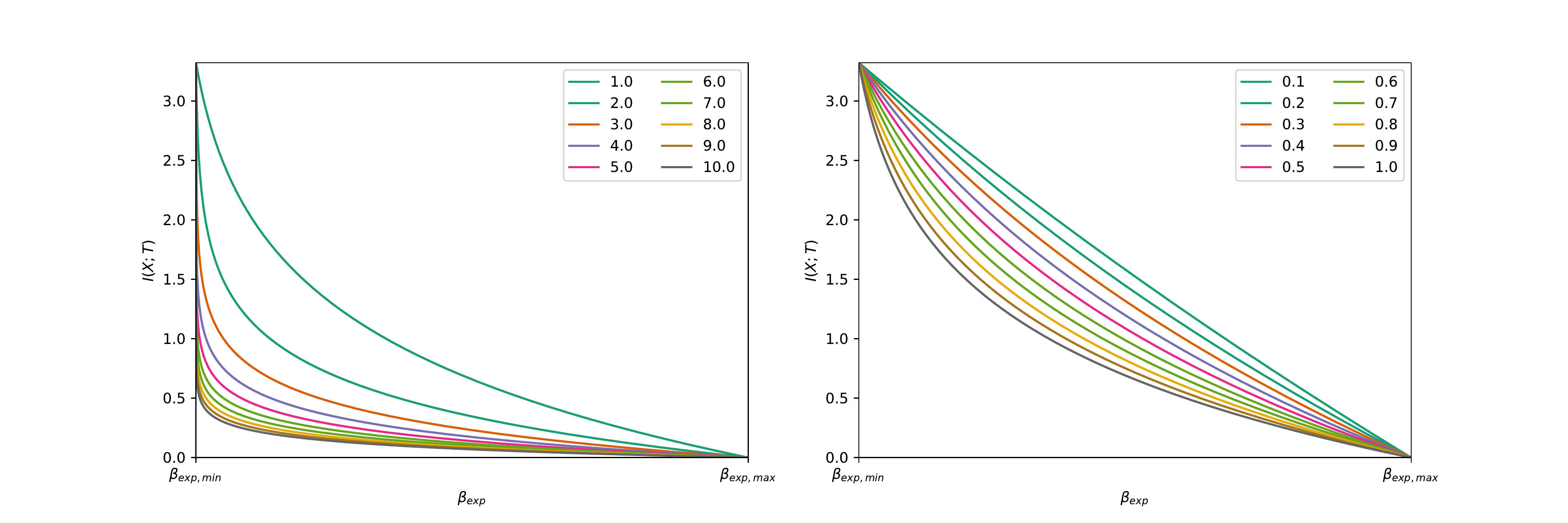}

\caption{Theoretical bijection between $I(X;T)$ and different $\alpha$ from $\beta_{u,\textnormal{min}}$ to 1.5 in the power IB Lagrangian (top), and different $\eta$ in the domain $B_u$ in the exponential IB Lagrangian (bottom).} 
\label{fig:effect_beta_hyperparameters}
\end{figure}

\subsection{Aiming for strong convexity}

\begin{Definition}[$\mu$-Strong convexity] If a function $f(r)$ is twice continuous differentiable and its domain is confined in the real line, then it is $\mu$-strong convex if $f''(r) \geq \mu \geq 0$  $\forall r$.
\end{Definition}

Experimentally, we observed when the growth of our function $u(r)$ is small in the domain of interest $r > 0$ the convex IB Lagrangian does not perform well {(see first row of Figures \ref{fig:example_performance_power} and \ref{fig:example_performance_exponential})}. Later we realized that this was closely related with the strength of the convexity of our function.

In Theorem \ref{th:ib_convex_lagrangians} we imposed the function $u$ to be strictly convex to enforce having a unique $\beta_u$ for each value of $I(X;T)$. Hence, since in practice we are not exactly computing the Lagrangian but an estimation of it (e.g., with the nonlinear IB \citep{kolchinsky2017nonlinear}) we require strong convexity in order to be able to explore the IB curve.

We now look at the second derivative of the power and exponential function: $u''(r) = (1 + \alpha) \alpha r ^{\alpha -1}$ and $u''(r) = \eta^2 \exp(\eta r)$ respectivelly. Here we see how both functions are inherently 0-strong convex for $r > 0$ and $\alpha, \eta > 0$. However, values of $\alpha < 1$ and $\eta < 1$ could lead to low $\mu$-strong convexity in certain domains of $r$. Particularly, the case of $\alpha < 1$ is dangerous because the function approaches 0-strong convexity as $r$ increases, so the power IB Lagrangian performs poorly when low $\alpha$ are used to find high performances.

\subsection{Exploiting value convergence}

{When the aim is not to draw or explore the IB curve, but to obtain a specific level of performance, the power or exponential IB Lagrangians aforementioned might not be the best choice due to the problems with value convergence or non-strong convexity. However, we can exploit the former in order to design a performance targeted $u$ function.}

{For instance, if we look at Figure \ref{fig:effect_beta_hyperparameters} we can see how a modification of the exponential IB Lagrangian could result in such a function. More precisely, a shifted exponential $u(r) = \exp(\eta (r - r^*))$, with $\eta > 0$ sufficiently large, converges to the compression level $r^*$. We can see this more clearly if we consider the shifted exponential IB Lagrangian $\mathcal{L}_{\textnormal{IB,sh-exp}}(T;\beta_{\textnormal{sh-exp}},\eta,r^*) = I(T;Y) - \beta_{\textnormal{sh-exp}} \exp(\eta (I(X;T) - r^*))$, since then the application of Proposition \ref{prop:bijective_mapping_beta_ixt} results on $I(X;T) = - \log(\eta \beta_{\textnormal{sh-exp}} / f'_{\textnormal{IB}}(I(X;T)))/\eta + r^*$, where $f'_{\textnormal{IB}}(I(X;T))$ is the derivative of $f_{\textnormal{IB}}$ evaluated at $I(X;T)$. We know $f_{\textnormal{IB}}' = 1$ in deterministic scenarios (Theorem \ref{th:ib_curve_piecewise_linear}) and that $f'_{\textnormal{IB}} < 1$ otherwise (see, e.g., \citep{wu2019learnability}). Then, for large enough $\eta$, $I(X;T) \approx r^*$ regardless of the value of $f'_{\textnormal{IB}}$.}

{For instance, if we consider a deterministic scenario like the MNIST dataset \citep{lecun1998gradient} with $I(X;Y) = H(Y) = \log_2(10)$, for $\eta = 200$ and $r^* = 2$ the range of the Lagrange multipliers that allow the exploration of the IB curve, according to Corollary \ref{cor:domain_conv_ib_lagrange}, is $\beta_\textnormal{sh-exp} \in [7.54 \cdot 10^{-178},2.61 \cdot 10^{171}]$. Furthermore, $I(X;T)$ is close to 2 for many values of $\beta_{\textnormal{sh-exp}}$. For instance, $I(X;T) = 1.974$ for $\beta_{\textnormal{sh-exp}}= 1$ and $I(X;T) = 1.963$ for $\beta_{\textnormal{sh-exp}} = 8$. This ensures a stability in the performance level obtained so that small changes in the choice of $\beta_{\textnormal{sh-exp}}$ do not result in significant changes on the performance (e.g., see top row from Figure \ref{fig:example_performance_value_convergence}).}

{If we now consider a stochastic scenario like the TREC-6 dataset \citep{li2002learning} with $H(Y) = \log_2(6)$, for $\eta = 200$ and $r^* = 16$ the range of the Lagrange multipliers that allow the IB curve, according to Corollary \ref{cor:bound_domain}, is $\beta_\textnormal{sh-exp} \in [0, 2.76 (\inf_{\Omega_x \subset \mathcal{X}} \lbrace \beta_0(\Omega_x)  \rbrace)^{-1} \cdot 10^{1287}]$, where $\beta_0$ and $\Omega_x$ are defined as in \citep{wu2019learnability}. Then, unless $(\inf_{\Omega_x \subset \mathcal{X}} \lbrace \beta_0(\Omega_x)  \rbrace)^{-1}$ is of the order of $10^{-1287}$, the range of possible betas is wide. Moreover, $I(X;T)$ is close to $16$ for many values of $\beta_{\textnormal{sh-exp}}$. For example, $I(X;T) = 15.939$ if $f_{\textnormal{IB}}' = 0.001$ at that point and $I(X;T) = 15.973$ if $f_{\textnormal{IB}}' = 0.9$ for $\beta_{\textnormal{sh-exp}} = 1$; and $I(X;T) = 15.929$ if $f_{\textnormal{IB}}' = 0.001$ at that point and $I(X;T) = 15.963$ if $f_{\textnormal{IB}}' = 0.9$ for $\beta_{\textnormal{sh-exp}} = 8$. Hence, as in the deterministic scenario, the performance level obtained is stable with changes in the choice of $\beta_{\textnormal{sh-exp}}$ (e.g., see bottom row from Figure \ref{fig:example_performance_value_convergence}).}

\end{document}